%% file: WSL0507.tex
\documentclass{article}
\PassOptionsToPackage{numbers, compress}{natbib}
\usepackage{arxiv}
\usepackage{natbib}

\input{preamble.tex}

\usepackage[utf8]{inputenc} %
\usepackage[T1]{fontenc}    %
\usepackage{hyperref}       %
\usepackage{url}            %
\usepackage{booktabs}
\usepackage{multirow}       %
\usepackage{amsfonts}       %
\usepackage{nicefrac}       %
\usepackage{microtype}      %
\usepackage{lipsum}
\usepackage{fancyhdr}       %
\usepackage{graphicx}       %
\graphicspath{{media/}}     %
\usepackage{mathtools}

\usepackage{mathtools}

\mathtoolsset{showonlyrefs=true}
\usepackage{xcolor}

\title{Unified Approach for Weakly Supervised Multicalibration}
\author{%
  Futoshi Futami\\
    The University of Osaka / RIKEN AIP / The University of Tokyo \\
  \texttt{futami.futoshi.es@osaka-u.ac.jp}
  \AND
    Takashi Ishida\\
  The University of Tokyo / RIKEN AIP  \\
  \texttt{ishi@k.u-tokyo.ac.jp} \\
  }
\date{\today}

\begin{document}
\maketitle

\begin{abstract}
Multicalibration requires predicted scores to agree with label probabilities across
rich families of subgroups and score-dependent tests, but existing methods require
clean input--label pairs for evaluation and post-processing. This assumption fails
in weakly supervised learning (WSL) regimes---including positive-unlabeled,
unlabeled-unlabeled, and positive-confidence learning---where clean labels are
costly or unavailable even though reliable uncertainty estimates may be crucial.
We address this gap by developing estimators of multicalibration error and post-hoc
correction methods for WSL settings in which clean input--label pairs are unavailable.
We propose a unified framework for estimating and correcting multicalibration
under weak supervision by combining contamination-matrix risk rewrites with
witness-based calibration constraints, yielding corrected multicalibration moments
with finite-sample guarantees. We further propose weak-label multicalibration boost (WLMC), a generic post-hoc recalibration algorithm under weak supervision.
Finally, we conduct experiments across multiple weak-supervision settings to
evaluate multicalibration behavior and offer empirical insight into uncertainty estimation under weak supervision.
\end{abstract}

\section{Introduction}
\label{sec_intro}
Reliable uncertainty estimates are now an indispensable requirement in high-stakes machine learning applications. In domains such as medicine, science, and decision support, a predictor should not only be accurate on average but also output scores that can be trusted as probabilities. Calibration formalizes this requirement by asking predicted probabilities to agree with outcome frequencies \citep{Dawid1982,foster1998asymptotic}. Multicalibration strengthens the requirement by enforcing calibration simultaneously across a rich family of subpopulations and score tests \citep{johnson18a,gopalan2022low,pmlr-v202-globus-harris23a}. This viewpoint has made calibration useful far beyond aggregate reliability: it is used to reason about subgroup fairness, distributional reliability, and downstream decisions based on predicted probabilities \citep{dwork2021outcome}.

Despite this progress, existing multicalibration methods still face a basic limitation. To the best of our knowledge, they assume access to complete input-label pairs both when computing calibration metrics and when running post-processing. In some applications, however, clean labels are expensive to obtain or unavailable at scale, while only weaker supervision signals are observed. Such situations have motivated weakly supervised learning (WSL), especially for classification, where predictors can be learned even without complete label information. For example, in positive-unlabeled (PU) learning~\citep{duplessis2014analysis,plessis2015convex,kiryo2017nnpu}, only positive and unlabeled data may be available; other representative examples include unlabeled-unlabeled (UU) learning \citep{lu2018on,lu21c} and positive-confidence (Pconf) learning \citep{ishida2018}. 

In these settings, weak supervision creates a basic asymmetry: it can provide enough information to train an accurate classifier, while withholding the clean label-frequency information needed to evaluate or correct calibration directly. This is problematic because WSL is often used where clean labels are costly, delayed, or unavailable---including medicine and decision support---yet learned scores set thresholds, prioritize cases, and guide downstream actions. Thus, the settings in which reliable uncertainty estimates are most needed are often the same settings in which multicalibration and post-processing must be performed using only weakly supervised data. Since calibration is defined through label frequencies, and multicalibration through subgroup- and score-dependent residual moments, standard metrics and post-processing methods cannot be applied without modification.

Only limited work addresses this obstacle. Most closely, \citet{kiryo2026estimating} extend binned expected calibration error (ECE) to PU learning. 
Their estimator is derived directly for the PU binned-ECE setting by applying Bayes' rule within each score bin. This result is important, but it addresses only PU binned ECE: it does not cover multicalibration over rich witness classes, does not give a post-processing method, and does not address other weak-observation models. More broadly, weakly supervised learning contains many observation models beyond PU learning (see Appendix~\ref{app:additional-settings}), and deriving a separate metric and correction algorithm for each model would fragment the theory, complicate comparison and model selection, and obscure the relationships among settings.

This motivates a unified viewpoint. A natural starting point is the unified risk-analysis view of weakly supervised learning \citep{ChiangSugiyama2025}. There, weak supervision is described by a contamination matrix, or more generally an operator, that maps inaccessible clean distributions to observable weak distributions; target clean risks are then recovered by a decontamination rewrite. A key advantage is that, once the observable marginal information and the contamination mechanism are specified, estimators whose expectations equal the target losses can be derived in a systematic way.

In this paper, we use the WSL risk-rewrite principle in a different statistical role. 
Decontamination identities in WSL are most commonly used to construct empirical risks for training classifiers from weak observations. 
Here, the predictor is fixed, and the target is not a training objective but a calibration residual that measures whether predicted probabilities agree with labels after restricting attention to a subgroup and a score-dependent test. 
This shift matters for multicalibration: the goal is not to optimize a single corrected risk, but to audit a family of residuals and control the worst violation over the witness class. 
For each fixed witness, the residual can be made observable by the same decontamination machinery as an ordinary loss; the remaining statistical question is whether the corrected empirical residuals can be controlled uniformly and used to drive post-hoc correction.

Our contributions are as follows. 
First, we introduce a witness-level decontamination reduction for multicalibration (Theorem~\ref{thm:generic-wsl-mc}), which treats calibration constraints as signed residual auditing functionals rather than as training losses. 
This yields corrected PU, UU, and Pconf estimators (Corollaries~\ref{prop:pu}, \ref{prop:uu}, and~\ref{prop:pconf}) and finite-sample uniform guarantees for weak multicalibration estimates. 
Second, we use these moments in weak-label multicalibration boost (WLMC), a weak-label post-processing algorithm, and prove its finite-sample guarantee in Theorem~\ref{thm:pu-wlmc-boost}. Finally, to the best of our knowledge, we give the first real-data study of weak-label multicalibration, using weak labels for both estimation and post-hoc correction. The experiments yield three takeaways: weak multicalibration estimates can track oracle estimates; weakly trained predictors are often less calibrated; and weak labels can improve post-hoc calibration even for clean-label predictors, suggesting a lower-cost route for improving multicalibration when clean calibration labels are expensive.

The paper is organized as follows. Section~\ref{sec:prelim} introduces multicalibration and weak supervision, Section~\ref{sec:wsl-mc} introduces our corrected metrics, Section~\ref{sec:algo} proposes the post-processing method, Section~\ref{sec:related} discusses related work, and Section~\ref{sec:experiments} reports the empirical study.

\section{Preliminaries}
\label{sec:prelim}
\subsection{Multicalibration for binary classification}
\label{sec:prelim-mc}
Let $\calx$ and $\caly$ denote the input and label spaces. Throughout, we focus on binary classification, $\caly=\{0,1\}$. Let $P(Y,X)$ be the joint distribution over $\calx\times \caly$, and write the class priors as $\pi_+=P(Y=1)$ and $\pi_-=P(Y=0)$. We denote the class-conditional distributions by $P_+=P(X\mid Y=1)$ and $P_-=P(X\mid Y=0)$, and the regression function by $r(x)=P(Y=1\mid X=x)$. Following notation used in weakly supervised learning~\citep{ChiangSugiyama2025}, we also use the compact joint-measure vector
\begin{equation}
\label{eq:joint-measure-vector}
P:=(P(X,Y=1),P(X,Y=0))^\top=(\pi_+P_+,\pi_-P_-)^\top .
\end{equation}
A predictor is a measurable function $f:\calx\to[0,1]$, intended to estimate the conditional class probability $f^\star(x):=\Ex[Y\mid X=x]$. 

Let $\calc\subseteq\{c:\calx\to[0,1]\}$ be a class of subgroup functions, and let $\calw\subseteq\{w:[0,1]\to\R:\|w\|_\infty\le 1\}$ be a class of score-dependent test functions. We use the following witness-based definition of multicalibration from \citet{gopalan2022low}.

\begin{definition}[Multicalibration]
For a predictor $f$, define its $(\calc,\calw)$-multicalibration error by
\begin{equation}
\label{eq:def-mc}
\MC_{\calc,\calw}(f)
:=
\sup_{c\in\calc,\,w\in\calw}
\left|\Ex\big[c(X)w(f(X))(Y-f(X))\big]\right|.
\end{equation}
We say that $f$ is $(\calc,\calw,\varepsilon)$-multicalibrated if $\MC_{\calc,\calw}(f)\le \varepsilon$.
\end{definition}

This definition subsumes familiar calibration criteria by specifying the witness classes $\calc$ and $\calw$. With $\calc=\{1\}$, taking $\calw$ to be all measurable maps $[0,1]\to[-1,1]$ recovers ECE, while $\calw=\Lip_1([0,1],[-1,1])$ gives smooth calibration error \citep{blasiok2023unify}. Binned ECE is obtained by fixing $\delta\in(0,1]$, setting $J=\lceil 1/\delta\rceil$, $I_j=[(j-1)\delta,j\delta)$ for $j<J$ and $I_J=[(J-1)\delta,1]$, and using $\calw_\delta=\{p\mapsto \mathbf 1\{p\in I_j\}:j=1,\dots,J\}$; with a nontrivial $\calc$, this is the usual binned form of multicalibration \citep{gupta2020,johnson18a}. The class $\calc$ specifies the groups or tests on which calibration must hold. Indicator classes $\calc=\{\mathbf 1_G:G\in\calg\}$ give subgroup fairness notions \citep{johnson18a,kearns18a}; richer choices such as decision trees, or linear thresholds require the calibration residual to be indistinguishable from zero against a broad test class, connecting multicalibration and multiaccuracy%
~\citep{kim2019multiaccuracy,dwork2021outcome}. Given independent and identically distributed (i.i.d.) samples $S=\{(X_i,Y_i)\}_{i=1}^n$ from $P(X,Y)$, the empirical analogue is
\begin{equation}
\label{eq:def-emp-clean-mc}
\widehat{\MC}_{\calc,\calw}(f;S)
:=
\sup_{c\in\calc,\,w\in\calw}
\Big|\frac1n\sum_{i=1}^n c(X_i)w(f(X_i))(Y_i-f(X_i))\Big|.
\end{equation}

\subsection{Weakly supervised learning and unified risk rewrites}
\label{sec:prelim-wsl}
WSL replaces clean input--label pairs $(X,Y)$ with weaker observations. In PU learning~\citep{duplessis2014analysis,plessis2015convex,kiryo2017nnpu,kiryo2026estimating}, the sample consists of independent positive and unlabeled examples,
\begin{equation}
\label{eq:pu-sample-model-prelim}
S_+=\{X_i^+\}_{i=1}^{n_+},
\overset{\mathrm{i.i.d.}}{\sim} P_+,
\qquad
S_u=\{X_j^u\}_{j=1}^{n_u}\overset{\mathrm{i.i.d.}}{\sim} P_X,
\qquad
P_X=\pi_+P_+ + \pi_-P_- .
\end{equation}
For any binary loss $\ell:[0,1]\times \caly\to\mathbb{R}$, the clean risk admits the PU rewrite
\begin{align}
R_\ell(f)
=\Ex_{P(X,Y)}[\ell(f(X),Y)]&=
\pi_+\Ex_{P_+}[\ell(f(X),1)] + \pi_-\Ex_{P_-}[\ell(f(X),0)] \notag\\
&=
\pi_+\Ex_{P_+}[\ell(f(X),1)-\ell(f(X),0)] + \Ex_{P_X}[\ell(f(X),0)].
\label{eq:pu-risk-rewrite-prelim}
\end{align}
This yields the corrected empirical estimator
\begin{equation}
\label{eq:pu-risk-estimator-prelim}
\widehat R_\ell^{\PU}(f)=
\frac{\pi_+}{n_+}\sum_{i=1}^{n_+}\{\ell(f(X_i^+),1)-\ell(f(X_i^+),0)\}
+\frac1{n_u}\sum_{j=1}^{n_u}\ell(f(X_j^u),0).
\end{equation}
This example highlights the central WSL principle: rather than recovering the hidden label for each instance, we rewrite the target expectation in terms of quantities observable under the weak sampling scheme. Throughout our PU analysis, we treat $\pi_+$ as known or estimated in a preprocessing step, as is standard in PU learning~\citep{duplessis2014analysis,plessis2015convex,kiryo2017nnpu,kiryo2026estimating}.%

The unified framework of \citet{ChiangSugiyama2025} expresses such rewrites systematically in matrix form. Set $B:=(P_+,P_-)^\top$, $\Pi:=\mathrm{diag}(\pi_+,\pi_-)$, and $P=\Pi B=(\pi_+P_+,\pi_-P_-)^\top$.  

For a binary loss, define $L_\ell(x;f):=(\pi_+\ell(f(x),1),\pi_-\ell(f(x),0))^\top$, so that
\begin{equation}
\label{eq:loss-vector}
R_\ell(f)=\int_{\calx}L_\ell(x;f)^\top\,dB(x).
\end{equation}
A weak-observation model specifies an observable source distribution $\bar P$ and a decontamination rule
\begin{equation}
\label{eq:decontam}
\bar P=M_{\mathrm{corr}}B,
\qquad
B=M_{\mathrm{corr}}^{\dagger}\bar P,
\end{equation}
where $M_{\mathrm{corr}}^{\dagger}$ may be an inverse matrix or an $x$-dependent operator. This relation gives the risk rewrite
\begin{equation}
\label{eq:generic-risk-rewrite}
R_\ell(f)=\int_{\calx}L_\ell(x;f)^\top M_{\mathrm{corr}}^{\dagger}\,d\bar P(x).
\end{equation}
For PU, the identities used below are
\begin{align}
\label{eq:def-M-PU-main}
&\bar P_{\PU}:=(P_+,P_X)^\top=M_{\PU}B,
\qquad
M_{\PU}:=\begin{psmallmatrix}1&0\\ \pi_+&\pi_-\end{psmallmatrix},
\\
\label{eq:def-M-PU-inv-main}
&M_{\PU}^{\dagger}=M_{\PU}^{-1}=\begin{psmallmatrix}1&0\\ -\pi_+/\pi_-&1/\pi_-\end{psmallmatrix},
\qquad
B=M_{\PU}^{\dagger}\bar P_{\PU}.
\end{align}
Plugging \eqref{eq:def-M-PU-inv-main} into \eqref{eq:generic-risk-rewrite}, we recover Eqs.~\eqref{eq:pu-risk-rewrite-prelim} and \eqref{eq:pu-risk-estimator-prelim}. Other weak-observation models use the same template with a different $M_{\mathrm{corr}}$; Appendix~\ref{app:additional-settings} lists the variants used later.

\section{Weakly supervised multicalibration}
\label{sec:wsl-mc}
Multicalibration under weak supervision faces two difficulties: the residual
\(Y-f(X)\) is label-dependent although labels are observed only indirectly, and the error is a supremum over subgroup- and score-dependent witnesses rather than a single risk. Our key perspective is to regard each multicalibration witness moment as a signed target risk, not as a loss to be minimized for training but as an auditing functional for a fixed probabilistic predictor.
This fixed-witness view lets us apply the same
decontamination principle as in weakly supervised risk estimation; the remaining
task is to control the corrected moments uniformly over the witness class and
use them for post-processing.

We develop this idea in two steps. Section~\ref{sec:general-witness} gives the
generic decontamination rewrite, and
Section~\ref{sec:examples} instantiates it for PU, UU, and Pconf learning. We
state and prove the finite-sample guarantee in detail for PU in the main text;
analogous guarantees for UU and Pconf follow under their corresponding envelope
and sampling assumptions.

\subsection{Unified rewrite of multicalibration in the WSL setting}
\label{sec:general-witness}
Section~\ref{sec:prelim-wsl} showed how clean risks can be rewritten under weak supervision. We now lift the same algebra from ordinary losses to multicalibration witnesses.

For any fixed $(c,w)\in\calc\times\calw$, define
\begin{equation}
\label{eq:def-phi-cw}
\phi_f^{c,w}(x,y):=c(x)w(f(x))(y-f(x)),
\qquad
R_{c,w}(f):=\Ex\big[\phi_f^{c,w}(X,Y)\big].
\end{equation}
Then \eqref{eq:def-mc} becomes $\MC_{\calc,\calw}(f)=\sup_{c,w}|R_{c,w}(f)|$. Similarly to Eq.~\eqref{eq:loss-vector}, conditioning on $Y$ gives
\begin{equation}
\label{eq:def-L-cw}
L_{c,w}(x;f)
:=
\begin{pmatrix}
\pi_+ c(x)w(f(x))(1-f(x))\\[1mm]
-\pi_- c(x)w(f(x))f(x)
\end{pmatrix},
\qquad
R_{c,w}(f)=\int L_{c,w}(x;f)^\top\,dB(x).
\end{equation}
Similarly to Eq.~\eqref{eq:generic-risk-rewrite}, we obtain the following rewrite of the multicalibration moment.
\begin{theorem}[Generic WSL rewrite for multicalibration]
\label{thm:generic-wsl-mc}
Assume that the weak-supervision model admits a decontamination matrix or operator $M_{\mathrm{corr}}^{\dagger}$ such that $B=M_{\mathrm{corr}}^{\dagger}\bar P$ as a signed-measure identity, and that the corrected integrand is integrable. For any fixed witness $(c,w)$, define
\begin{equation}
\label{eq:generic-wsl-cw}
R_{c,w}^{\WSL}(f):=\int_{\calx} L_{c,w}(x;f)^\top M_{\mathrm{corr}}^{\dagger}\,d\bar P(x).
\end{equation}
Then $R_{c,w}(f)=R_{c,w}^{\WSL}(f)$ holds. Consequently, multicalibration admits the representation
\begin{equation}
\label{eq:def-mc-wsl}
\MC_{\calc,\calw}(f)
=
\sup_{c\in\calc,\,w\in\calw}
\left|R_{c,w}^{\WSL}(f)\right|,
\end{equation}
\end{theorem}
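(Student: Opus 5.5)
The plan is to treat the witness moment exactly like an ordinary binary risk and repeat the argument behind \eqref{eq:generic-risk-rewrite}. We first verify the representation \eqref{eq:def-L-cw}. Condition on $Y$ and use $P(X,Y=1)=\pi_+P_+$ and $P(X,Y=0)=\pi_-P_-$. Then
\[
R_{c,w}(f)=\pi_+\Ex_{P_+}\big[c(X)w(f(X))(1-f(X))\big]+\pi_-\Ex_{P_-}\big[c(X)w(f(X))(0-f(X))\big],
\]
which is $\int L_{c,w}(x;f)^\top dB(x)$ with $B=(P_+,P_-)^\top$. This integral is finite because $|c|\le 1$, $\|w\|_\infty\le 1$ and $f\in[0,1]$, so the integrand is bounded by $1$ in absolute value.

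Next we substitute the signed-measure identity $B=M_{\mathrm{corr}}^{\dagger}\bar P$ into this integral. When $M_{\mathrm{corr}}^{\dagger}$ is a constant matrix, both sides are finite linear combinations of the scalar integrals $\int g_k\,d\bar P_l$, where $g_k$ are the bounded measurable components of $L_{c,w}$. Linearity of the integral in the measure then gives
\[
\int L_{c,w}^\top dB=\int L_{c,w}^\top d\big(M_{\mathrm{corr}}^{\dagger}\bar P\big)=\int L_{c,w}^\top M_{\mathrm{corr}}^{\dagger}\,d\bar P.
\]

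When $M_{\mathrm{corr}}^{\dagger}$ is an $x$-dependent operator, as for Pconf, we read the identity $B=M_{\mathrm{corr}}^{\dagger}\bar P$ as a density statement, $dB(x)=M_{\mathrm{corr}}^{\dagger}(x)\,d\bar P(x)$. The same computation then goes through, with the assumed integrability of the corrected integrand $L_{c,w}^\top M_{\mathrm{corr}}^{\dagger}$ justifying Fubini and linearity. This interchange is the only delicate step, and it is the reason the integrability hypothesis appears in the statement. It gives $R_{c,w}(f)=R^{\WSL}_{c,w}(f)$ for each fixed witness.

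Finally, the identity holds pointwise for every $(c,w)\in\calc\times\calw$, so the absolute values agree for each witness. Taking the supremum over the witness class in $\MC_{\calc,\calw}(f)=\sup_{c,w}|R_{c,w}(f)|$ yields \eqref{eq:def-mc-wsl}. No uniformity argument is needed here, since the equality is exact at the population level for each witness and the suprema are over identical sets of values.
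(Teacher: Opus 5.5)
Your proposal is correct and follows the paper's proof essentially step for step: condition on $Y$ to write $R_{c,w}(f)=\int L_{c,w}(x;f)^\top\,dB(x)$, substitute $B=M_{\mathrm{corr}}^{\dagger}\bar P$, and take the supremum over identical witness-indexed values. Your added remarks add rigor but no new idea; these are the boundedness of the integrand and the reading of the $x$-dependent case as the density identity $B(dx)=M_{\mathrm{corr}}^{\dagger}(x)\,\bar P(dx)$, which the paper records in its review of the risk rewrite (the $x$-dependent step needs only linearity of the integral under the integrability hypothesis, not Fubini).
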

The proof is given in Appendix~\ref{app:proofs-main}.  For a fixed witness, the equality follows from the same decontamination principle used in weakly supervised risk estimation. The difference is its statistical role: instead of constructing an empirical objective for training a classifier, we use the rewrite to make calibration residual moments observable for a fixed probabilistic predictor. Multicalibration then requires controlling not one corrected risk, but a witness-indexed signed residual process over \(C\times W\). The finite-sample results below show that these corrected moments can be controlled uniformly and therefore used as weak-label auditors for post-hoc correction.

Let $\widehat R^{\WSL}_{c,w}(f;S)$ denote the finite-sample estimator of $R_{c,w}^{\WSL}(f)$ on a weak dataset $S$. The empirical estimator is obtained by replacing the relevant expectations with the corresponding source-wise empirical averages for the data-generation mechanism. Section~\ref{sec:examples} writes these estimators explicitly. %

\subsection{Three concrete weak-supervision examples}
\label{sec:examples}
We instantiate Theorem~\ref{thm:generic-wsl-mc} in three binary settings: PU, UU, and Pconf. These examples show how the multicalibration moment is corrected under weak supervision. Other weak-supervision specifications, including SU, DU, SD, pairwise comparison, soft-label, and Sconf variants, are recorded in  Appendix~\ref{app:additional-settings}.
To keep the main text focused, we state the finite-sample guarantee for PU; analogous arguments for UU and Pconf are provided in Appendix~\ref{app:gen}. 

\paragraph{PU learning.}
We reuse the PU sample model and matrices from Section~\ref{sec:prelim-wsl}, namely Eq.~\eqref{eq:pu-sample-model-prelim} and Eq.~\eqref{eq:def-M-PU-main}--\eqref{eq:def-M-PU-inv-main}.  Substituting $M_{\PU}^{\dagger}$ into the generic rewrite in Eq.~\eqref{eq:generic-wsl-cw} gives the following expression.
\begin{corollary}
\label{prop:pu}
Under the PU observation model, for every fixed $(c,w)\in\calc\times\calw$,
\begin{equation}
\label{eq:Rcw-PU}
R_{c,w}^{\PU}(f)
=\pi_+\Ex_{P_+}[c(X)w(f(X))]-\Ex_{P_X}[c(X)w(f(X))f(X)].
\end{equation}
Given $\{X_i^+\}_{i=1}^{n_+}\overset{\mathrm{i.i.d.}}{\sim} P_+$ and $\{X_j^u\}_{j=1}^{n_u}\overset{\mathrm{i.i.d.}}{\sim} P_X$, the corresponding finite-sample estimator is
\begin{equation}
\label{eq:Rhat-PU}
\widehat R_{c,w}^{\PU}(f)
=\frac{\pi_+}{n_+}\sum_{i=1}^{n_+}c(X_i^+)w(f(X_i^+))
-\frac1{n_u}\sum_{j=1}^{n_u}c(X_j^u)w(f(X_j^u))f(X_j^u).
\end{equation}
The multicalibration error estimator is
 $\widehat\MC^{\PU}_{\calc,\calw}(f):=\sup_{c,w}|\widehat R_{c,w}^{\PU}(f)|$.
\end{corollary}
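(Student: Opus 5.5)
The plan is to specialize Theorem~\ref{thm:generic-wsl-mc} to the PU model. By \eqref{eq:def-M-PU-main}, the PU model satisfies $B=M_{\PU}^{\dagger}\bar P_{\PU}$ with the explicit inverse \eqref{eq:def-M-PU-inv-main}. Since $c\in[0,1]$, $\|w\|_\infty\le1$ and $f\in[0,1]$, the integrand is bounded, so the integrability hypothesis holds. The theorem then gives $R_{c,w}(f)=R^{\PU}_{c,w}(f):=\int L_{c,w}(x;f)^\top M_{\PU}^{-1}\,d\bar P_{\PU}(x)$, and it remains to compute the row vector $L_{c,w}^\top M_{\PU}^{-1}$.

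Write $g(x):=c(x)w(f(x))$, so that $L_{c,w}=(\pi_+g(1-f),\,-\pi_-gf)^\top$. Using the columns of $M_{\PU}^{-1}$, the first coordinate, which multiplies $dP_+$, is
\[
\pi_+g(1-f)\cdot1+(-\pi_-gf)\cdot(-\pi_+/\pi_-)=\pi_+g(1-f)+\pi_+gf=\pi_+g .
\]
The second coordinate, which multiplies $dP_X$, is $0\cdot\pi_+g(1-f)+(-\pi_-gf)/\pi_-=-gf$. Integrating against $(P_+,P_X)^\top$ yields \eqref{eq:Rcw-PU}.

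As a sanity check independent of the matrix formalism, I would verify the result directly. We have $\Ex[gY]=\pi_+\Ex_{P_+}[g]$ and $\Ex[gf]=\Ex_{P_X}[gf]$, so $R_{c,w}(f)=\Ex[g(Y-f)]$ matches.

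For the estimator \eqref{eq:Rhat-PU}, each expectation in \eqref{eq:Rcw-PU} is over a single observable source. I would replace it by the empirical mean over the corresponding independent sample, $S_+$ or $S_u$. Linearity of expectation then shows that $\widehat R^{\PU}_{c,w}(f)$ is unbiased for $R_{c,w}(f)$ for each fixed witness. Taking the supremum of absolute values defines $\widehat\MC^{\PU}_{\calc,\calw}$, and \eqref{eq:def-mc-wsl} identifies its population counterpart with $\MC_{\calc,\calw}(f)$.

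The only step needing any care is the cancellation in the first coordinate, where the $f$-terms combine to leave $\pi_+g$. Note that this requires $\pi_->0$ so that $M_{\PU}$ is invertible. Otherwise the argument is routine.
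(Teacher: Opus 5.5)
Your proposal is correct and follows the paper's own route: the paper also substitutes $M_{\PU}^{-1}$ into the generic rewrite. It first obtains $\pi_+\Ex_{P_+}[\phi_f(X,1)-\phi_f(X,0)]+\Ex_{P_X}[\phi_f(X,0)]$ for a general witness, and then uses $\phi_f^{c,w}(x,1)-\phi_f^{c,w}(x,0)=c(x)w(f(x))$, which is exactly the cancellation you carry out in the first coordinate. As you say, the estimator is then obtained by replacing each source expectation with its empirical average.
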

The following proposition controls the finite-sample deviation of this estimator.
\begin{proposition}[Uniform convergence for the PU weak estimate]
\label{prop:pu-gen}
Assume that $f$ is fixed independently of the PU samples.  Define $(P_1,n_1,\pi_1):=(P_+,n_+,\pi_+)$ and $(P_2,n_2,\pi_2):=(P_X,n_u,1)$.  Then for any $\eta\in(0,1]$, with probability at least $1-\delta$,
\begin{equation}
\label{eq:pu-gen-main}
\bigl|\widehat\MC^{\PU}_{\calc,\calw}(f)-\MC_{\calc,\calw}(f)\bigr|
\le
2\sum_{s=1}^2\pi_s
\Big(\mathfrak{R}_{n_s}(\calc;P_s)+\eta+
\sqrt{\frac{2\log N_\infty(\eta,\calw)+\log(4/\delta)}{n_s}}\Big),
\end{equation}
where $
\mathfrak{R}_n(\calc;P):=\Ex_{S\sim P^n,\,\sigma}\left[\sup_{c\in\calc}\frac1n\sum_{i=1}^n \sigma_i c(X_i)\right]
$ is the expected Rademacher complexity of \(\calc\) under \(P\), $\sigma_i\in\{-1,+1\}$ are independent Rademacher random variables, and $N_\infty(\eta,\calw)=N(\eta,\calw,\|\cdot\|_\infty)$ is the covering number of $\calw$ under the sup norm. 
\end{proposition}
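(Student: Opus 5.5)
The plan is to reduce the supremum deviation to two one-sample uniform deviations, one per source, and to control each by discretizing $\calw$ and using symmetrization for $\calc$. By Theorem~\ref{thm:generic-wsl-mc} and Corollary~\ref{prop:pu}, $\MC_{\calc,\calw}(f)=\sup_{c,w}|R^{\PU}_{c,w}(f)|$. Since $|\sup|a|-\sup|b||\le\sup|a-b|$, it suffices to bound $\sup_{c,w}|\widehat R^{\PU}_{c,w}(f)-R^{\PU}_{c,w}(f)|$. This difference splits source-wise as
\begin{equation}
\pi_1\bigl(\widehat\Ex_{S_+}-\Ex_{P_+}\bigr)[g_1^{c,w}]-\pi_2\bigl(\widehat\Ex_{S_u}-\Ex_{P_X}\bigr)[g_2^{c,w}],
\end{equation}
with $g_1^{c,w}(x)=c(x)w(f(x))$ and $g_2^{c,w}(x)=c(x)w(f(x))f(x)$. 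Both functions take values in $[-1,1]$ because $c\in[0,1]$, $|w|\le1$ and $f\in[0,1]$. The triangle inequality then reduces the problem to bounding $D_s:=\sup_{c,w}|(\widehat\Ex_{s}-\Ex_{P_s})g_s^{c,w}|$ for each $s$, each at confidence $1-\delta/2$, followed by a union bound.

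Fix $s$ and take a minimal sup-norm $\eta$-cover $\{w_1,\dots,w_N\}$ of $\calw$, with $N=N_\infty(\eta,\calw)$. For any $w$ there is $w_k$ with $\|w-w_k\|_\infty\le\eta$. Then $|g_s^{c,w}-g_s^{c,w_k}|\le\eta$ pointwise, so replacing $w$ by $w_k$ changes both the empirical and the population mean by at most $\eta$. Hence
\begin{equation}
D_s\le 2\eta+\max_k\sup_{c}\bigl|(\widehat\Ex_s-\Ex_{P_s})g_s^{c,w_k}\bigr|.
\end{equation}

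For fixed $k$, the map $c\mapsto\sup_c|\cdot|$ has bounded differences $2/n_s$. McDiarmid's inequality with a union bound over the $N$ cover elements (and both signs, or directly on the absolute value) then gives deviations of order $\sqrt{(2\log N+\log(4/\delta))/n_s}$ above the expectation. The constants need to be arranged so the bound matches the stated form, where the factor $2$ outside absorbs them.

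The expectation is handled by symmetrization, which gives at most $2\,\Ex\sup_c|\frac1{n}\sum\sigma_i c(X_i)h_k(X_i)|$ with the fixed multiplier $h_k(x)=w_k(f(x))$ or $w_k(f(x))f(x)$, taking values in $[-1,1]$. \textbf{The main obstacle} is removing this multiplier and the absolute value to reach $\mathfrak R_{n_s}(\calc;P_s)$. I would use the contraction (Talagrand) lemma applied pointwise: the maps $t\mapsto h_k(X_i)t$ are $1$-Lipschitz, so the multiplier can be stripped conditionally on the sample. For the absolute value, I would either use the standard symmetrization of $\sup(\cdot)$ and $\sup(-\cdot)$ separately, handling the two sides with the union bound that produces the $\log(4/\delta)$, or accept a constant that the outer $2$ absorbs.

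Collecting the pieces, each source contributes $2\bigl(\mathfrak R_{n_s}(\calc;P_s)+\eta+\sqrt{\cdot}\bigr)$ up to constants, weighted by $\pi_s$, which yields \eqref{eq:pu-gen-main}. The independence of $f$ from the samples is what makes $g_s^{c,w}$ a fixed function class, so these i.i.d. concentration tools apply.
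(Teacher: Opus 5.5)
Your proposal is correct. It shares the paper's skeleton: reduction to $\sup_{c,w}|\widehat R^{\PU}_{c,w}(f)-R^{\PU}_{c,w}(f)|$, a source-wise split, a sup-norm $\eta$-net of $\calw$ costing $2\eta$ per source, and symmetrization plus contraction to strip the multipliers $w(f(x))$ and $w(f(x))f(x)$. You differ in where you pay for the net.

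The paper applies McDiarmid once to the union class $\mathcal G_s(f)=\bigcup_{w\in\widetilde{\calw}}\mathcal G_{s,w}(f)$. It charges the net inside the Rademacher complexity through a sub-Gaussian maximal inequality over the signs, $\widehat{\mathfrak R}_m(\bigcup_{\ell\le M}\mathcal F_\ell)\le\max_\ell\widehat{\mathfrak R}_m(\mathcal F_\ell)+\lambda\sqrt{2\log M/m}$. You instead apply McDiarmid separately to each net element and each sign, and union-bound the $4N$ tail events (two sources, two signs). So $\log N$ enters only through the confidence level, and no finite-union lemma is needed.

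What each route buys:
\begin{itemize}
\item Your route is more elementary and reaches the displayed constants with no absorption. Per source you get $2\eta+2\mathfrak R_{n_s}(\calc;P_s)+\sqrt{2\log(4N/\delta)/n_s}$, and $\sqrt{2\log N+2\log(4/\delta)}\le 2\sqrt{2\log N+\log(4/\delta)}$.
\item The paper's route leaves $2\sqrt{2\log N/n_s}+\sqrt{2\log(4/\delta)/n_s}$ (times $\pi_s$). This can exceed $2\sqrt{(2\log N+\log(4/\delta))/n_s}$, which is why the paper appeals to absorbing constants.
\item In exchange, the paper bounds the Rademacher complexity of the whole composed class in one step, which is the more modular form.
\end{itemize}

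On the absolute-value question you left open, commit to your one-sided option. Treat $\sup_c(\widehat\Ex_s-\Ex_{P_s})g$ and $\sup_c(\Ex_{P_s}-\widehat\Ex_s)g$ separately, applying the sign-free contraction lemma to $t\mapsto\pm h_k(X_i)t$. This produces exactly the sign-free $\mathfrak R_{n_s}(\calc;P_s)$ in the statement, and it is what the paper's $\log(4/\delta)$ implicitly accounts for.

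Working with the absolute value directly yields $\Ex\sup_c\bigl|\frac1n\sum_i\sigma_ic(X_i)\bigr|$ instead, which is a different quantity. For $\calc=\{1\}$, $\mathfrak R_n(\calc;P)=0$ while this quantity is of order $n^{-1/2}$. Converting it back to $\mathfrak R_n$ is not guaranteed to fit inside the stated constant.
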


This PU bound shows that both positive and unlabeled samples are needed to control finite-sample estimation error. It also yields sample-complexity rates once $\calc$ and $\calw$ are specified: for example, when $n=n_+=n_u$ and \(\calc\) and \(\calw\) are finite, as in standard binning-based MC, \(\bigl|\widehat\MC^{\PU}_{\calc,\calw}(f)-\MC_{\calc,\calw}(f)\bigr|=\calo_p(\sqrt{\log(|\calc||\calw|)/n})\). This logarithmic dependence is important: even though the estimator takes a worst case over subgroup and score witnesses, the price of uniformly estimating corrected moments grows logarithmically with the sizes of group and scoring classes. Hence, once enough samples are available from each observed source, the weak-label estimator remains efficient. The proof and bounds for other choices of $\calc$ and $\calw$ are given in Appendix~\ref{app:pu-gen}.

\paragraph{UU learning.}
\label{sec:uu}
We now move to the more surprising case where \emph{no} labeled sample is available. 
In UU learning~\citep{lu2018on,lu21c}, one observes two unlabeled sources with different mixture proportions
\[
P_{U_1}=(1-\gamma_1)P_+ + \gamma_1 P_-,
\qquad
P_{U_2}=\gamma_2 P_+ + (1-\gamma_2)P_-,
\qquad
\Delta:=1-\gamma_1-\gamma_2\neq 0.
\]
We assume throughout this subsection that the mixture proportions $\gamma_1,\gamma_2\in[0,1]$ are fixed and known a priori.
Thus PU and UU are both mutually contaminated-distribution (MCD) settings: the observed sources are fixed mixtures of the two class-conditionals, with no $x$-dependent correction. Equivalently,
$M_{\UU}=\begin{psmallmatrix}
1-\gamma_1 & \gamma_1\\
\gamma_2 & 1-\gamma_2
\end{psmallmatrix}$,
$\bar P=\begin{psmallmatrix}
P_{U_1}\\
P_{U_2}
\end{psmallmatrix}
=M_{\UU}B$,
with inverse
$M_{\UU}^{-1}=\frac{1}{\Delta}
\begin{psmallmatrix}
1-\gamma_2 & -\gamma_1\\
-\gamma_2 & 1-\gamma_1
\end{psmallmatrix}$.
This uses the same decontamination idea as PU, except that both sources are unlabeled mixtures. Indeed, PU is recovered by setting $(\gamma_1,\gamma_2)=(0,\pi_+)$ with $(P_{U_1},P_{U_2})=(P_+,P_X)$. The key point of the UU case is that, although no sample is labeled, the two mixtures are sufficient to estimate the multicalibration error. Substituting $M_{\UU}^{-1}$ into the generic rewrite gives the following corrected moment and empirical estimator; the finite-sample bound is given in Appendix~\ref{app:gen}.
\begin{corollary}
\label{prop:uu}
Under the UU observation model, for every fixed $(c,w)\in\calc\times\calw$, 
\begin{align}
R_{c,w}^{\UU}(f)
&=\sum_{k=1}^2\Ex_{P_{U_k}}[\alpha_k(f(X))c(X)w(f(X))],
\label{eq:Rcw-UU-main}
\end{align}
with $\alpha_1(v):=\frac{(1-\gamma_2)\pi_+(1-v)+\gamma_2\pi_- v}{\Delta},
\qquad
\alpha_2(v):=\frac{-\gamma_1\pi_+(1-v)-(1-\gamma_1)\pi_- v}{\Delta}$. 
Given samples $\{X_i^{U_1}\}_{i=1}^{n_1}\overset{\mathrm{i.i.d.}}{\sim} P_{U_1}$ and $\{X_j^{U_2}\}_{j=1}^{n_2}\overset{\mathrm{i.i.d.}}{\sim} P_{U_2}$, the corresponding estimator is
\begin{align}
\widehat R_{c,w}^{\UU}(f)
&=\sum_{k=1}^2\frac1{n_k}\sum_{i=1}^{n_k}
\alpha_k(f(X_i^{U_k}))c(X_i^{U_k})w(f(X_i^{U_k})).
\label{eq:Rhat-UU}
\end{align}
\end{corollary}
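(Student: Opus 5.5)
The plan is to obtain Corollary~\ref{prop:uu} as a direct instance of Theorem~\ref{thm:generic-wsl-mc}. Under the UU model we have $\bar P=(P_{U_1},P_{U_2})^\top=M_{\UU}B$. Since $\Delta\neq 0$, the matrix $M_{\UU}$ is invertible; its determinant is $(1-\gamma_1)(1-\gamma_2)-\gamma_1\gamma_2=1-\gamma_1-\gamma_2=\Delta$. So $B=M_{\UU}^{-1}\bar P$ holds as a signed-measure identity with a constant, $x$-independent matrix. Integrability is immediate, because $|c|\le 1$, $\|w\|_\infty\le 1$ and $f\in[0,1]$ make $L_{c,w}$ bounded. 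The theorem therefore gives
\[
R_{c,w}(f)=\int L_{c,w}(x;f)^\top M_{\UU}^{-1}\,d\bar P(x).
\]

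Next I would compute the row vector $L_{c,w}^\top M_{\UU}^{-1}$ explicitly. Write $L_{c,w}=c(x)w(f(x))\,(\pi_+(1-v),\,-\pi_-v)^\top$ with $v=f(x)$. The first component of $L^\top M_{\UU}^{-1}$ is
\[
\tfrac1\Delta\bigl[\pi_+(1-v)(1-\gamma_2)+(-\pi_-v)(-\gamma_2)\bigr]=\alpha_1(v),
\]
and the second is
\[
\tfrac1\Delta\bigl[\pi_+(1-v)(-\gamma_1)+(-\pi_-v)(1-\gamma_1)\bigr]=\alpha_2(v),
\]
each multiplied by $c(x)w(f(x))$. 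Integrating against $d\bar P=(dP_{U_1},dP_{U_2})^\top$ then yields Eq.~\eqref{eq:Rcw-UU-main}.

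For the estimator, the two samples are i.i.d.\ from $P_{U_1}$ and $P_{U_2}$ respectively, and $f$ is fixed. Linearity of expectation then shows that the source-wise empirical averages in Eq.~\eqref{eq:Rhat-UU} are unbiased for the two terms. This justifies $\widehat R^{\UU}_{c,w}$ as the finite-sample estimator described after Theorem~\ref{thm:generic-wsl-mc}.

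There is no real obstacle here. The only points needing care are confirming the inverse matrix, including its determinant $\Delta$, and keeping the signs consistent in the row-vector product so that they match the stated $\alpha_1$ and $\alpha_2$. As a sanity check, setting $(\gamma_1,\gamma_2)=(0,\pi_+)$ gives $\Delta=\pi_-$. Then $\alpha_1=\pi_+$ and $\alpha_2=-v$, which recovers Corollary~\ref{prop:pu}.
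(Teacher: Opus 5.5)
Your proposal is correct and follows essentially the same route as the paper. Appendix~\ref{app:mcd-full} likewise instantiates the generic rewrite with $M_{\UU}^{-1}$, computes the row vector $L^\top M_{\UU}^{-1}$, and reads off $\alpha_1,\alpha_2$ as the source-wise weights. The differences are cosmetic: the paper uses the parametrization $\theta_k=P(Y=1\mid U_k)$ (so $\theta_1=1-\gamma_1$, $\theta_2=\gamma_2$), and it first rewrites a general witness $\phi_f$ before substituting $\phi_f^{c,w}$, whereas you plug in the multicalibration witness directly.
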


\paragraph{Pconf learning.}
Unlike the PU and UU cases, which use fixed mixture proportions, Pconf~\citep{ishida2018} learning considers a soft-label setting in which each positive example carries an $x$-dependent confidence.  We observe i.i.d. pairs $(X_i,r_i)$ with $X_i\sim P_+$ and $r_i=r(X_i)=P(Y=1\mid X_i)$. We treat $\pi_+$ as known or pre-estimated and assume the standard support condition $P_-\ll P_+$, so that the negative class can be recovered from positive-confidence data. With $B=(P_+,P_-)^\top$, define the duplicated observable source $\bar P_{\Pconf}(dx):=\begin{psmallmatrix}P_+(dx)\\ P_+(dx)\end{psmallmatrix}$. Bayes' rule gives $P_-(dx)=\frac{\pi_+}{\pi_-}\frac{1-r(x)}{r(x)}P_+(dx)$, and hence the $x$-dependent decontamination rule
$B(dx)=M_{\Pconf}^{\dagger}(x)\bar P_{\Pconf}(dx)$ and 
$M_{\Pconf}^{\dagger}(x)=\begin{psmallmatrix}1&0\\ 0&\frac{\pi_+}{\pi_-}\frac{1-r(x)}{r(x)}\end{psmallmatrix}$.
The density-ratio derivation is deferred to Appendix~\ref{app:confidence-full}. Substituting this operator into the generic witness rewrite gives the following corrected moment and estimator.
\begin{corollary}
\label{prop:pconf}
For every fixed $(c,w)\in\calc\times\calw$,
\begin{equation}
\label{eq:Rcw-Pconf}
R_{c,w}^{\Pconf}(f)
=\pi_+\Ex_{P_+}\!\left[c(X)w(f(X))\left(1-f(X)r(X)^{-1}\right)\right].
\end{equation}
With Pconf data $\{(X_i,r_i)\}_{i=1}^n$ where $X_i\sim P_+$ and $r_i=r(X_i)$,
\begin{equation}
\label{eq:Rhat-Pconf}
\widehat R_{c,w}^{\Pconf}(f)
=\frac{\pi_+}{n}\sum_{i=1}^n c(X_i)w(f(X_i))\left(1-f(X_i)r_i^{-1}\right).
\end{equation}
\end{corollary}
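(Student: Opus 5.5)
The plan is to substitute the Pconf decontamination operator $M_{\Pconf}^{\dagger}(x)$ into the generic rewrite of Theorem~\ref{thm:generic-wsl-mc} and simplify. First I would record the signed-measure identity $B(dx)=M_{\Pconf}^{\dagger}(x)\bar P_{\Pconf}(dx)$. The first component is trivial. The second is Bayes' rule: $P(Y=0\mid x)\,P_X(dx)=\pi_-P_-(dx)$ and $P(Y=1\mid x)\,P_X(dx)=\pi_+P_+(dx)$. On $\{r>0\}$ these give $P_-(dx)=\frac{\pi_+}{\pi_-}\frac{1-r(x)}{r(x)}P_+(dx)$. The support condition $P_-\ll P_+$ ensures that $P_-$ puts no mass on the set where $r=0$, so the identity holds as measures. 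This is exactly the hypothesis required by Theorem~\ref{thm:generic-wsl-mc}.

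Next I would compute the integrand $L_{c,w}(x;f)^\top M_{\Pconf}^{\dagger}(x)\,(1,1)^\top$, where $(1,1)^\top$ encodes the duplicated source $\bar P_{\Pconf}=(P_+,P_+)^\top$. Using the components of $L_{c,w}$ from \eqref{eq:def-L-cw}, the first contributes $\pi_+c(x)w(f(x))(1-f(x))$. The second contributes $-\pi_-c(x)w(f(x))f(x)\cdot\frac{\pi_+}{\pi_-}\frac{1-r(x)}{r(x)}$, which equals $-\pi_+c(x)w(f(x))f(x)\bigl(r(x)^{-1}-1\bigr)$. Adding the two, the $\pm\pi_+cwf$ terms cancel, leaving $\pi_+c(x)w(f(x))\bigl(1-f(x)r(x)^{-1}\bigr)$. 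Integrating against $P_+$ gives \eqref{eq:Rcw-Pconf}, and Theorem~\ref{thm:generic-wsl-mc} identifies this quantity with $R_{c,w}(f)$.

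The main point needing care is the integrability hypothesis of Theorem~\ref{thm:generic-wsl-mc}, since $f/r$ may be unbounded as $r\to0$. I would verify it directly. Because $|c|,|w|\le1$ and $f\in[0,1]$, it suffices to have $\Ex_{P_+}[r^{-1}]<\infty$. Bayes' rule gives $\pi_+\Ex_{P_+}[(1-r)/r]=\pi_-$, which is finite, so $\Ex_{P_+}[r^{-1}]=1+\pi_-/\pi_+<\infty$. Hence the corrected integrand is integrable with no additional assumption.

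Finally, the estimator \eqref{eq:Rhat-Pconf} is the empirical average of this integrand over the i.i.d.\ sample $X_i\sim P_+$ with observed $r_i=r(X_i)$. It is therefore unbiased for $R^{\Pconf}_{c,w}(f)$ for each fixed $(c,w)$, by linearity of expectation.
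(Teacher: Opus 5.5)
Your proposal is correct and takes essentially the paper's route: the paper obtains the Bayes density-ratio identity $\pi_-P_-(dx)=\pi_+\frac{1-r(x)}{r(x)}P_+(dx)$ under $P_-\ll P_+$ and substitutes it into the label-resolved expansion $\pi_+\Ex_{P_+}[\phi_f(X,1)]+\pi_-\Ex_{P_-}[\phi_f(X,0)]$, which is the same computation as your $L_{c,w}^\top M_{\Pconf}^{\dagger}(x)$ calculation. It then inserts the witness and takes the empirical average. Your extra check that $\Ex_{P_+}[r^{-1}]=1/\pi_+<\infty$, so that the integrability hypothesis of Theorem~\ref{thm:generic-wsl-mc} holds automatically, is correct and is something the paper leaves implicit.
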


The finite-sample estimation error is analyzed in Appendix~\ref{app:pconf-gen}.%

\section{Post-processing methods}
\label{sec:algo}
This section develops post-processing methods based on the corrected moments in Section~\ref{sec:wsl-mc}: WLMC, a weak-supervision multicalibration booster, and weak versions of Platt scaling and temperature scaling that use the same risk rewrites.

\subsection{WLMC: weak-label multicalibration boost}
\label{sec:wlmc-boost}
Algorithm~\ref{alg:wsl-mc-boosting} is our proposed post-processing algorithm. It is a boosting-style recalibration method, in the spirit of \citet{johnson18a,gopalan2022low}: at each iteration, it searches for a direction along which multicalibration is violated and moves the predictor in that direction. Unlike previous idealized updates, our implementation uses {\it finite weakly supervised samples} through the corrected empirical moment \(\widehat R^{\WSL}_{c,w}\). The search ranges over \((c,w)\in\calc\times\calw\).  The threshold \(3\varepsilon/4\) leaves an \(\varepsilon/4\) slack for the uniform-convergence event used in Theorem~\ref{thm:pu-wlmc-boost}.%

\begin{algorithm}[t]
\small
\caption{Finite-sample weak-label multicalibration boost (WLMC)}
\label{alg:wsl-mc-boosting}
\begin{algorithmic}
\State \textbf{Input:} \(\varepsilon>0\), source-wise sample sizes \(\mathbf n\), step size \(\lambda\), initial predictor \(f_0\), classes \(\calc,\calw\), estimator \(\widehat R^{\WSL}\)
\State Set \(t=0\) and draw a fresh weak batch \(S_0\) with source-wise sizes \(n\)
\While{\(\exists(\tau,c,w)\in\{\pm1\}\times\calc\times\calw\) such that \(\tau\widehat R^{\WSL}_{c,w}(f_t;S_t)>3\varepsilon/4\)}
    \State Choose \((\tau_t,c_t,w_t)\in\{\pm1\}\times\calc\times\calw\) satisfying the while condition
    \State \(\forall x\in\calx,\quad f_{t+1}(x)=\Pi_{[0,1]}\!\left[f_t(x)+\lambda\tau_t c_t(x)w_t(f_t(x))\right]\)
    \State \(t\gets t+1\)
    \State Draw a fresh weak batch \(S_t\) with source-wise sizes \(n\)
\EndWhile
\State \textbf{Output:} \(f_t\)
\end{algorithmic}
\end{algorithm}

\begin{theorem}[(Informal) PU guarantee for WLMC]
\label{thm:pu-wlmc-boost}
Let $\varepsilon,\delta>0$. Assume the PU setting of Proposition~\ref{prop:pu-gen} with finite $\calc$ and $\calw$ and \(n_+=n_u=n\).\footnote{The equal-size assumption is made only for ease of presentation; see Appendix~\ref{app:proof-pu-wlmc-boost} for the unequal-source-size case.}
Let \(D_0=\Ex[(f_0(X)-f^\star(X))^2]\) and \(T=\lceil 4D_0/\varepsilon^2\rceil\). At each step of Algorithm~\ref{alg:wsl-mc-boosting}, we compute \(\widehat R^{\PU}_{c,w}\) using \(n\) i.i.d. samples from each of \(P_+\) and \(P_X\).
If \(n=\calo(\log(|\calc||\calw|(T+1)/\delta)/\varepsilon^{2})\), then Algorithm~\ref{alg:wsl-mc-boosting} with
\(\lambda=\varepsilon/2\) stops within \(T\) iterations and returns a \((\calc,\calw,\varepsilon)\)-multicalibrated hypothesis with probability at least \(1-\delta\).
\end{theorem}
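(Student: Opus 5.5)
The argument follows the standard potential-function analysis of multicalibration boosting \citep{johnson18a,gopalan2022low}, combined with a uniform-convergence event for the corrected PU moments at every round.

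\textbf{Uniform-convergence event.} Fix the horizon $T$. For each round $t\le T$, the predictor $f_t$ is determined by the batches $S_0,\dots,S_{t-1}$. The batch $S_t$ is fresh, so conditionally on $f_t$ the predictor is independent of $S_t$, and the fixed-predictor setting of Proposition~\ref{prop:pu-gen} applies. For finite classes I would not use the Rademacher form. Instead, apply Hoeffding directly to each $\widehat R^{\PU}_{c,w}(f_t)$ and take a union bound over $\calc\times\calw$. The positive part has summands in $[-\pi_+,\pi_+]$ and the unlabeled part has summands in $[-1,1]$, so
\[
\bigl|\widehat R^{\PU}_{c,w}(f_t;S_t)-R_{c,w}(f_t)\bigr|\le (1+\pi_+)\sqrt{\frac{2\log(4|\calc||\calw|(T+1)/\delta)}{n}}
\quad\text{for all } (c,w).
\]
This uses Corollary~\ref{prop:pu}, which gives $R^{\PU}_{c,w}=R_{c,w}$. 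Now take a union bound over $t=0,\dots,T$. With $n=\calo(\log(|\calc||\calw|(T+1)/\delta)/\varepsilon^2)$, the following event $E$ holds with probability at least $1-\delta$: at every round $t\le T$, $\sup_{c,w}|\widehat R^{\PU}_{c,w}(f_t)-R_{c,w}(f_t)|\le\varepsilon/4$.

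\textbf{Correctness at termination.} Work on $E$. If the algorithm stops at some round $t\le T$, then $|\widehat R^{\PU}_{c,w}(f_t)|\le 3\varepsilon/4$ for all $(c,w)$. Hence $|R_{c,w}(f_t)|\le\varepsilon$, so $f_t$ is $(\calc,\calw,\varepsilon)$-multicalibrated. Conversely, whenever an update is made, the chosen witness satisfies $\tau_tR_{c_t,w_t}(f_t)\ge 3\varepsilon/4-\varepsilon/4=\varepsilon/2$.

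\textbf{Potential decrease, the main step.} Let $D_t=\Ex[(f_t(X)-f^\star(X))^2]$ and $g(x)=\tau_t c_t(x)w_t(f_t(x))$, so that $|g|\le1$. Since $f^\star\in[0,1]$, projection onto $[0,1]$ is nonexpansive toward $f^\star$, giving
\[
D_{t+1}\le \Ex[(f_t+\lambda g-f^\star)^2]=D_t-2\lambda\Ex[g(f^\star-f_t)]+\lambda^2\Ex[g^2].
\]
By the tower property, $\Ex[g(X)(f^\star(X)-f_t(X))]=\tau_tR_{c_t,w_t}(f_t)\ge\varepsilon/2$. With $\lambda=\varepsilon/2$,
\[
D_{t+1}\le D_t-\tfrac{\varepsilon^2}{2}+\tfrac{\varepsilon^2}{4}=D_t-\tfrac{\varepsilon^2}{4}.
\]
Since $D_t\ge0$, at most $4D_0/\varepsilon^2\le T$ updates can occur on $E$. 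The algorithm therefore halts within $T$ iterations, and by the termination step its output is multicalibrated.

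I expect the most delicate part to be the adaptivity bookkeeping. The union bound must be taken over a horizon fixed in advance, and the conditional-independence argument must be made rigorous for the random stopping time; conditioning on the filtration generated by past batches handles this. The unequal-size case only changes the constant by replacing $n$ with $\min(n_+,n_u)$.
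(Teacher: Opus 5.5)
Your proposal is correct and follows essentially the same argument as the paper. Both proofs combine three ingredients:
\begin{enumerate}
\item a per-round event, for each fresh-batch search call, that the corrected PU moments deviate by at most $\varepsilon/4$ uniformly over $\calc\times\calw$; for finite classes this comes from Hoeffding plus a union bound, as in the paper's finite-class remark;
\item a union bound of this event over the $T+1$ search calls;
\item the $\varepsilon^2/4$ decrease of the potential $\Ex[(f_t-f^\star)^2]$ under the projected update with $\lambda=\varepsilon/2$.
\end{enumerate}
Your explicit constant $(1+\pi_+)\sqrt{2\log(4|\calc||\calw|(T+1)/\delta)/n}$ and your remark on the random stopping time make some bookkeeping more explicit than the paper does, but the route is the same.
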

Therefore, the algorithm requires \(\calo(T n)=\calo(\tfrac{\log(|\calc||\calw|(T+1)/\delta)}{\varepsilon^{4}})\) samples from each of the positive and unlabeled sources in total. This sample complexity matches that of existing methods~\citep{johnson18a,gopalan2022low}. UU- and Pconf-based WLMC variants also require the same \(O(\varepsilon^{-4})\) sample complexity; see Appendix~\ref{app:proof-boosting} with the formal statement.

\subsection{Platt scaling and temperature scaling}
\label{sec:weak-postprocessing-baselines}
\vspace{-2mm}
Recent empirical work on multicalibration~\citep{hansen2024multicalibration} reports that classical post-processing methods, such as Platt scaling~\citep{platt1999probabilistic} and temperature scaling~\citep{guo2017calibration}, can perform competitively with boosting-based methods that come with theoretical guarantees. 
We therefore implement weak-supervision analogues of these methods. 
Both keep the base predictor fixed and apply a simple transformation to its output logit. 
Let \(\sigma(t)=(1+e^{-t})^{-1}\), \(\text{logit}=\sigma^{-1}\), and \(z_f(x)=\text{logit}(f(x))\). 
Temperature scaling uses \(g_\beta(x)=\sigma(z_f(x)/\beta)\) with parameter \(\beta>0\), while Platt scaling uses \(g_{a,b}(x)=\sigma(a z_f(x)+b)\) with parameters \((a,b)\in\mathbb R^2\). 
With clean labels, both are fitted by minimizing held-out negative log-likelihood. 
Under weak supervision, we keep the same parametric families but replace the clean held-out NLL with corrected weak NLL objectives derived from the risk rewrites in Section~\ref{sec:prelim-wsl}. 
Appendix~\ref{app:weak-postprocessing-objectives} lists the concrete corrected NLL objectives for the three weak-supervision models.

\vspace{-1mm}
\section{Related work}
\label{sec:related}
\vspace{-1.5mm}
Multicalibration was introduced by \citet{johnson18a} as a subgroup fairness notion, and has since been studied through efficient algorithms, boosting, and statistical guarantees, and empirical evaluations
\citep{gopalan2022low,pmlr-v202-globus-harris23a,gopalan2024computationally,hansen2024multicalibration}.
The broader calibration literature studies classical forecasting calibration, neural-network calibration, smooth or IPM-style metrics, trainable calibration objectives, scalable testing, and adaptive evaluation
\citep{Dawid1982,foster1998asymptotic,guo2017calibration,blasiok2023unify,widmann2021calibration,marx2023calibration,hu2024testing,NEURIPS2024_futami}.
Existing multicalibration and decision-calibration style guarantees, however, are developed under clean labels. Our work is complementary: we study how to estimate and correct calibration witnesses when only weak labels are available. 

Our technical starting point is the decontamination view of WSL, which represents weak-supervision models through contamination matrices and rewrites clean risks from weak data~\citep{ChiangSugiyama2025}. 
These rewrites are most commonly used to build empirical objectives for classifier learning. 
We instead apply the same principle to witness-based calibration residuals and post-hoc correction. 
The closest prior calibration-estimation work is PU-ECE~\citep{kiryo2026estimating}, which estimates binned ECE by applying Bayes' rule within each score bin; in our framework, this bin correction is recovered as the special case \(c\equiv 1\) with \(w\) equal to a score-bin indicator. 
Our framework gives a different abstraction: each calibration constraint is a signed residual moment indexed by a witness \((c,w)\), and the weak-observation model enters only through the decontamination operator. 
Our results cover subgroup-dependent witnesses, UU and Pconf supervision, uniform convergence over \(\calc\times \calw\), and post-hoc correction. Together, these results provide a first step toward multicalibration guarantees in WSL.

A related line of work studies uncertainty quantification when clean labels are unavailable or corrupted.
Conformal prediction has been extended to weak supervision and analyzed under label noise, while PU-ECE studies calibration-error estimation in positive-unlabeled learning
\citep{maxime2024,Bat2024,kiryo2026estimating}.
These works show that uncertainty guarantees must account for the label observation process.
Our work follows this motivation, but focuses on multicalibration and post-hoc correction in WSL.

\vspace{-1mm}
\section{Experiments}\label{sec:experiments}
\vspace{-1.5mm}
We empirically study multicalibration under weak supervision on toy and real datasets.
Beyond validating the finite-sample estimators in Section~\ref{sec:wsl-mc}, the real-data experiments are meant to test whether weak labels can be used as a practical calibration resource: can they estimate multicalibration reliably, and can they support post-hoc correction without requiring an additional clean calibration set? To the best of our knowledge, this is the first numerical study of multicalibration on real data in which the calibration metric and post-processing step are both implemented from weak labels. 
All main-text results use finite subgroup and prediction-bin collections $\calc$ and $\calw$; experimental details are deferred to Appendices~\ref{app:toy-data-settings}--\ref{app:calib-hparams}. For brevity, we refer to multicalibration error as {\it MC} in this section. 

\subsection{Toy data: Verification of finite-sample estimation error}
\vspace{-1.5mm}
\label{sec:exp-toy}
\begin{figure*}[t]
    \centering
    \newlength{\mainexpfigheight}
    \setlength{\mainexpfigheight}{0.25\textwidth}
    \begin{minipage}[c]{0.34\textwidth}
        \centering
        \includegraphics[height=\mainexpfigheight,keepaspectratio]{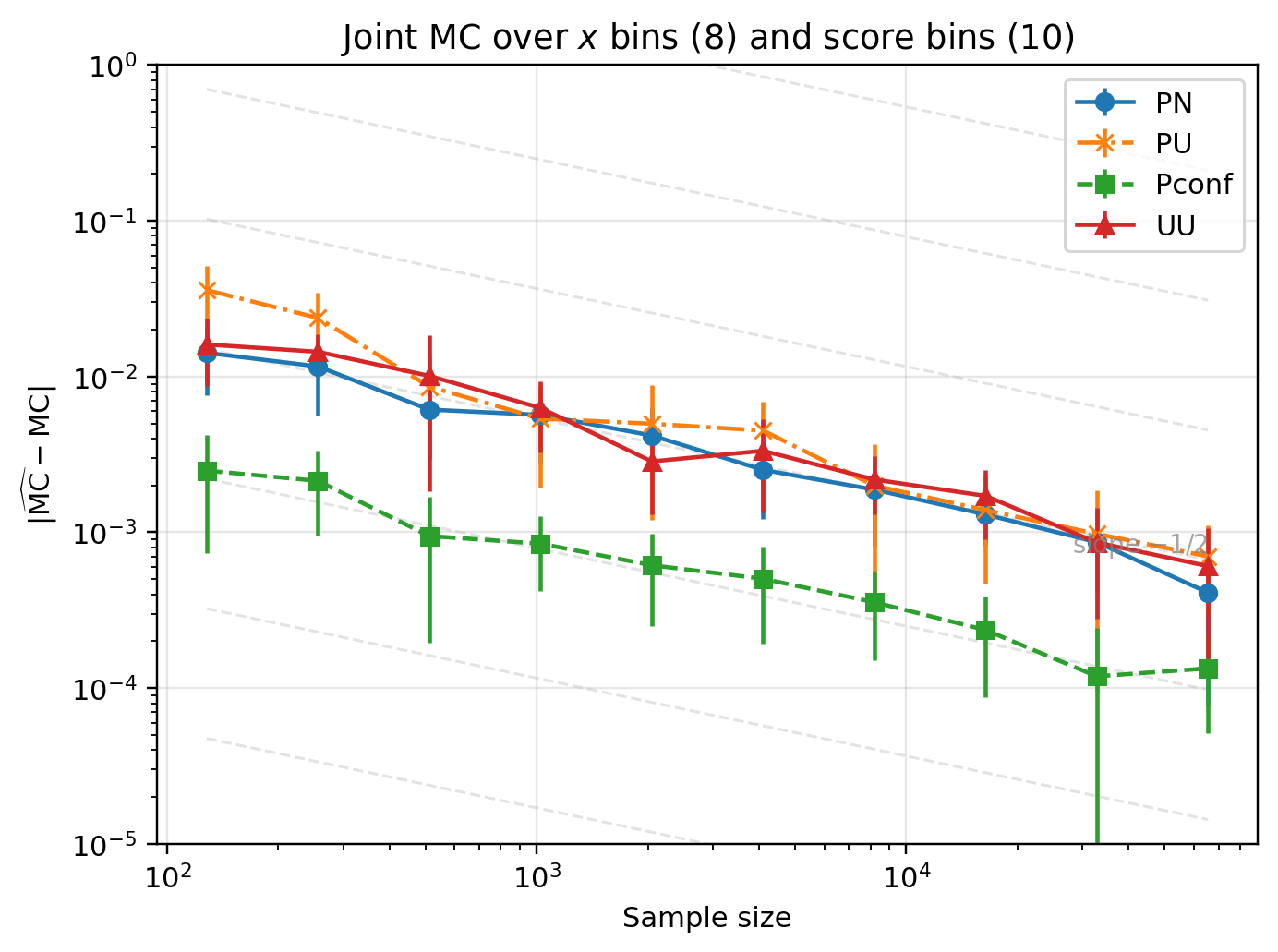}
    \end{minipage}
    \hfill
    \begin{minipage}[c]{0.62\textwidth}
        \centering
        \includegraphics[height=\mainexpfigheight,keepaspectratio]{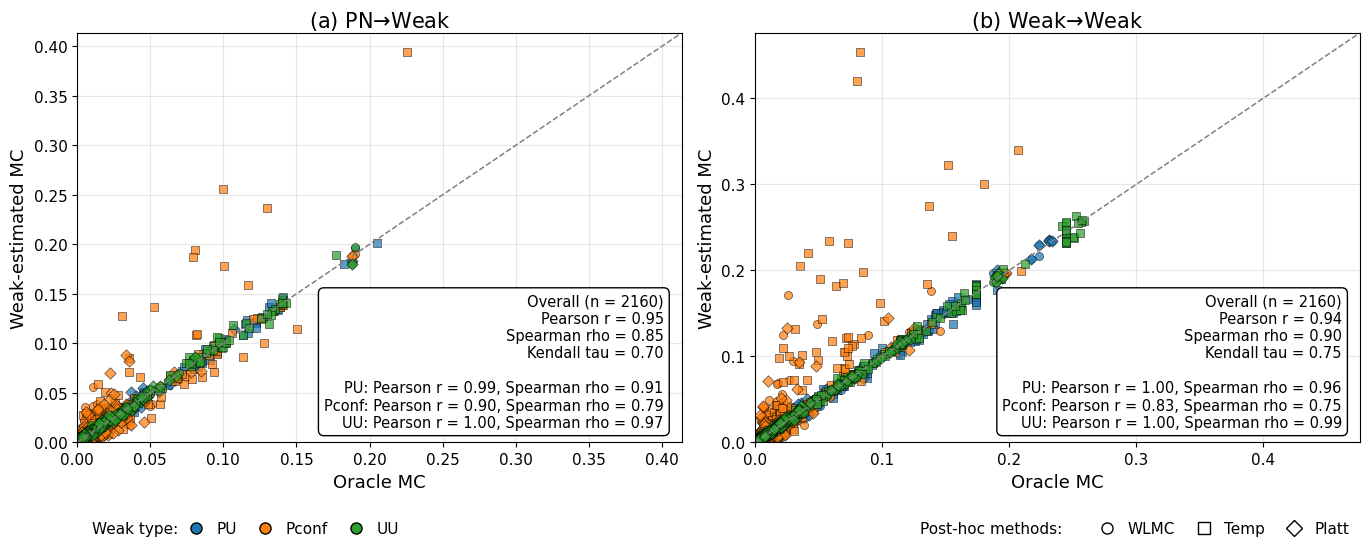}
    \end{minipage}
    \caption{Left: toy-data verification of MC estimators. Middle/right: oracle MC (x-axis; PN labels) versus weak-estimated MC (y-axis) for \((\mathrm{PN},\mathrm{weak})\) and \((\mathrm{weak},\mathrm{weak})\) on tabular data.}
    \label{fig:toy-and-tabular-mc}
    \label{fig:Toy_convergence}
    \label{fig:tabular-oracle-vs-weak-mc}
\end{figure*}

We first evaluate the finite-sample estimation error of multicalibration metrics under the WSL settings in Section~\ref{sec:wsl-mc}. We generate a toy binary classification problem with a known
weak-observation mechanism. We plot the absolute error between each population metric and its estimate as a
function of sample size. This experiment verifies the convergence behavior
predicted by our analysis in Section~\ref{sec:wsl-mc}. All metrics
should exhibit an $n^{-1/2}$ rate on a log--log plot. The empirical curves in the left panel of Figure~\ref{fig:toy-and-tabular-mc} follow this reference rate.

\subsection{Real data experiments}
Following \citet{hansen2024multicalibration}, we study MC under weak supervision on real datasets. 
For a baseline predictor $f_0$, we vary the data for base training and post-hoc correction: $(\mathrm{base},\mathrm{post})
\in
\{(\mathrm{PN},\mathrm{PN}),(\mathrm{PN},\mathrm{weak}),(\mathrm{weak},\mathrm{weak})\}$. PN denotes fully supervised view with complete input--label pairs, while weak denotes a PU, UU, or Pconf. The weak-observation type for base learning and post-processing is matched when both are weak. We call estimates computed with clean labels \emph{oracle estimates (MC)}, and those computed with weak data \emph{weak estimates (MC)}. Weak post-processing uses only the corresponding weak validation criterion for selection; oracle labels are reserved only for final evaluation. Protocol details are deferred to Appendices~\ref{app:dataset-subgroups}--\ref{app:calib-hparams}. Within each benchmark, PN and weak post-processing use the same correction split; tabular data uses correction fraction $0.4$, while CelebA and CivilComments use $0.2$. Thus PN and weak post-processing receive comparable correction data, while the weak setting is more stringent because clean labels are withheld.

We summarize the main empirical results in three findings and defer detailed tables and diagnostics to the appendix. \textbf{F1.} Weak estimates can proxy oracle estimates when the weak-observation model is well specified.  \textbf{F2.} Weak-data post-processing can improve PN-trained predictors, suggesting weak labels can serve as possible lower-clean-label-cost calibration signals.  \textbf{F3.} Weakly trained predictors are less calibrated, but their larger initial violations make weak-data post-processing especially useful.  Together, these findings suggest that weak labels may reduce the need for clean calibration labels while still enabling effective multicalibration correction.

\paragraph{Tabular data.} We used four tabular datasets---ACS Income, UCI Credit Default, HMDA, and MEPS (see Appendix~\ref{app:dataset-subgroups} for details)---and six base predictors (decision tree, random forest, SVM, naive Bayes, logistic regression, MLP). We first examine whether multicalibration error can be estimated reliably from weak labels.
The middle and right panels of Figure~\ref{fig:toy-and-tabular-mc} compare oracle MC and weak MC across tabular datasets, baselines, weak-label types, and post-hoc correction methods. Points near the diagonal indicate accurate weak estimates.
Overall, weak MC is well aligned with oracle MC.
PU and UU points are mostly concentrated near the diagonal, suggesting that weak labels can provide useful MC estimates when the weak-observation model is well matched.
Pconf, however, shows several deviations.
This is likely because Pconf confidence values are generated from conditional probabilities estimated by a separate probabilistic model, so the estimation error can be inherited in the confidence information itself.  In contrast, the toy experiment uses exact confidences from the data-generating model, where Pconf follows the predicted convergence trend.%

We next evaluate whether weak labels are also useful for post-hoc correction. Figure~\ref{fig:tabular-posthoc-mc-bestofthree} plots oracle MC before and after correction after selecting the best post-processing family by the corresponding PN or weak validation criterion. Each panel pools experimental runs: \(240=10\) seeds \(\times\,4\) datasets \(\times\,6\) base predictors for \((\mathrm{PN},\mathrm{PN})\); the \((\mathrm{PN},\mathrm{weak})\) and \((\mathrm{weak},\mathrm{weak})\) panels have \(720\) runs each by additionally pooling over three weak methods. Points below the diagonal indicate improvement, and the number in the upper right reports how many pooled runs reduce oracle MC. The left panel shows that clean $(\mathrm{PN},\mathrm{PN})$ correction does not always improve MC, partly because PN-trained baselines often already have small violations. 
In contrast, many middle-panel $(\mathrm{PN},\mathrm{weak})$ runs improve without additional clean labels, suggesting that weak labels can support post-hoc calibration of already trained predictors. 
The right panel shows the $(\mathrm{weak},\mathrm{weak})$ setting, where $683/720$ runs improve.
Weakly trained predictors typically start with larger MC than PN-trained predictors, as also shown in Appendix~\ref{app:tabular-fixed}, giving post-processing more room to improve.

\begin{figure*}[t]
    \centering
    \includegraphics[width=0.9\textwidth]{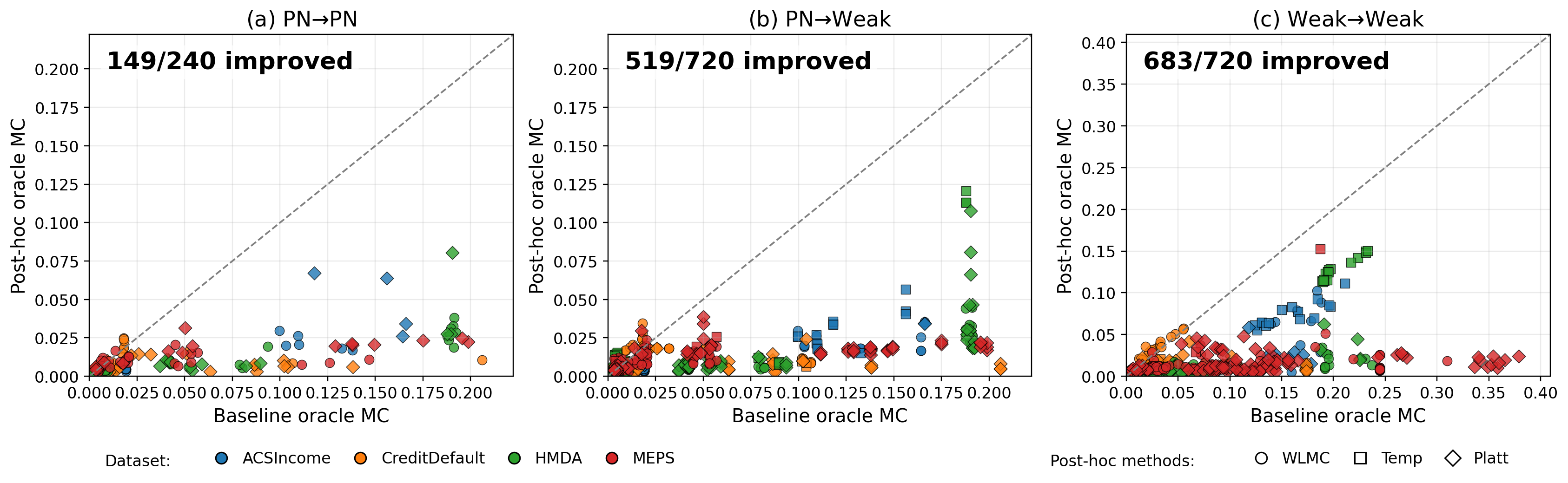}
    \caption{Before/after oracle MC under post-hoc correction for different base models.}
    \label{fig:tabular-posthoc-mc-bestofthree}
\end{figure*}

\paragraph{Image and text experiments with larger models.}
Finally, we test whether the same phenomena appear for larger neural models, using CelebA~\citep{liu2015deep} with a ViT-based image classifier~\citep{dosovitskiy2020image} and CivilComments~\citep{borkan2019nuanced} with a BERT-based text classifier~\citep{sanh2019distilbert}.
For each setting, we freeze the baseline predictor and compare WLMC, temperature scaling, and Platt scaling. Experimental details are reported in Appendices~\ref{app:calib-hparams} and~\ref{app:large-model-supp}; Appendix~\ref{app:large-model-supp} also reports a CelebA-ResNet~\citep{he2016deep} experiment.

Figure~\ref{fig:large-model-heatmap} reports, for each metric, the method with the largest improvement together with the change in accuracy. Weak-label post-processing remains effective beyond tabular data.
ECE and MC improve even in the $(\mathrm{PN},\mathrm{weak})$ setting, and the ECE improvement is sometimes larger than in $(\mathrm{PN},\mathrm{PN})$, despite using weaker supervision for correction.
The $(\mathrm{weak},\mathrm{weak})$ settings also improve consistently, although their baselines tend to have larger ECE and MC compared with PN baselines.

Across methods, Platt scaling is strong, especially for ECE, while WLMC is competitive for MC and performs comparably to Platt scaling in several CelebA settings. This is consistent with clean-label reports that Platt scaling can be a strong calibration baseline; we do not aim to outperform Platt scaling in every case, but instead show that weak-label post-processing, including WLMC, can achieve comparable calibration improvements under weak supervision.
Temperature scaling is less reliable in some CelebA configurations, suggesting that a global temperature may be insufficient for subgroup-level errors.
Importantly, none of the methods causes a large accuracy drop, indicating that weak-label post-processing can improve calibration without sacrificing predictive performance.

\begin{figure*}[t]
    \centering
    \includegraphics[width=0.98\textwidth]{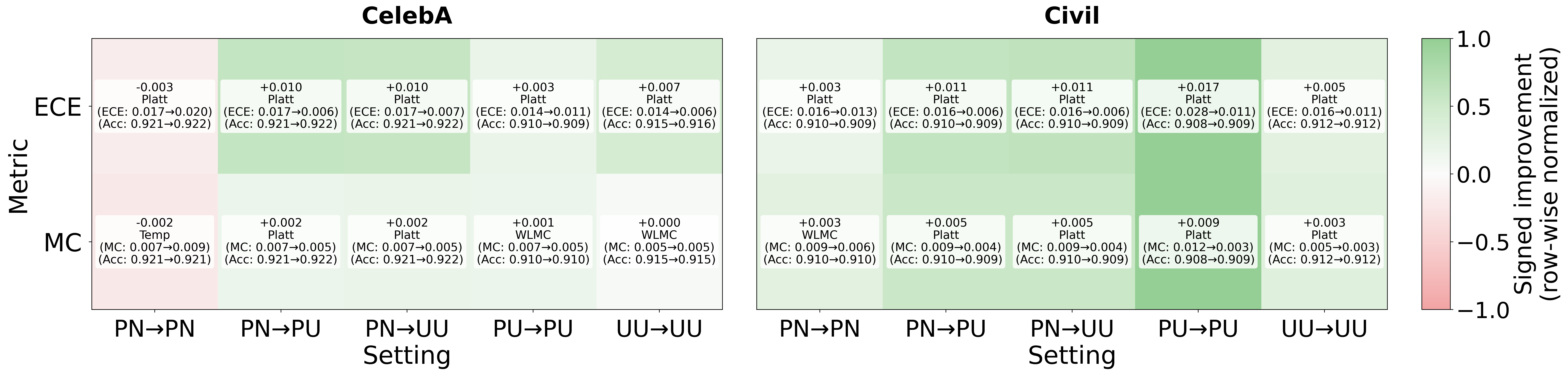}
    \caption{ECE and MC improvements on CelebA and CivilComments.}
    \label{fig:large-model-heatmap}
\end{figure*}

\section{Conclusion and limitations}
\label{sec_conclusion}
In this work, we developed a unified framework for multicalibration under weak supervision by expressing calibration witnesses through the decontamination principle used in WSL. 
This yields corrected estimators for PU, UU, and Pconf data, together with finite-sample guarantees, and enables post-hoc correction using weak labels rather than clean calibration labels. 
Empirically, to the best of our knowledge, this is the first numerical study of weak-label multicalibration on real-data benchmarks with controlled weak-observation views, covering estimation and post-hoc correction. The results show that weak estimates can track oracle multicalibration, weak training tends to increase calibration violations, and weak labels can still improve post-hoc calibration, suggesting reduced reliance on clean calibration labels. 
The main limitations are the use of fresh weak samples at each step of our algorithm, the need for reliable class priors and mixture proportions, and the binary-label focus of our methods. Extending the framework to adaptive data reuse, robustness to estimated nuisance parameters, and multiclass settings is an important direction for future work.

\section*{Acknowledgements}
FF was supported by JST BOOST Program Grant Number JPMJBY24G8. This work was supported by Japan Science and Technology Agency (JST) as part of Adopting Sustainable Partnerships for Innovative Research Ecosystem (ASPIRE), Grant Number JPMJAP25B1. 

\bibliography{main}
\bibliographystyle{icml2024}

\newpage

\appendix

\section{Proofs for the generic rewrite statements}
\label{app:proofs-main}

\subsection{Review of the risk rewrite}
Starting from Eq.~\eqref{eq:loss-vector}, write
\[
R_\ell(f)=\int_{\calx}L_\ell(x;f)^\top\,dB(x).
\]
Substituting the decontamination identity $B=M_{\mathrm{corr}}^{\dagger}\bar P$ from Eq.~\eqref{eq:decontam} gives
\[
R_\ell(f)=\int_{\calx}L_\ell(x;f)^\top M_{\mathrm{corr}}^{\dagger}\,d\bar P(x),
\]
which is exactly Eq.~\eqref{eq:generic-risk-rewrite}.

The same substitution also covers the $x$-dependent case that appears in confidence-based weak supervision. Namely, whenever the clean conditional vector can be recovered through an $x$-dependent decontamination operator
\begin{equation}
\label{eq:operator-decontam}
B(dx)=M_{\mathrm{corr}}^{\dagger}(x)\,\bar P(dx),
\end{equation}
we obtain the pointwise rewrite
\begin{equation}
\label{eq:operator-mc}
R_\ell(f)=\int_{\calx}L_\ell(x;f)^\top M_{\mathrm{corr}}^{\dagger}(x)\,d\bar P(x).
\end{equation}
Thus the algebra of the contamination-matrix case remains unchanged except that the correction coefficients are now functions of $x$. In the terminology of \citet{ChiangSugiyama2025}, this is the regime of confidence-based settings, where observable confidences provide the weights defining the contamination matrix. Section~\ref{sec:examples} treats Pconf as the representative binary example, and Appendix~\ref{app:additional-settings} records other instances such as posterior-confidence and Sconf supervision.

\subsection{Proof of Theorem~\ref{thm:generic-wsl-mc}}
 By definition,
\[
R_{c,w}(f)=\Ex[c(X)w(f(X))(Y-f(X))].
\]
Fix $(c,w)$. By conditioning on \(Y\), we obtain
\begin{align*}
R_{c,w}(f)
&=
\Ex[\phi_f^{c,w}(X,Y)] \\
&=
\Ex[\phi_f^{c,w}(X,Y)\mathbf 1\{Y=1\}]
+
\Ex[\phi_f^{c,w}(X,Y)\mathbf 1\{Y=0\}] \\
&=
P(Y=1)\Ex[\phi_f^{c,w}(X,1)\mid Y=1]
+
P(Y=0)\Ex[\phi_f^{c,w}(X,0)\mid Y=0] \\
&=
\pi_+ \Ex_{P_+}[\phi_f^{c,w}(X,1)]
+
\pi_- \Ex_{P_-}[\phi_f^{c,w}(X,0)].
\end{align*}

For a fixed witness \(\phi_f\), define the corresponding vector of label-resolved terms
\[
L_{c,w}(x;f)
:=
\begin{pmatrix}
\pi_+ \phi_f(x,1)\\
\pi_- \phi_f(x,0)
\end{pmatrix}.
\]
Consequently, the witness moment is
\[
R_{c,w}(f)
=
\int_{\mathcal X} L_{c,w}(x;f)^\top\, dB(x).
\]
Substituting the decontamination identity $B=M_{\mathrm{corr}}^{\dagger}\bar P$ from Eq.~\eqref{eq:decontam} gives
\[
R_{c,w}(f)=\int_{\calx}L_{c,w}(x;f)^\top M_{\mathrm{corr}}^{\dagger}\,d\bar P(x).
\]
Taking the supremum over $(c,w)$ yields Eq.~\eqref{eq:def-mc-wsl}.

\section{Additional weak-supervision settings}
\label{app:additional-settings}
This appendix collects the binary weak-supervision estimators induced by the contamination-based rewrite. Following the request of the main text, we omit multiclass cases and organize the appendix by weak-supervision model. In every case, the derivation follows the same template: first rewrite a fixed witness
\[
R_\phi(f)
:=
\pi_+\Ex_{P_+}[\phi_f(X,1)]
+
\pi_-\Ex_{P_-}[\phi_f(X,0)],
\]
under the observable weak distributions, and then substitute the multicalibration witness
\[
\phi_f^{c,w}(x,y)=c(x)w(f(x))(y-f(x)).
\]
For later use, note that
\[
\phi_f^{c,w}(x,1)=c(x)w(f(x))(1-f(x)),
\qquad
\phi_f^{c,w}(x,0)=-c(x)w(f(x))f(x).
\]

\subsection{MCD scenarios: UU template and special cases}
\label{app:mcd-full}
We begin with the matrix-contamination settings. This subsection gives the derivation underlying the concise UU discussion in Section~\ref{sec:uu}.

\paragraph{Data model.} In the general UU setting we observe two unlabeled samples $X_1^{U_1},\dots,X_{n_1}^{U_1}\sim P_{U_1}$ and $X_1^{U_2},\dots,X_{n_2}^{U_2}\sim P_{U_2}$.  Write $\theta_k=P(Y=1\mid U_k)$, so that $P_{U_k}=\theta_kP_++(1-\theta_k)P_-$.  We use $B=(P_+,P_-)^\top$ and $L_\phi(x)=(\pi_+\phi_f(x,1),\pi_-\phi_f(x,0))^\top$.  Then
\begin{equation}
\label{eq:def-M-UU-app}
\bar P=
\begin{pmatrix}P_{U_1}\\ P_{U_2}\end{pmatrix}
=
M_{\UU}B,
\qquad
M_{\UU}=
\begin{pmatrix}
\theta_1 & 1-\theta_1\\
\theta_2 & 1-\theta_2
\end{pmatrix},
\end{equation}
with $\Delta:=\theta_1-\theta_2\neq 0$. Hence
\begin{equation}
\label{eq:def-M-UU-inv-app}
M_{\UU}^{-1}
=
\frac{1}{\Delta}
\begin{pmatrix}
1-\theta_2 & -(1-\theta_1)\\
-\theta_2 & \theta_1
\end{pmatrix}.
\end{equation}
Multiplying $L_\phi(x)^\top$ by $M_{\UU}^{-1}$ gives the corrected witness row vector
\begin{align}
L_\phi(x)^\top M_{\UU}^{-1}
&=
\frac{1}{\Delta}
\Bigl(
(1-\theta_2)\pi_+\phi_f(x,1)-\theta_2\pi_-\phi_f(x,0),
\nonumber\\[-1mm]
&\hspace{28mm}
-(1-\theta_1)\pi_+\phi_f(x,1)+\theta_1\pi_-\phi_f(x,0)
\Bigr).
\label{eq:Lphi-UU-app}
\end{align}
Substituting Eq.~\eqref{eq:Lphi-UU-app} into the generic rewrite yields
\begin{align}
R_\phi^{\UU}(f)
&=
\Ex_{P_{U_1}}\!\left[
\frac{(1-\theta_2)\pi_+}{\Delta}\phi_f(X,1)
-
\frac{\theta_2\pi_-}{\Delta}\phi_f(X,0)
\right]
\nonumber\\
&\quad+
\Ex_{P_{U_2}}\!\left[
-\frac{(1-\theta_1)\pi_+}{\Delta}\phi_f(X,1)
+
\frac{\theta_1\pi_-}{\Delta}\phi_f(X,0)
\right].
\label{eq:Rphi-UU-app}
\end{align}
For the multicalibration witness, define
\[
\alpha_1(v):=\frac{(1-\theta_2)\pi_+(1-v)+\theta_2\pi_-v}{\Delta},
\qquad
\alpha_2(v):=\frac{-(1-\theta_1)\pi_+(1-v)-\theta_1\pi_-v}{\Delta}.
\]
With this notation,
\begin{align}
R_{c,w}^{\UU}(f)
&=
\sum_{k=1}^2\Ex_{P_{U_k}}\!\left[
\alpha_k(f(X))c(X)w(f(X))
\right],
\label{eq:Rcw-UU-app}
\\
\widehat R_{c,w}^{\UU}(f)
&:=
\sum_{k=1}^2\frac{1}{n_k}\sum_{i=1}^{n_k}
\alpha_k(f(X_i^{U_k}))c(X_i^{U_k})w(f(X_i^{U_k})).
\label{eq:Rhat-UU-app}
\end{align}
The remaining MCD-type estimators are obtained by substituting the appropriate positive-class proportions into Eq.~\eqref{eq:Rphi-UU-app}--\eqref{eq:Rhat-UU-app}. Unless stated otherwise, these MCD-style rewrites assume that the class prior $\pi_+$ and the weak-source mixing parameters appearing in the model are known or have been pre-estimated. We write the resulting fixed-witness identities explicitly below. In the similar/dissimilar settings, $P_{\tilde S}$ and $P_{\tilde D}$ denote the observable one-point marginals induced by similar and dissimilar pair supervision, namely
\[
P_{\tilde S}=\frac{\pi_+^2}{\pi_+^2+\pi_-^2}P_+ + \frac{\pi_-^2}{\pi_+^2+\pi_-^2}P_-,
\qquad
P_{\tilde D}=\frac12 P_+ + \frac12 P_-.
\]

\paragraph{PU learning.}
\emph{Observed data.} We observe a positively labeled sample $X_1^+,\dots,X_{n_+}^+\sim P_+$ and an unlabeled sample $X_1^u,\dots,X_{n_u}^u\sim P_X=\pi_+P_+ + \pi_-P_-$. We assume the class prior $\pi_+$ is known or pre-estimated. This is the UU specialization $(\theta_1,\theta_2)=(1,\pi_+)$.
Set $\theta_1=1$ and $\theta_2=\pi_+$, so that
\[
\bar P=
\begin{pmatrix}P_+\\ P_X\end{pmatrix},
\qquad
M_{\PU}=
\begin{pmatrix}
1 & 0\\
\pi_+ & \pi_-
\end{pmatrix},
\qquad
M_{\PU}^{-1}=
\begin{pmatrix}
1 & 0\\
-\pi_+/\pi_- & 1/\pi_-
\end{pmatrix}.
\]
Hence the PU fixed-witness rewrite is
\begin{align}
R_\phi^{\PU}(f)
&=
\pi_+\Ex_{P_+}[\phi_f(X,1)-\phi_f(X,0)]
+
\Ex_{P_X}[\phi_f(X,0)].
\label{eq:Rphi-PU-app}
\end{align}
Since $\phi_f^{c,w}(x,1)-\phi_f^{c,w}(x,0)=c(x)w(f(x))$, it follows that
\begin{align}
R_{c,w}^{\PU}(f)
&=
\pi_+\Ex_{P_+}[c(X)w(f(X))]
-
\Ex_{P_X}[c(X)w(f(X))f(X)],
\label{eq:Rcw-PU-app}
\\
\widehat R_{c,w}^{\PU}(f)
&:=
\pi_+\frac{1}{n_+}\sum_{i=1}^{n_+}c(X_i^+)w(f(X_i^+))
-
\frac{1}{n_u}\sum_{j=1}^{n_u}c(X_j^u)w(f(X_j^u))f(X_j^u).
\label{eq:Rhat-PU-app}
\end{align}

\paragraph{SU learning.}
\emph{Observed data.} We observe i.i.d. similar pairs $(X_i^s,X_i^{s\prime})$ known to share the same label, together with unlabeled points $X_1^u,\dots,X_{n_u}^u\sim P_X$. Let $X_1^{\tilde S},\dots,X_{n_s}^{\tilde S}\sim P_{\tilde S}$ denote the induced one-point marginal of the similar-pair sample. The rewrite below is written in terms of this observable marginal.
Set $\theta_1=\pi_+^2/(\pi_+^2+\pi_-^2)$ and $\theta_2=\pi_+$. Then
\begin{align}
R_\phi^{\SU}(f)
&=
\Ex_{P_{\tilde S}}\!\left[
\frac{\pi_+^2+\pi_-^2}{\pi_+-\pi_-}\bigl(\phi_f(X,1)-\phi_f(X,0)\bigr)
\right]
\nonumber\\
&\quad+
\Ex_{P_X}\!\left[
-\frac{\pi_-}{\pi_+-\pi_-}\phi_f(X,1)
+
\frac{\pi_+}{\pi_+-\pi_-}\phi_f(X,0)
\right].
\label{eq:Rphi-SU-app}
\end{align}
Substituting $\phi_f^{c,w}$ gives
\begin{align}
R_{c,w}^{\SU}(f)
&=
\frac{\pi_+^2+\pi_-^2}{\pi_+-\pi_-}\Ex_{P_{\tilde S}}[c(X)w(f(X))]
-
\Ex_{P_X}\!\left[c(X)w(f(X))\left(\frac{\pi_-}{\pi_+-\pi_-}+f(X)\right)\right],
\label{eq:Rcw-SU-app}
\\
\widehat R_{c,w}^{\SU}(f)
&:=
\frac{\pi_+^2+\pi_-^2}{\pi_+-\pi_-}\frac{1}{n_s}\sum_{i=1}^{n_s}c(X_i^{\tilde S})w(f(X_i^{\tilde S}))
\nonumber\\
&\quad-
\frac{1}{n_u}\sum_{j=1}^{n_u}c(X_j^u)w(f(X_j^u))\left(\frac{\pi_-}{\pi_+-\pi_-}+f(X_j^u)\right).
\label{eq:Rhat-SU-app}
\end{align}

\paragraph{DU learning.}
\emph{Observed data.} We observe i.i.d. dissimilar pairs $(X_i^d,X_i^{d\prime})$ known to have different labels, together with unlabeled points $X_1^u,\dots,X_{n_u}^u\sim P_X$. Let $X_1^{\tilde D},\dots,X_{n_d}^{\tilde D}\sim P_{\tilde D}$ denote the induced one-point marginal of the dissimilar-pair sample. The rewrite below is written in terms of this observable marginal.
Set $\theta_1=1/2$ and $\theta_2=\pi_+$. Then
\begin{align}
R_\phi^{\DU}(f)
&=
\Ex_{P_{\tilde D}}\!\left[
\frac{2\pi_+\pi_-}{\pi_--\pi_+}\bigl(\phi_f(X,1)-\phi_f(X,0)\bigr)
\right]
\nonumber\\
&\quad+
\Ex_{P_X}\!\left[
-\frac{\pi_+}{\pi_--\pi_+}\phi_f(X,1)
+
\frac{\pi_-}{\pi_--\pi_+}\phi_f(X,0)
\right].
\label{eq:Rphi-DU-app}
\end{align}
Therefore the DU witness moment is
\begin{align}
R_{c,w}^{\DU}(f)
&=
-\frac{2\pi_+\pi_-}{\pi_+-\pi_-}\Ex_{P_{\tilde D}}[c(X)w(f(X))]
+
\Ex_{P_X}\!\left[c(X)w(f(X))\left(\frac{\pi_+}{\pi_+-\pi_-}-f(X)\right)\right],
\label{eq:Rcw-DU-app}
\\
\widehat R_{c,w}^{\DU}(f)
&:=
-\frac{2\pi_+\pi_-}{\pi_+-\pi_-}\frac{1}{n_d}\sum_{i=1}^{n_d}c(X_i^{\tilde D})w(f(X_i^{\tilde D}))
\nonumber\\
&\quad+
\frac{1}{n_u}\sum_{j=1}^{n_u}c(X_j^u)w(f(X_j^u))\left(\frac{\pi_+}{\pi_+-\pi_-}-f(X_j^u)\right).
\label{eq:Rhat-DU-app}
\end{align}

\paragraph{SD learning.}
\emph{Observed data.} We observe both i.i.d. similar pairs $(X_i^s,X_i^{s\prime})$ and i.i.d. dissimilar pairs $(X_j^d,X_j^{d\prime})$. Let $X_1^{\tilde S},\dots,X_{n_s}^{\tilde S}\sim P_{\tilde S}$ and $X_1^{\tilde D},\dots,X_{n_d}^{\tilde D}\sim P_{\tilde D}$ denote the corresponding one-point marginals used below.
Set $\theta_1=\pi_+^2/(\pi_+^2+\pi_-^2)$ and $\theta_2=1/2$. Then
\begin{align}
R_\phi^{\SD}(f)
&=
(\pi_+^2+\pi_-^2)\Ex_{P_{\tilde S}}\!\left[
\frac{\pi_+}{\pi_+-\pi_-}\phi_f(X,1)
-
\frac{\pi_-}{\pi_+-\pi_-}\phi_f(X,0)
\right]
\nonumber\\
&\quad+
2\pi_+\pi_-\Ex_{P_{\tilde D}}\!\left[
-\frac{\pi_-}{\pi_+-\pi_-}\phi_f(X,1)
+
\frac{\pi_+}{\pi_+-\pi_-}\phi_f(X,0)
\right].
\label{eq:Rphi-SD-app}
\end{align}
Therefore the SD witness moment is
\begin{align}
R_{c,w}^{\SD}(f)
&=
(\pi_+^2+\pi_-^2)\Ex_{P_{\tilde S}}\!\left[c(X)w(f(X))\left(\frac{\pi_+}{\pi_+-\pi_-}-f(X)\right)\right]
\nonumber\\
&\quad-
2\pi_+\pi_-\Ex_{P_{\tilde D}}\!\left[c(X)w(f(X))\left(\frac{\pi_-}{\pi_+-\pi_-}+f(X)\right)\right],
\label{eq:Rcw-SD-app}
\\
\widehat R_{c,w}^{\SD}(f)
&:=
(\pi_+^2+\pi_-^2)\frac{1}{n_s}\sum_{i=1}^{n_s}c(X_i^{\tilde S})w(f(X_i^{\tilde S}))\left(\frac{\pi_+}{\pi_+-\pi_-}-f(X_i^{\tilde S})\right)
\nonumber\\
&\quad-
2\pi_+\pi_-\frac{1}{n_d}\sum_{j=1}^{n_d}c(X_j^{\tilde D})w(f(X_j^{\tilde D}))\left(\frac{\pi_-}{\pi_+-\pi_-}+f(X_j^{\tilde D})\right).
\label{eq:Rhat-SD-app}
\end{align}

\subsection{Pairwise comparison learning}
\label{app:pcomp-full}
\emph{Observed data.} We observe comparison pairs carrying a superior/inferior outcome rather than a clean binary label. Let $X_1^{\mathrm{sup}},\dots,X_{n_{\mathrm{sup}}}^{\mathrm{sup}}\sim P_{\mathrm{Sup}}$ and $X_1^{\mathrm{inf}},\dots,X_{n_{\mathrm{inf}}}^{\mathrm{inf}}\sim P_{\mathrm{Inf}}$ denote the induced one-point marginals associated with superior and inferior events, respectively. The fixed-witness rewrite is written in terms of these observable marginals. The unified-risk rewrite is expressed through the corrected losses
\[
\bar\ell_{\mathrm{Sup}}=\ell_1-\pi_+\ell_0,
\qquad
\bar\ell_{\mathrm{Inf}}=-\pi_-\ell_1+\ell_0,
\]
where $\Ex_{\mathrm{Sup}}$ and $\Ex_{\mathrm{Inf}}$ denote expectations over $P_{\mathrm{Sup}}$ and $P_{\mathrm{Inf}}$. Replacing $(\ell_1,\ell_0)$ by $(\phi_f(\cdot,1),\phi_f(\cdot,0))$ yields
\begin{align}
R_\phi^{\Pcomp}(f)
&=
\Ex_{\mathrm{Sup}}[\phi_f(X,1)-\pi_+\phi_f(X,0)]
+
\Ex_{\mathrm{Inf}}[-\pi_-\phi_f(X,1)+\phi_f(X,0)].
\label{eq:Rphi-Pcomp-app}
\end{align}
Thus the comparison-learning witness moment is
\begin{align}
R_{c,w}^{\Pcomp}(f)
&=
\Ex_{\mathrm{Sup}}[c(X)w(f(X))(1-\pi_-f(X))]
-
\Ex_{\mathrm{Inf}}[c(X)w(f(X))(\pi_-+\pi_+f(X))],
\label{eq:Rcw-Pcomp-app}
\\
\widehat R_{c,w}^{\Pcomp}(f)
&:=
\frac{1}{n_{\mathrm{sup}}}\sum_{i=1}^{n_{\mathrm{sup}}}c(X_i^{\mathrm{sup}})w(f(X_i^{\mathrm{sup}}))(1-\pi_-f(X_i^{\mathrm{sup}}))
\nonumber\\
&\quad-
\frac{1}{n_{\mathrm{inf}}}\sum_{j=1}^{n_{\mathrm{inf}}}c(X_j^{\mathrm{inf}})w(f(X_j^{\mathrm{inf}}))(\pi_-+\pi_+f(X_j^{\mathrm{inf}})).
\label{eq:Rhat-Pcomp-app}
\end{align}

\subsection{Confidence-based binary settings}
\label{app:confidence-full}
We next collect binary settings whose decontamination rule depends on observed confidences.

\paragraph{Posterior-confidence learning.}
\emph{Observed data.} We observe i.i.d. pairs $(X_i,r_i)$ with $X_i\sim P_X$ and $r_i=r(X_i)=P(Y=1\mid X_i)$. In contrast to Pconf, the sample is drawn from the population marginal rather than from $P_+$. Conditioning on $X$ directly gives
\begin{align}
R_\phi^{\mathrm{PC}}(f)
&=
\Ex_X\big[r(X)\phi_f(X,1)+(1-r(X))\phi_f(X,0)\big].
\label{eq:Rphi-PC-app}
\end{align}
Specializing to $\phi_f^{c,w}$ yields
\begin{align}
R_{c,w}^{\mathrm{PC}}(f)
&=
\Ex_X\big[c(X)w(f(X))(r(X)-f(X))\big],
\label{eq:Rcw-PC-app}
\\
\widehat R_{c,w}^{\mathrm{PC}}(f)
&:=
\frac{1}{n}\sum_{i=1}^n c(X_i)w(f(X_i))(r_i-f(X_i)).
\label{eq:Rhat-PC-app}
\end{align}
Thus this supervision model replaces the inaccessible Bernoulli outcome $Y$ by the observed posterior mean $r(X)$.

\paragraph{Pconf learning.}
\emph{Observed data.} We observe i.i.d. positive-confidence pairs $(X_i,r_i)$ with $X_i\sim P_+$ and $r_i=r(X_i)=P(Y=1\mid X_i)$. We also assume the class prior $\pi_+$ is known or pre-estimated, because the exact witness moment keeps the original population scaling. The exact rewrite requires $P_-\ll P_+$. In the $B=(P_+,P_-)^\top$ notation of Section~\ref{sec:prelim-wsl}, this corresponds to the $x$-dependent decontamination operator $M_{\Pconf}^{\dagger}(x)=\mathrm{diag}\{1,(\pi_+/\pi_-)(1-r(x))/r(x)\}$ acting on the duplicated positive source $\bar P_{\Pconf}=(P_+,P_+)^\top$. Bayes' rule implies the density-ratio identity
\begin{equation}
\label{eq:pconf-density-app}
\pi_-P_-(dx)
=
\pi_+\frac{1-r(x)}{r(x)}P_+(dx).
\end{equation}
Substituting Eq.~\eqref{eq:pconf-density-app} into the label-resolved expansion of $R_\phi(f)$ gives
\begin{align}
R_\phi^{\Pconf}(f)
&=
\pi_+\Ex_{P_+}\!\left[\phi_f(X,1)+\frac{1-r(X)}{r(X)}\phi_f(X,0)\right].
\label{eq:Rphi-Pconf-app}
\end{align}
For the multicalibration witness,
\begin{align}
R_{c,w}^{\Pconf}(f)
&=
\pi_+\Ex_{P_+}\!\left[c(X)w(f(X))\left(1-\frac{f(X)}{r(X)}\right)\right],
\label{eq:Rcw-Pconf-app}
\\
\widehat R_{c,w}^{\Pconf}(f)
&:=
\frac{\pi_+}{n}\sum_{i=1}^n c(X_i)w(f(X_i))\left(1-\frac{f(X_i)}{r_i}\right).
\label{eq:Rhat-Pconf-app}
\end{align}
This is the exact fixed-witness estimator used in the main text.

\paragraph{Sconf learning.}
\emph{Observed data.} We observe i.i.d. triples
\[
(X_1,X_1',r_1),\dots,(X_{n_{\mathrm{sc}}},X_{n_{\mathrm{sc}}}',r_{n_{\mathrm{sc}}}),
\qquad
(X_i,X_i')\sim P_{XX'},
\qquad
r_i=r(X_i,X_i'):=P(Y=1\mid X_i,X_i').
\]
As in the formulas below, the class prior $\pi_+$ is assumed known or pre-estimated. In contrast to Pconf, the side information is attached to a pair rather than to a single positive example. The unified-risk rewrite gives
\begin{align}
R_\phi^{\Sconf}(f)
&=
\Ex_{(X,X',r)}\Biggl[
\frac{r-\pi_-}{\pi_+-\pi_-}\frac{\phi_f(X,1)+\phi_f(X',1)}{2}
+
\frac{\pi_+-r}{\pi_+-\pi_-}\frac{\phi_f(X,0)+\phi_f(X',0)}{2}
\Biggr].
\label{eq:Rphi-Sconf-app}
\end{align}
Substituting $\phi_f^{c,w}$ yields
\begin{align}
R_{c,w}^{\Sconf}(f)
&=
\Ex_{(X,X',r)}\Biggl[
\frac{r-\pi_-}{\pi_+-\pi_-}
\frac{c(X)w(f(X))(1-f(X))+c(X')w(f(X'))(1-f(X'))}{2}
\nonumber\\
&\hspace{1.2cm}-
\frac{\pi_+-r}{\pi_+-\pi_-}
\frac{c(X)w(f(X))f(X)+c(X')w(f(X'))f(X')}{2}
\Biggr],
\label{eq:Rcw-Sconf-app}
\\
\widehat R_{c,w}^{\Sconf}(f)
&:=
\frac{1}{n_{\mathrm{sc}}}\sum_{i=1}^{n_{\mathrm{sc}}}
\Biggl[
\frac{r_i-\pi_-}{\pi_+-\pi_-}
\frac{c(X_i)w(f(X_i))(1-f(X_i))+c(X_i')w(f(X_i'))(1-f(X_i'))}{2}
\nonumber\\
&\hspace{1.2cm}-
\frac{\pi_+-r_i}{\pi_+-\pi_-}
\frac{c(X_i)w(f(X_i))f(X_i)+c(X_i')w(f(X_i'))f(X_i')}{2}
\Biggr].
\label{eq:Rhat-Sconf-app}
\end{align}

\paragraph{Summary.}
The message of this appendix is that every binary weak-supervision model gives a fixed-witness estimator by exactly the same substitution principle: rewrite $R_\phi(f)$ using the decontamination rule of the model, then insert the witness $\phi_f^{c,w}$. The resulting empirical multicalibration criterion is always
\[
\widehat{\MC}_{\calc,\calw}^{\,\mathrm{WSL}}(f)
:=
\sup_{c\in\calc,\,w\in\calw}
\bigl|\widehat R_{c,w}^{\mathrm{WSL}}(f)\bigr|,
\]
with $\widehat R_{c,w}^{\mathrm{WSL}}(f)$ chosen from Eq.~\eqref{eq:Rhat-UU-app}, Eq.~\eqref{eq:Rhat-PU-app}, \eqref{eq:Rhat-SU-app}, \eqref{eq:Rhat-DU-app}, \eqref{eq:Rhat-SD-app}, \eqref{eq:Rhat-Pcomp-app}, \eqref{eq:Rhat-PC-app}, \eqref{eq:Rhat-Pconf-app}, or \eqref{eq:Rhat-Sconf-app} according to the observation model.

\section{Proofs of the uniform convergence}
\label{app:gen}
Throughout this appendix, \(f\) is fixed independently of the weak sample; We write
\[
\mathfrak{R}_n(\calc;P):=\Ex_{S\sim P^n,\,\sigma}\left[\sup_{c\in\calc}\frac1n\sum_{i=1}^n \sigma_i c(X_i)\right]
\]
for the expected Rademacher complexity of \(\calc\) under \(P\), where \(\sigma_i\in\{-1,+1\}\) are independent Rademacher variables.

\subsection{Uniform convergence for UU}
For the UU estimator in Eq.~\eqref{eq:Rhat-UU}, define
\[
\MC^{\UU}_{\calc,\calw}(f)
:=
\sup_{c\in\calc,\,w\in\calw}|R_{c,w}^{\UU}(f)|,
\qquad
\widehat{\MC}^{\UU}_{\calc,\calw}(f)
:=
\sup_{c\in\calc,\,w\in\calw}|\widehat R_{c,w}^{\UU}(f)|.
\]
Define the source envelopes
\[
A_1:=\sup_{v\in[0,1]}|\alpha_1(v)|,
\qquad
A_2:=\sup_{v\in[0,1]}|\alpha_2(v)|.
\]

\begin{proposition}[Uniform convergence for the UU weak estimate]
\label{prop:uu-gen}
Assume \(|c(x)|\le 1\) for all \(c\in\calc\) and \(|w(v)|\le 1\) for all \(w\in\calw\). Then for any \(\eta\in(0,1]\), with probability at least \(1-\delta\),
\[
\left|\widehat{\MC}^{\UU}_{\calc,\calw}(f)-\MC^{\UU}_{\calc,\calw}(f)\right|
\le
2\sum_{k=1}^2 A_k\left(\mathfrak{R}_{n_k}(\calc;P_{U_k})+\eta+\sqrt{\frac{2\log N_\infty(\eta,\calw)+\log(4/\delta)}{n_k}}\right).
\]
\end{proposition}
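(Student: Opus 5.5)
The proof reduces the difference of suprema to a uniform deviation and then splits it by source. Since $\bigl|\sup|a|-\sup|b|\bigr|\le\sup|a-b|$, we have
\[
\bigl|\widehat{\MC}^{\UU}_{\calc,\calw}(f)-\MC^{\UU}_{\calc,\calw}(f)\bigr|\le\sup_{c,w}\bigl|\widehat R^{\UU}_{c,w}(f)-R^{\UU}_{c,w}(f)\bigr|.
\]
By the form of Eqs.~\eqref{eq:Rcw-UU-main} and \eqref{eq:Rhat-UU}, the right-hand side is at most $\sum_{k=1}^2 Z_k$, where
\[
Z_k:=\sup_{c,w}\Bigl|\frac1{n_k}\sum_{i=1}^{n_k}g_{c,w}^k(X_i^{U_k})-\Ex_{P_{U_k}}[g_{c,w}^k]\Bigr|,\qquad g^k_{c,w}(x):=\alpha_k(f(x))c(x)w(f(x)).
\]
Each $g^k_{c,w}$ takes values in $[-A_k,A_k]$. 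The two sources are independent, so it suffices to bound each $Z_k$ with probability $1-\delta/2$ and take a union bound.

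To handle $\calw$, fix an $\eta$-cover $\{w_1,\dots,w_N\}$ of $\calw$ in sup norm, with $N=N_\infty(\eta,\calw)$. For any $w$, pick $w_j$ with $\|w-w_j\|_\infty\le\eta$. Then $|g^k_{c,w}-g^k_{c,w_j}|\le A_k\eta$ pointwise, since $|c|\le1$ and $|\alpha_k|\le A_k$. Replacing $w$ by $w_j$ therefore costs at most $2A_k\eta$ in $Z_k$, which is the $2A_k\eta$ term. Because $f$ is fixed independently of the sample, each $x\mapsto\alpha_k(f(x))w_j(f(x))$ is a fixed function $h_j$ with $|h_j|\le A_k$. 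What remains is, for each $j$, the uniform deviation over the class $\{c\,h_j:c\in\calc\}$.

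For fixed $j$, McDiarmid's inequality applies with bounded differences $2A_k/n_k$. Standard symmetrization bounds the expected supremum by $2\mathfrak R_{n_k}(\{c h_j\};P_{U_k})$. A union bound over the $N$ cover elements and the two sources then gives a deviation term of order $A_k\sqrt{(\log N+\log(4/\delta))/n_k}$. The stated form with $2\log N$ is looser, so it can absorb the constants.

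The main obstacle is converting $\mathfrak R_{n_k}(\{c\,h_j\})$ into $A_k\mathfrak R_{n_k}(\calc)$. This is a multiplication by a fixed, possibly sign-changing, bounded function, not a composition with a Lipschitz map. I would first argue conditionally on the sample, treating $h_j(X_i)$ as fixed scalars in $[-A_k,A_k]$. The map $c(X_i)\mapsto h_j(X_i)c(X_i)$ is then coordinatewise $A_k$-Lipschitz, so Talagrand's contraction lemma, in its coordinatewise (Ledoux--Talagrand) form, yields $\Ex_\sigma\sup_c\sum\sigma_ih_j(X_i)c(X_i)\le A_k\Ex_\sigma\sup_c\sum\sigma_ic(X_i)$, possibly with a factor $2$ depending on the version used; the leading $2$ in the bound covers it. Collecting the cover error, the Rademacher term and the concentration term for $k=1,2$ then gives the displayed inequality.
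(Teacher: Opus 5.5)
Your proof is correct and follows the paper's skeleton step for step. The shared steps are:
\begin{itemize}
\item the reduction $\bigl|\sup|\widehat R|-\sup|R|\bigr|\le\sup|\widehat R-R|$;
\item the source-wise split into $Z_1+Z_2$;
\item a sup-norm $\eta$-net of $\calw$ costing $2A_k\eta$ per source;
\item the contraction $\widehat{\mathfrak R}_{n_k}(\{c\,h_j:c\in\calc\})\le A_k\widehat{\mathfrak R}_{n_k}(\calc)$ via the coordinatewise $A_k$-Lipschitz maps $t\mapsto h_j(x_i)t$, which are the paper's $\psi_i$.
\end{itemize}

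The one genuine difference is where you pay for the net. The paper runs a single symmetrization/McDiarmid step on the union class $\mathcal G_k(f)=\bigcup_{w\in\widetilde\calw}\mathcal G_{k,w}(f)$. It charges the net inside the Rademacher complexity through the sub-Gaussian maximal inequality $\widehat{\mathfrak R}_m(\bigcup_{\ell\le M}\mathcal F_\ell)\le\max_\ell\widehat{\mathfrak R}_m(\mathcal F_\ell)+\lambda\sqrt{2\log M/m}$. You instead charge it to the confidence level by a union bound over the $N$ net points.

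Your route is more elementary and fits the displayed constants directly. With failure probability $\delta/(4N)$ per event, your deviation is $A_k\sqrt{2\log(4N/\delta)/n_k}\le 2A_k\sqrt{(2\log N+\log(4/\delta))/n_k}$. The paper's combination $2A_k\sqrt{2\log N/n_k}+A_k\sqrt{2\log(4/\delta)/n_k}$ exceeds that unless $16\log N\le\log(4/\delta)$, which is why it has to ``absorb universal numerical constants''. In turn, the paper's form needs a number of concentration events that does not grow with $N$. It also keeps the net cost at the level of Rademacher averages, which is the single-scale analogue of the chaining bound the paper later uses for SmoothCE.

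Two details need tightening.

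\textbf{Two-sided deviation.} $\mathfrak R_n$ is defined without an absolute value. So symmetrization does not bound $\Ex\sup_c\bigl|\frac1{n_k}\sum_i c(X_i)h_j(X_i)-\Ex_{P_{U_k}}[c\,h_j]\bigr|$ by $2\mathfrak R_{n_k}(\{c\,h_j\};P_{U_k})$: a singleton class has zero Rademacher complexity but deviation of order $n_k^{-1/2}$. Treat the two one-sided suprema separately. Each has expectation at most $2\mathfrak R_{n_k}(\{c\,h_j\};P_{U_k})$, since $\sigma$ and $-\sigma$ have the same law. Then include the sides in the union bound, i.e.\ failure probability $\delta/(4N)$ per source, side and net point; that is where the factor $4$ comes from.

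\textbf{Contraction constant.} Your fallback that a factor $2$ in the contraction would be absorbed by the leading $2$ does not work. That $2$ is already spent on symmetrization, so a factor-$2$ contraction would give $4A_k\mathfrak R_{n_k}(\calc;P_{U_k})$ and violate the stated bound. You need the constant-one form, which is what the paper uses, and it does hold here. Conditionally on the sample, $a\mapsto\Ex_\sigma\sup_{c\in\calc}\frac1{n_k}\sum_i\sigma_ia_ic(x_i)$ is convex and even in each coordinate $a_i$ (flip $\sigma_i$), hence nondecreasing in each $|a_i|$. So over $a\in[-A_k,A_k]^{n_k}$ it is at most its value at $a\equiv A_k$, namely $A_k\widehat{\mathfrak R}_{n_k}(\calc)$.
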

\begin{proof}[Proof of Proposition~\ref{prop:uu-gen}]
Fix \(\eta\in(0,1]\), and let
\(\widetilde{\calw}\subseteq\calw\) be a minimal \(\eta\)-net of
\(\calw\) under \(\|\cdot\|_\infty\), so that
\[
|\widetilde{\calw}|=N_\infty(\eta,\calw).
\]
For \(k\in\{1,2\}\) and \(w\in\widetilde{\calw}\), define the
source-specific classes
\[
\mathcal G_{k,w}(f)
:=
\left\{
x\mapsto \alpha_k(f(x))w(f(x))c(x): c\in\calc
\right\},
\qquad
\mathcal G_k(f)
:=
\bigcup_{w\in\widetilde{\calw}}\mathcal G_{k,w}(f).
\]
By construction, every function in \(\mathcal G_k(f)\) is bounded in
absolute value by \(A_k\).

By symmetrization and bounded-differences concentration, with
probability at least \(1-\delta\), simultaneously for \(k=1,2\),
\[
\sup_{g\in\mathcal G_k(f)}
\left|
\frac1{n_k}\sum_{i=1}^{n_k}g(X_i^{U_k})
-
\Ex_{P_{U_k}}[g(X)]
\right|
\le
2\mathfrak{R}_{n_k}(\mathcal G_k(f);P_{U_k})
+
A_k\sqrt{\frac{2\log(4/\delta)}{n_k}}.
\]

We next bound the Rademacher complexity of \(\mathcal G_k(f)\) by the
complexity of \(\calc\) and the size of the weight net.  For a fixed
\(w\in\widetilde{\calw}\) and a fixed sample
\(x_1,\ldots,x_{n_k}\), the maps
\[
\psi_i(t):=\alpha_k(f(x_i))w(f(x_i))t
\]
satisfy \(\psi_i(0)=0\) and are \(A_k\)-Lipschitz.  Therefore the
contraction inequality gives
\[
\widehat{\mathfrak{R}}_{n_k}(\mathcal G_{k,w}(f))
\le
A_k\widehat{\mathfrak{R}}_{n_k}(\calc).
\]

We use the following finite-union consequence of the sub-Gaussian
maximal inequality.  Conditional on a sample \(x_1,\ldots,x_m\), let
\(\mathcal F=\bigcup_{\ell=1}^M\mathcal F_\ell\) and suppose that
\(|h(x_i)|\le \lambda\) for all \(h\in\mathcal F\) and all \(i\).
Define
\[
Z_\ell
:=
\sup_{h\in\mathcal F_\ell}
\frac1m\sum_{i=1}^m\sigma_i h(x_i),
\qquad
\ell=1,\ldots,M .
\]
Then \(Z_\ell-\Ex_\sigma Z_\ell\) is
\(\lambda/\sqrt m\)-sub-Gaussian by the bounded-difference inequality.
Therefore, the sub-Gaussian maximal inequality gives
\[
\widehat{\mathfrak R}_m(\mathcal F)
=
\Ex_\sigma\max_{\ell\le M} Z_\ell
\le
\max_{\ell\le M}\widehat{\mathfrak R}_m(\mathcal F_\ell)
+
\lambda\sqrt{\frac{2\log M}{m}} .
\]
Applying this with
\[
\mathcal F=\mathcal G_k(f),
\qquad
\mathcal F_\ell=\mathcal G_{k,w_\ell}(f),
\qquad
M=|\widetilde{\calw}|=N_\infty(\eta,\calw),
\qquad
\lambda=A_k,
\]
and then taking expectations over the \(U_k\)-sample, yields
\[
\mathfrak{R}_{n_k}(\mathcal G_k(f);P_{U_k})
\le
A_k\mathfrak{R}_{n_k}(\calc;P_{U_k})
+
A_k\sqrt{\frac{2\log N_\infty(\eta,\calw)}{n_k}}.
\]
Combining the preceding displays gives
\[
\sup_{c\in\calc,\,w\in\widetilde{\calw}}
\left|
\widehat R_{c,w}^{\UU}(f)-R_{c,w}^{\UU}(f)
\right|
\le
2\sum_{k=1}^2 A_k
\left(
\mathfrak{R}_{n_k}(\calc;P_{U_k})
+
\sqrt{\frac{2\log N_\infty(\eta,\calw)+\log(4/\delta)}{n_k}}
\right),
\]
after absorbing universal numerical constants.

It remains to pass from the net \(\widetilde{\calw}\) to the full class
\(\calw\).  For any \(w\in\calw\), choose
\(\widetilde w\in\widetilde{\calw}\) such that
\(\|w-\widetilde w\|_\infty\le\eta\).  Then, for each
\(k\in\{1,2\}\), uniformly over \(c\in\calc\),
\[
\left|
\Ex_{P_{U_k}}
\left[
\alpha_k(f(X))c(X)
\{w(f(X))-\widetilde w(f(X))\}
\right]
\right|
\le
A_k\eta,
\]
and the same bound holds for the empirical average,
\[
\left|
\frac1{n_k}\sum_{i=1}^{n_k}
\alpha_k(f(X_i^{U_k}))c(X_i^{U_k})
\{w(f(X_i^{U_k}))-\widetilde w(f(X_i^{U_k}))\}
\right|
\le
A_k\eta.
\]
Thus the approximation error from replacing \(w\) by
\(\widetilde w\) is at most \(2A_k\eta\) for source \(k\).  Therefore,
\[
\sup_{c\in\calc,\,w\in\calw}
\left|
\widehat R_{c,w}^{\UU}(f)-R_{c,w}^{\UU}(f)
\right|
\le
2\sum_{k=1}^2 A_k
\left(
\mathfrak{R}_{n_k}(\calc;P_{U_k})
+
\eta
+
\sqrt{\frac{2\log N_\infty(\eta,\calw)+\log(4/\delta)}{n_k}}
\right).
\]
Finally,
\[
\left|
\widehat{\MC}^{\UU}_{\calc,\calw}(f)
-
\MC^{\UU}_{\calc,\calw}(f)
\right|
\le
\sup_{c\in\calc,\,w\in\calw}
\left|
\widehat R_{c,w}^{\UU}(f)-R_{c,w}^{\UU}(f)
\right|,
\]
which completes the proof.
\end{proof}

\subsection{Uniform convergence for PU}
\label{app:pu-gen}
This section proves Proposition~\ref{prop:pu-gen}. The proof is written directly for the two PU sources, rather than by introducing the composed class \(x\mapsto c(x)w(f(x))\).

\begin{proof}[Proof of Proposition~\ref{prop:pu-gen}]
Fix \(\eta\in(0,1]\), and let \(\widetilde{\calw}\subseteq\calw\) be a minimal \(\eta\)-net of \(\calw\) under \(\|\cdot\|_\infty\), so that
\[
|\widetilde{\calw}|=N_\infty(\eta,\calw).
\]
For \(w\in\widetilde{\calw}\), define the source-specific classes
\[
\mathcal G_{+,w}(f)
:=
\left\{
x\mapsto \pi_+\,w(f(x))c(x): c\in\calc
\right\},
\qquad
\mathcal G_{u,w}(f)
:=
\left\{
x\mapsto f(x)w(f(x))c(x): c\in\calc
\right\},
\]
and set
\[
\mathcal G_+(f):=\bigcup_{w\in\widetilde{\calw}}\mathcal G_{+,w}(f),
\qquad
\mathcal G_u(f):=\bigcup_{w\in\widetilde{\calw}}\mathcal G_{u,w}(f).
\]
Every function in \(\mathcal G_+(f)\) is bounded by \(\pi_+\) in absolute value, and every function in \(\mathcal G_u(f)\) is bounded by \(1\) in absolute value.

By symmetrization and bounded-differences concentration, with probability at least \(1-\delta\), simultaneously for the positive and unlabeled samples,
\[
\sup_{g\in\mathcal G_+(f)}
\left|
\frac1{n_+}\sum_{i=1}^{n_+}g(X_i^+)-\Ex_{P_+}[g(X)]
\right|
\le
2\mathfrak{R}_{n_+}(\mathcal G_+(f);P_+)
+
\pi_+\sqrt{\frac{2\log(4/\delta)}{n_+}},
\]
and, for the unlabeled source,
\[
\sup_{g\in\mathcal G_u(f)}
\left|
\frac1{n_u}\sum_{j=1}^{n_u}g(X_j^u)-\Ex_{P_X}[g(X)]
\right|
\le
2\mathfrak{R}_{n_u}(\mathcal G_u(f);P_X)
+
\sqrt{\frac{2\log(4/\delta)}{n_u}}.
\]
We next bound the two Rademacher complexities by the complexity of
\(\calc\) and the size of the weight net.  For a fixed
\(w\in\widetilde{\calw}\) and a fixed positive sample
\(x_1,\ldots,x_{n_+}\), the maps
\[
\psi_i(t):=\pi_+\,w(f(x_i))\,t
\]
satisfy \(\psi_i(0)=0\) and are \(\pi_+\)-Lipschitz. Therefore the
contraction inequality gives
\[
\widehat{\mathfrak{R}}_{n_+}(\mathcal G_{+,w}(f))
\le
\pi_+\,\widehat{\mathfrak{R}}_{n_+}(\calc).
\]

We use the following finite-union consequence of the sub-Gaussian
maximal inequality.  Conditional on a sample \(x_1,\ldots,x_m\), let
\(\mathcal F=\bigcup_{\ell=1}^M\mathcal F_\ell\) and suppose that
\(|h(x_i)|\le \lambda\) for all \(h\in\mathcal F\) and all \(i\).
Define
\[
Z_\ell
:=
\sup_{h\in\mathcal F_\ell}
\frac1m\sum_{i=1}^m\sigma_i h(x_i),
\qquad \ell=1,\ldots,M .
\]
Then \(Z_\ell-\Ex_\sigma Z_\ell\) is
\(\lambda/\sqrt m\)-sub-Gaussian by the bounded-difference inequality.
Therefore, the sub-Gaussian maximal inequality gives
\[
\widehat{\mathfrak R}_m(\mathcal F)
=
\Ex_\sigma\max_{\ell\le M} Z_\ell
\le
\max_{\ell\le M}\widehat{\mathfrak R}_m(\mathcal F_\ell)
+
\lambda\sqrt{\frac{2\log M}{m}} .
\]
Applying this with \(M=|\widetilde{\calw}|=N_\infty(\eta,\calw)\) and
\(\lambda=\pi_+\), and then taking expectations over the positive
sample, yields
\[
\mathfrak{R}_{n_+}(\mathcal G_+(f);P_+)
\le
\pi_+\,\mathfrak{R}_{n_+}(\calc;P_+)
+
\pi_+\sqrt{\frac{2\log N_\infty(\eta,\calw)}{n_+}}.
\]

Similarly, for the unlabeled sample, the maps
\[
\psi_i(t):=f(x_i)w(f(x_i))\,t
\]
satisfy \(\psi_i(0)=0\) and are \(1\)-Lipschitz, because
\(0\le f\le1\) and \(\|w\|_\infty\le1\). Hence, by the contraction
inequality and the same finite-union argument with
\(M=|\widetilde{\calw}|=N_\infty(\eta,\calw)\) and \(\lambda=1\),
\[
\mathfrak{R}_{n_u}(\mathcal G_u(f);P_X)
\le
\mathfrak{R}_{n_u}(\calc;P_X)
+
\sqrt{\frac{2\log N_\infty(\eta,\calw)}{n_u}}.
\]
Using the shorthand
\[
(Q_+,m_+,\lambda_+):=(P_+,n_+,\pi_+),
\qquad
(Q_u,m_u,\lambda_u):=(P_X,n_u,1),
\]
the preceding displays give
\[
\sup_{c\in\calc,\,w\in\widetilde{\calw}}
\left|
\widehat R_{c,w}^{\PU}(f)-R_{c,w}^{\PU}(f)
\right|
\le
2\sum_{s\in\{+,u\}}\lambda_s
\left(
\mathfrak{R}_{m_s}(\calc;Q_s)
+
\sqrt{\frac{2\log N_\infty(\eta,\calw)+\log(4/\delta)}{m_s}}
\right).
\]

It remains to pass from \(\widetilde{\calw}\) to \(\calw\). For any \(w\in\calw\), choose \(\widetilde w\in\widetilde{\calw}\) such that \(\|w-\widetilde w\|_\infty\le\eta\). Then, uniformly over \(c\in\calc\),
\[
\left|
\pi_+\,\Ex_{P_+}
\left[c(X)\{w(f(X))-\widetilde w(f(X))\}\right]
\right|
\le
\pi_+\eta,
\]
and the same bound holds for the empirical positive average. Likewise,
\[
\left|
\Ex_{P_X}
\left[c(X)\{w(f(X))-\widetilde w(f(X))\}f(X)\right]
\right|
\le
\eta,
\]
and the same bound holds for the empirical unlabeled average. Absorbing these approximation errors yields the bound in Eq.~\eqref{eq:pu-gen-main}. Finally,
\[
\left|
\widehat{\MC}^{\PU}_{\calc,\calw}(f)
-
\MC^{\PU}_{\calc,\calw}(f)
\right|
\le
\sup_{c\in\calc,\,w\in\calw}
\left|
\widehat R_{c,w}^{\PU}(f)-R_{c,w}^{\PU}(f)
\right|,
\]
which completes the proof.
\end{proof}

\paragraph{Example rates.}
The same bound yields explicit rates for other common witness choices.  If $n=n_+=n_u$, $\calw$ is finite, and $\calc$ has VC dimension $d$, the VC/Rademacher bound gives
\[
\bigl|\widehat\MC^{\PU}_{\calc,\calw}(f)-\MC_{\calc,\calw}(f)\bigr|
=O_p\!\left(\sqrt{\frac{d+\log |\calw|}{n}}\right).
\]
If $\calc=\{1\}$ and $\calw=\mathrm{Lip}_1([0,1],[-1,1])$, corresponding to SmoothCE, the preceding single-scale covering bound is not sharp. Indeed, since $\log N_\infty(\eta,\calw)=O(1/\eta)$, optimizing Eq.~\eqref{eq:pu-gen-main} over one scale gives only $O_p(n^{-1/3})$.  For SmoothCE we instead use Dudley's chaining bound for the one-dimensional Lipschitz witness class.  Define
\[
\mathcal G_1:=\{z\mapsto w(z):w\in\calw\},
\qquad
\mathcal G_2:=\{z\mapsto z w(z):w\in\calw\}.
\]
Then $\mathcal G_1\subset \mathrm{Lip}_1([0,1],[-1,1])$. Moreover, for $w\in\calw$, the map $z\mapsto z w(z)$ is bounded by one and is $2$-Lipschitz, since
\[
|z w(z)-z'w(z')|
\le |z-z'|\,|w(z)|+z'|w(z)-w(z')|
\le 2|z-z'|.
\]
Hence, for $k=1,2$, there is a universal constant $C$ such that
\[
\log N_\infty(u,\mathcal G_k)\le \frac{C}{u},
\qquad u\in(0,1].
\]
For a positive sample $X_1^+,\ldots,X_m^+\overset{\mathrm{i.i.d.}}{\sim}P_+$ and an unlabeled sample $X_1^u,\ldots,X_m^u\overset{\mathrm{i.i.d.}}{\sim}P_X$, write
\[
\widehat\Ex_{P_+,m}[\varphi(X)]
:=\frac1m\sum_{i=1}^m \varphi(X_i^+),
\qquad
\widehat\Ex_{P_X,m}[\varphi(X)]
:=\frac1m\sum_{i=1}^m \varphi(X_i^u).
\]
Dudley's entropy integral, with the empirical $L_2$ entropy bounded by the sup-norm entropy, gives, for each $k=1,2$,
\begin{align}
&\Ex_{(X_i^+)\sim P_+^m}
\sup_{g\in\mathcal G_k}
\left|
\widehat\Ex_{P_+,m}[g(f(X))]-\Ex_{P_+}[g(f(X))]
\right|\\
&\le
2\mathfrak R_m(\mathcal G_k\circ f;P_+)
\le
\frac{C}{\sqrt m}
\int_0^1\sqrt{\log N_\infty(u,\mathcal G_k)}\,du
\le
\frac{C}{\sqrt m},
\end{align}
and similarly
\[
\Ex_{(X_i^u)\sim P_X^m}
\sup_{g\in\mathcal G_k}
\left|
\widehat\Ex_{P_X,m}[g(f(X))]-\Ex_{P_X}[g(f(X))]
\right|
\le
2\mathfrak R_m(\mathcal G_k\circ f;P_X)
\le
\frac{C}{\sqrt m}.
\]
Here we used $\int_0^1u^{-1/2}du<\infty$. Since all functions in $\mathcal G_k$ are uniformly bounded by one, bounded-difference concentration yields, with probability at least $1-\delta$,
\[
\sup_{g\in\mathcal G_k}
\left|
\widehat\Ex_{P_+,m}[g(f(X))]-\Ex_{P_+}[g(f(X))]
\right|
\le
C\sqrt{\frac{1+\log(1/\delta)}{m}},
\]
and the same bound holds with $P_+$ and $\widehat\Ex_{P_+,m}$ replaced by $P_X$ and $\widehat\Ex_{P_X,m}$.
For the PU residual with $\calc=\{1\}$,
\begin{align}
&\widehat R_{1,w}^{\PU}(f)-R_{1,w}^{\PU}(f)\\
&=
\pi_+
\Bigl\{
\widehat\Ex_{P_+,n_+}[w(f(X))]-\Ex_{P_+}[w(f(X))]
\Bigr\}
-
\Bigl\{
\widehat\Ex_{P_X,n_u}[f(X)w(f(X))]-\Ex_{P_X}[f(X)w(f(X))]
\Bigr\}.
\end{align}
Therefore,
\begin{align}
&\sup_{w\in\calw}\left|\widehat R_{1,w}^{\PU}(f)-R_{1,w}^{\PU}(f)\right|\\
&\le
\pi_+
\sup_{g\in\mathcal G_1}
\left|
\widehat\Ex_{P_+,n_+}[g(f(X))]-\Ex_{P_+}[g(f(X))]
\right|
+
\sup_{g\in\mathcal G_2}
\left|
\widehat\Ex_{P_X,n_u}[g(f(X))]-\Ex_{P_X}[g(f(X))]
\right|.
\end{align}
Together with
\[
\left|
\widehat{\MC}^{\PU}_{\{1\},\calw}(f)-\MC_{\{1\},\calw}(f)
\right|
\le
\sup_{w\in\calw}\left|\widehat R_{1,w}^{\PU}(f)-R_{1,w}^{\PU}(f)\right|,
\]
a union bound over the positive and unlabeled empirical processes gives, with probability at least $1-\delta$,
\[
\left|
\widehat{\MC}^{\PU}_{\{1\},\calw}(f)-\MC_{\{1\},\calw}(f)
\right|
\le
C\left\{
\pi_+\sqrt{\frac{1+\log(2/\delta)}{n_+}}
+
\sqrt{\frac{1+\log(2/\delta)}{n_u}}
\right\}.
\]
In particular, if $n_+=n_u=n$, then
\[
\bigl|\widehat\MC^{\PU}_{\{1\},\calw}(f)-\MC_{\{1\},\calw}(f)\bigr|=O_p(n^{-1/2}).
\]
This result matches the order of smooth CE in \citet{futami2026smooth}.

\begin{remark}[VC-type dimension]
\label{rem:known-pi-vc-dim}
We use the phrase ``VC dimension'' in the usual shorthand sense.
When \(\calc\) is a class of indicator functions, this is the ordinary
VC dimension.  For a general \([0,1]\)-valued subgroup class \(\calc\),
the corresponding assumption should be interpreted as finite
VC-subgraph dimension, equivalently finite pseudo-dimension up to
standard convention-dependent constants.  Under this condition the
standard Rademacher bound gives
\[
\mathfrak R_n(\calc;P)=O\!\left(\sqrt{\frac{d}{n}}\right),
\]
which is the complexity term used in the example rates.
\end{remark}

\subsection{Uniform convergence for Pconf}
\label{app:pconf-gen}
This section states and proves the Pconf uniform convergence bound. The setup matches \citet{ishida2018}: the Pconf sample consists of i.i.d. draws $(X_i,r_i)$ with $X_i\sim P_+$ and $r_i=r(X_i)=P(Y=1\mid X_i)$.

\begin{proposition}[Uniform convergence for the Pconf weak estimate]
\label{prop:pconf-gen}
Define the population and empirical Pconf weak-estimate errors by
\[
\MC^{\Pconf}_{\calc,\calw}(f):=\sup_{c\in\calc,\,w\in\calw}|R_{c,w}^{\Pconf}(f)|,
\qquad
\widehat{\MC}^{\Pconf}_{\calc,\calw}(f):=\sup_{c\in\calc,\,w\in\calw}|\widehat R_{c,w}^{\Pconf}(f)|.
\]
Assume $f$ is fixed independently of the Pconf sample, $|c(x)|\le 1$ for all $c\in\calc$, $|w(v)|\le 1$ for all $w\in\calw$, and there exists $C_r>0$ such that $r(x)\ge C_r$ almost surely under $P_+$. Then for any $\eta\in(0,1]$, with probability at least $1-\delta$,
\[
\left|\widehat{\MC}^{\Pconf}_{\calc,\calw}(f)-\MC^{\Pconf}_{\calc,\calw}(f)\right|
\le
C\,\pi_+\!\left(1+\frac{1}{C_r}\right)
\left(\mathfrak{R}_{n}(\calc;P_+)+\eta+\sqrt{\frac{2\log N_\infty(\eta,\calw)+\log(4/\delta)}{n}}\right).
\]
\end{proposition}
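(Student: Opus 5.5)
The argument follows the UU and PU proofs above (Propositions~\ref{prop:uu-gen} and~\ref{prop:pu-gen}), now with a single source $P_+$ and an $x$-dependent weight. First I reduce to a uniform deviation bound: since $|\sup|a|-\sup|b||\le\sup|a-b|$, it suffices to bound $\sup_{c,w}|\widehat R^{\Pconf}_{c,w}(f)-R^{\Pconf}_{c,w}(f)|$. I write the integrand as
\[
\pi_+\,c(x)w(f(x))\beta(x),\qquad \beta(x):=1-f(x)/r(x).
\]
Because $0\le f\le 1$ and $r\ge C_r$ $P_+$-a.s., we have $|\beta(x)|\le 1+1/C_r=:A$ almost surely. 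Every function in the class is therefore bounded by $\pi_+A$. This envelope replaces $A_k$ from the UU proof and is the only place the assumption on $r$ enters.

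Next, I fix $\eta$ and take a minimal $\eta$-net $\widetilde\calw$ of $\calw$ in $\|\cdot\|_\infty$. For each $w\in\widetilde\calw$ define $\mathcal G_w:=\{x\mapsto \pi_+\beta(x)w(f(x))c(x):c\in\calc\}$, and let $\mathcal G:=\bigcup_{w\in\widetilde\calw}\mathcal G_w$. By symmetrization and McDiarmid (bounded differences $2\pi_+A/n$), with probability at least $1-\delta$,
\[
\sup_{g\in\mathcal G}|\widehat\Ex g-\Ex g|\le 2\mathfrak R_n(\mathcal G;P_+)+\pi_+A\sqrt{2\log(4/\delta)/n}.
\]
Conditionally on the sample, the maps $\psi_i(t)=\pi_+\beta(x_i)w(f(x_i))t$ vanish at $0$ and are $\pi_+A$-Lipschitz. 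Talagrand's contraction then gives $\widehat{\mathfrak R}_n(\mathcal G_w)\le\pi_+A\,\widehat{\mathfrak R}_n(\calc)$. The finite-union sub-Gaussian maximal inequality used in the UU proof, with $M=N_\infty(\eta,\calw)$ and $\lambda=\pi_+A$, adds the term $\pi_+A\sqrt{2\log N_\infty(\eta,\calw)/n}$. Taking expectations over the sample gives
\[
\mathfrak R_n(\mathcal G;P_+)\le \pi_+A\bigl(\mathfrak R_n(\calc;P_+)+\sqrt{2\log N_\infty/n}\bigr).
\]

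Finally, I pass from the net to all of $\calw$. For $\|w-\widetilde w\|_\infty\le\eta$, both the population and the empirical moments change by at most $\pi_+A\eta$, since $|c\beta|\le A$. The $r_i$ are exactly $r(X_i)$, so the empirical bound holds almost surely. Collecting terms and combining the square roots via $\sqrt a+\sqrt b\le\sqrt{2(a+b)}$ yields the stated bound with a universal constant $C$.

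The main obstacle is justifying the envelope, which controls the Lipschitz constant in the contraction step and the bounded-differences constant. It needs $r\ge C_r$ almost surely under $P_+$, and it also needs the $r_i$ to equal $r(X_i)$ exactly so that the empirical averages obey the same bound. Everything else is identical to the UU argument.
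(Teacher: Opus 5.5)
Your proposal is correct and follows essentially the same route as the paper's proof. Both use the envelope $\pi_+(1+1/C_r)$, the $\eta$-net $\widetilde\calw$ with union class $\bigcup_{w\in\widetilde\calw}\mathcal G_w$, and symmetrization plus McDiarmid. Both then apply contraction through the $x$-dependent multipliers $\psi_i$, the finite-union sub-Gaussian maximal inequality, and the net-approximation step costing $\pi_+(1+1/C_r)\eta$. Your explicit reduction $|\sup|a|-\sup|b||\le\sup|a-b|$ and your remark that $r_i=r(X_i)$ is needed for the empirical envelope are left implicit in the paper but are consistent with its argument.
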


\begin{proof}
Fix $\eta\in(0,1]$ and let $\widetilde{\calw}\subseteq\calw$ be a minimal $\eta$-net of $\calw$ under $\|\cdot\|_\infty$, so $|\widetilde{\calw}|=N_\infty(\eta,\calw)$. Define
\[
\mathcal G_w(f):=\left\{x\mapsto \pi_+\left(1-\frac{f(x)}{r(x)}\right)w(f(x))c(x): c\in\calc\right\},
\qquad
\mathcal G(f):=\bigcup_{w\in\widetilde{\calw}}\mathcal G_w(f).
\]
By Corollary~\ref{prop:pconf},
\[
\sup_{c\in\calc,\,w\in\widetilde{\calw}}\left|\widehat R_{c,w}^{\Pconf}(f)-R_{c,w}^{\Pconf}(f)\right|
=
\sup_{g\in\mathcal G(f)}\left|\frac1n\sum_{i=1}^n g(X_i)-\Ex_{P_+}[g(X)]\right|.
\]
Because $0\le f\le 1$, $|w|\le 1$, and $r(x)\ge C_r$ almost surely, every $g\in\mathcal G(f)$ satisfies
\[
|g(x)|\le B:=\pi_+\left(1+\frac{1}{C_r}\right)
\qquad P_+\text{-a.s.}
\]
Standard symmetrization and McDiarmid's inequality therefore give, with probability at least $1-\delta$,
\[
\sup_{g\in\mathcal G(f)}\left|\frac1n\sum_{i=1}^n g(X_i)-\Ex_{P_+}[g(X)]\right|
\le
2\mathfrak{R}_n(\mathcal G(f);P_+)+B\sqrt{\frac{2\log(4/\delta)}{n}}.
\]
For a fixed $w\in\widetilde{\calw}$, the maps
\[
\psi_i(t):=\pi_+\left(1-\frac{f(x_i)}{r(x_i)}\right)w(f(x_i))t
\]
satisfy $\psi_i(0)=0$ and are $B$-Lipschitz, so the contraction inequality gives $\widehat{\mathfrak{R}}_n(\mathcal G_w(f))\le B\widehat{\mathfrak{R}}_n(\calc)$. Taking expectations and applying the finite-union lemma over $|\widetilde{\calw}|=N_\infty(\eta,\calw)$ classes yields
\[
\mathfrak{R}_n(\mathcal G(f);P_+)
\le
B\mathfrak{R}_n(\calc;P_+)+B\sqrt{\frac{2\log N_\infty(\eta,\calw)}{n}}.
\]
It remains to move from the net back to the full class. For any $w\in\calw$, choose $\tilde w\in\widetilde{\calw}$ with $\|w-\tilde w\|_\infty\le \eta$. Then
\[
\sup_c\left|\Ex_{P_+}\!\left[\pi_+\left(1-\frac{f(X)}{r(X)}\right)c(X)(w(f(X))-\tilde w(f(X)))\right]\right|\le B\eta,
\]
and the same estimate holds for the empirical average. Absorbing these approximation errors proves the stated bound.
\end{proof}

\paragraph{Clipped Pconf estimator.}
\label{app:clipped-pconf}
The Pconf rewrite in Corollary~\ref{prop:pconf} is exact but can have large variance when some confidences are small. A standard stabilization is to choose a clipping threshold $\tau\in(0,1]$ and define
\[
\widehat R_{c,w}^{\Pconf,\tau}(f)
:=
\frac{\pi_+}{n}\sum_{i=1}^n c(X_i)w(f(X_i))\left(1-\frac{f(X_i)}{\tau\vee r_i}\right).
\]
This introduces clipping-induced approximation error but controls the effective envelope of the corrected residuals. The same clipping can be used inside Algorithm~\ref{alg:wsl-mc-boosting} by replacing $\rho^{\Pconf}(O;f,w)$ with its clipped counterpart.

\section{Proofs for Section~\ref{sec:algo}}
\label{app:proof-boosting}

\subsection{Formal PU guarantee for WLMC}
\label{app:proof-pu-wlmc-boost}
Here we prove a more general result than the informal main-text statement.
\begin{theorem}[Formal PU guarantee under Rademacher complexity]
\label{thm:pu-wlmc-boost-formal-rad}
Assume the PU setting of Proposition~\ref{prop:pu-gen}. Suppose
\(|c|\le1\), \(\|w\|_\infty\le1\), and \(\calw\) is finite. Assume further that the Rademacher complexities are bounded as
\[
\mathfrak R_{n_+}(\calc;P_+)
\le
C_R\sqrt{\frac{d_{\calc}\log n_+}{n_+}},
\qquad
\mathfrak R_{n_u}(\calc;P_X)
\le
C_R\sqrt{\frac{d_{\calc}\log n_u}{n_u}} .
\]
Let
\(D_0=\Ex[(f_0(X)-f^\star(X))^2]\) and
\(K_\varepsilon=\lceil 4D_0/\varepsilon^2\rceil\).  Run Algorithm~\ref{alg:wsl-mc-boosting} with
\(\lambda=\varepsilon/2\), threshold \(3\varepsilon/4\), and the PU estimator
\(\widehat R^{\PU}\).  At each search call, draw fresh independent batches
\(S_t^+\sim P_+^{n_+}\) and \(S_t^u\sim P_X^{n_u}\).  Let
\[
L=\log\frac{4|\calw|(K_\varepsilon+1)}{\delta}.
\]
There exists \(C>0\), depending only on \(C_R\), such that if
\[
n_+\ge \frac{C\pi_+^2}{\varepsilon^2}(d_{\calc}\log n_+ + L),
\qquad
n_u\ge \frac{C}{\varepsilon^2}(d_{\calc}\log n_u + L),
\]
then, with probability at least \(1-\delta\), the algorithm stops
automatically after
\(T\le K_\varepsilon=O(D_0/\varepsilon^2)\) successful updates and returns a
\((\calc,\calw,\varepsilon)\)-multicalibrated predictor \(f_T\).
In particular, if \(n_+=n_u=n\), it suffices that
\[
n\ge
\frac{C}{\varepsilon^2}
\left[
d_{\calc}\log n+\log|\calw|+
\log\frac{K_\varepsilon+1}{\delta}
\right].
\]
The total source-wise budgets are
\(N_+\le (K_\varepsilon+1)n_+\) and
\(N_u\le (K_\varepsilon+1)n_u\).
\end{theorem}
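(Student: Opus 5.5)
The proof combines a standard potential-function argument for boosting with the uniform convergence bound of Proposition~\ref{prop:pu-gen}, applied to fresh batches at each step. The potential is $\Phi(f):=\Ex[(f(X)-f^\star(X))^2]$, with $\Phi(f_0)=D_0$ and $\Phi\ge 0$.

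\textbf{Step 1 (a good event).} For each iteration $t\le K_\varepsilon$ I define the event
\[
E_t:=\Bigl\{\sup_{c\in\calc,w\in\calw}|\widehat R^{\PU}_{c,w}(f_t;S_t)-R_{c,w}(f_t)|\le \varepsilon/4\Bigr\}.
\]
Since $S_t$ is drawn fresh and independently of $f_t$, which depends only on $S_0,\dots,S_{t-1}$, Proposition~\ref{prop:pu-gen} applies conditionally on $f_t$. By Theorem~\ref{thm:generic-wsl-mc}, $R^{\PU}_{c,w}=R_{c,w}$. I apply the proposition with failure probability $\delta/(K_\varepsilon+1)$, taking $\eta\to0$ (or simply using the net $\widetilde\calw=\calw$, since $\calw$ is finite, so $\log N_\infty=\log|\calw|$). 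Plugging in the assumed Rademacher bounds, the deviation is at most
\[
2\pi_+\Bigl(C_R\sqrt{d_\calc\log n_+/n_+}+\sqrt{(2\log|\calw|+\log(4(K_\varepsilon+1)/\delta))/n_+}\Bigr)
\]
plus the analogous unlabeled term with weight $1$. The stated lower bounds on $n_+,n_u$, with $C$ large enough depending on $C_R$, make each source's contribution at most $\varepsilon/8$. A union bound over $t=0,\dots,K_\varepsilon$ then gives $\Pr(\bigcap_t E_t)\ge1-\delta$. All remaining steps work on this event.

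\textbf{Step 2 (progress per update).} Suppose the algorithm updates at step $t$ with $(\tau,c,w)$, so that $\tau\widehat R_{c,w}>3\varepsilon/4$. On $E_t$ this gives $\tau R_{c,w}(f_t)>\varepsilon/2$. Let $h=\lambda\tau c\,w(f_t)$. Projection onto $[0,1]$ is a contraction and $f^\star\in[0,1]$, so
\[
\Phi(f_{t+1})\le \Ex[(f_t+h-f^\star)^2]=\Phi(f_t)-2\lambda\tau\Ex[c\,w(f_t)(f^\star-f_t)]+\Ex[h^2].
\]
The middle expectation equals $\tau R_{c,w}(f_t)$, because $\Ex[Y\mid X]=f^\star$. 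The last term is at most $\lambda^2$. Taking $\lambda=\varepsilon/2$ yields $\Phi(f_{t+1})\le\Phi(f_t)-\varepsilon^2/2+\varepsilon^2/4=\Phi(f_t)-\varepsilon^2/4$. Since $\Phi\ge0$, at most $4D_0/\varepsilon^2\le K_\varepsilon$ updates can occur, so the algorithm halts with $T\le K_\varepsilon$ and uses at most $K_\varepsilon+1$ batches. This also justifies restricting the union bound to $K_\varepsilon+1$ events.

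\textbf{Step 3 (output guarantee).} When the algorithm halts at $T$, every $(c,w)$ satisfies $|\widehat R_{c,w}(f_T;S_T)|\le3\varepsilon/4$. On $E_T$ this gives $|R_{c,w}(f_T)|\le\varepsilon$, so $f_T$ is $(\calc,\calw,\varepsilon)$-multicalibrated. The budget bounds $N_s\le(K_\varepsilon+1)n_s$ follow immediately. For the case $n_+=n_u=n$, I use $\pi_+\le1$.

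\textbf{Main obstacle.} The delicate point is Step 1. The number of iterations is random, and $f_t$ is data-dependent, so Proposition~\ref{prop:pu-gen}, which is stated for a fixed $f$, must be applied conditionally on the past; fresh sampling is exactly what makes this legitimate. The union bound must also be taken over a deterministic horizon $K_\varepsilon+1$, with the a priori termination bound from Step 2 guaranteeing that this horizon suffices. The remaining care is bookkeeping the constants: the factor $2$ and the $\pi_+$ weight in the bound must fit within the $\varepsilon/4$ slack, and the implicit $\log n$ condition only needs to be checked for consistency.
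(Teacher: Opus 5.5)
Your proposal is correct and follows essentially the same route as the paper's proof. The potential $\Ex[(f(X)-f^\star(X))^2]$ decreases by at least $\varepsilon^2/4$ at every update whose population residual exceeds $\varepsilon/2$, and the finite-$\calw$ PU uniform bound is applied to each fresh batch at level $\delta/(K_\varepsilon+1)$ and union-bounded over $K_\varepsilon+1$ search calls, so that every empirical update is valid and the stopping rule certifies $(\calc,\calw,\varepsilon)$-multicalibration; the only difference is that you fix the good event before running the potential argument, whereas the paper does the population argument first.
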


\begin{proof}
The proof is similar to \citet{gopalan2022low}. However, we need to control the finite-sample estimation error.

For a fixed predictor \(f\), define the population residual moment
\[
R_{c,w}^{\WSL}(f)
=
\Ex[c(X)w(f(X))(Y-f(X))]
=
\Ex[c(X)w(f(X))(f^\star(X)-f(X))].
\]
In the PU setting, the same moment is represented as
\[
R_{c,w}^{\PU}(f)
=
\pi_+\Ex_{P_+}[c(X)w(f(X))]
-
\Ex_{P_X}[c(X)w(f(X))f(X)].
\]
Consider a population search rule which either returns
\((\tau,c,w)\in\{\pm1\}\times\calc\times\calw\) with
\(\tau R_{c,w}^{\WSL}(f)\ge \varepsilon/2\), or returns \(\perp\), in which
case \(\sup_{c,w}|R_{c,w}^{\WSL}(f)|\le\varepsilon\).

Let
\[
\Psi(f)=\Ex[(f(X)-f^\star(X))^2].
\]
If the population rule returns \((\tau_t,c_t,w_t)\), the update is
\[
f_{t+1}(x)
=
\Pi_{[0,1]}
\left[
f_t(x)+\frac{\varepsilon}{2}\tau_t c_t(x)w_t(f_t(x))
\right].
\]

Set
\[
\Delta_t(x):=\tau_t c_t(x)w_t(f_t(x)).
\]
Since \(f^\star(x)\in[0,1]\) and projection onto \([0,1]\) is non-expansive,
\[
(f_{t+1}(x)-f^\star(x))^2
\le
\left(f_t(x)+\frac{\varepsilon}{2}\Delta_t(x)-f^\star(x)\right)^2 .
\]
Taking expectations and expanding the square, we obtain
\[
\begin{aligned}
\Psi(f_{t+1})
&\le
\Ex\left[
\left(f_t(X)+\frac{\varepsilon}{2}\Delta_t(X)-f^\star(X)\right)^2
\right] \\
&=
\Psi(f_t)
+
\varepsilon\Ex[\Delta_t(X)(f_t(X)-f^\star(X))]
+
\frac{\varepsilon^2}{4}\Ex[\Delta_t(X)^2] \\
&=
\Psi(f_t)
-
\varepsilon\Ex[\Delta_t(X)(f^\star(X)-f_t(X))]
+
\frac{\varepsilon^2}{4}\Ex[\Delta_t(X)^2] \\
&=
\Psi(f_t)
-
\varepsilon\tau_t R_{c_t,w_t}^{\WSL}(f_t)
+
\frac{\varepsilon^2}{4}\Ex[\Delta_t(X)^2].
\end{aligned}
\]
Since \(|c_t|\le1\), \(\|w_t\|_\infty\le1\), and \(\tau_t\in\{\pm1\}\), we have
\(|\Delta_t|\le1\).  Moreover, the population witness satisfies
\[
\tau_t R_{c_t,w_t}^{\WSL}(f_t)\ge \varepsilon/2.
\]
Therefore,
\[
\Psi(f_{t+1})
\le
\Psi(f_t)
-
\varepsilon\cdot\frac{\varepsilon}{2}
+
\frac{\varepsilon^2}{4}
=
\Psi(f_t)-\frac{\varepsilon^2}{4}.
\]

Thus every successful population update decreases the potential by at least
\(\varepsilon^2/4\).  After \(M\) successful updates,
\[
\Psi(f_M)
\le
D_0-\frac{M\varepsilon^2}{4}.
\]
Since \(\Psi(f)\ge0\) for every predictor \(f\), we must have
\[
M\le \frac{4D_0}{\varepsilon^2}.
\]
Otherwise the potential would become negative, which is impossible.  Hence the
population procedure cannot keep finding valid update directions forever; after
at most \(4D_0/\varepsilon^2\) successful updates, it must fail to find another
valid witness and therefore stops.  In particular, the number of successful
population updates is at most \(K_\varepsilon=\lceil 4D_0/\varepsilon^2\rceil\).

Next, we control the finite-sample estimation error. Fix a round \(t\) and condition on the past.  Since the batches at round \(t\)
are fresh, \(f_t\) is fixed independently of \(S_t^+\) and \(S_t^u\).  Define
\[
\mathcal E_t
=
\left\{
\sup_{c\in\calc,w\in\calw}
\left|
\widehat R_{c,w}^{\PU}(f_t;S_t^+,S_t^u)
-
R_{c,w}^{\PU}(f_t)
\right|
\le
\varepsilon/4
\right\}.
\]
On \(\mathcal E_t\), if the empirical search returns a witness with
\[
\tau_t\widehat R_{c_t,w_t}^{\PU}(f_t)>3\varepsilon/4,
\]
then
\[
\tau_t R_{c_t,w_t}^{\PU}(f_t)>\varepsilon/2.
\]
Hence the update is a valid population update and decreases \(\Psi\) by at
least \(\varepsilon^2/4\).  Conversely, if the empirical search stops, then
\[
\sup_{c,w}|\widehat R_{c,w}^{\PU}(f_t)|\le 3\varepsilon/4,
\]
and therefore \(\sup_{c,w}|R_{c,w}^{\PU}(f_t)|\le\varepsilon\).  Thus the
output is \((\calc,\calw,\varepsilon)\)-multicalibrated.

Apply the finite-\(\calw\) form of Proposition~\ref{prop:pu-gen} with
failure probability \(\delta/(K_\varepsilon+1)\).  Using the assumed
Rademacher bound, there exists \(C>0\), depending only on \(C_R\), such that
\(\Pr(\mathcal E_t^c)\le \delta/(K_\varepsilon+1)\) whenever
\[
n_+\ge \frac{C\pi_+^2}{\varepsilon^2}(d_{\calc}\log n_+ + L),
\qquad
n_u\ge \frac{C}{\varepsilon^2}(d_{\calc}\log n_u + L),
\]
where \(L=\log(4|\calw|(K_\varepsilon+1)/\delta)\).

By a union bound over the first \(K_\varepsilon+1\) search calls,
\[
\Pr\left(\bigcap_{t=0}^{K_\varepsilon}\mathcal E_t\right)\ge 1-\delta.
\]
On this event, every empirical update before stopping is a valid population
update.  Since more than \(K_\varepsilon\) successful updates are impossible,
the no-horizon algorithm stops automatically within the first
\(K_\varepsilon+1\) search calls.  The sample budgets follow by multiplying
the per-call source-wise sample sizes by the number of search calls:
\[
N_+\le (K_\varepsilon+1)n_+,
\qquad
N_u\le (K_\varepsilon+1)n_u.
\]
This proves the theorem.
\end{proof}

\paragraph{Finite \(\calc\) case.}
When \(\calc\) is finite, the same proof applies.  Only Step 4 changes: instead
of the Rademacher bound, we use Hoeffding's inequality and a union bound over
\(\calc\times\calw\).  Consequently, the extra \(d_{\calc}\log n\) term is
replaced by \(\log|\calc|\); in particular, no additional \(\log n\) factor is
needed.

\begin{theorem}[Formal PU guarantee for finite classes]
\label{thm:pu-wlmc-boost-formal-finite}
Assume the PU setting of Proposition~\ref{prop:pu-gen}.  Suppose
\(\calc\) and \(\calw\) are finite, and \(|c|\le1\), \(\|w\|_\infty\le1\).
Let \(D_0=\Ex[(f_0(X)-f^\star(X))^2]\) and
\(K_\varepsilon=\lceil 4D_0/\varepsilon^2\rceil\).  Run
Algorithm~\ref{alg:wsl-mc-boosting} with \(\lambda=\varepsilon/2\), threshold
\(3\varepsilon/4\), and the PU estimator \(\widehat R^{\PU}\).  At each search
call, draw fresh independent batches \(S_t^+\sim P_+^{n_+}\) and
\(S_t^u\sim P_X^{n_u}\).  Let
\[
L_{\rm fin}
=
\log\frac{4|\calc||\calw|(K_\varepsilon+1)}{\delta}.
\]
There exists a universal constant \(C>0\) such that if
\[
n_+\ge \frac{C\pi_+^2L_{\rm fin}}{\varepsilon^2},
\qquad
n_u\ge \frac{CL_{\rm fin}}{\varepsilon^2},
\]
then, with probability at least \(1-\delta\), the algorithm stops
automatically after
\(T\le K_\varepsilon=O(D_0/\varepsilon^2)\) successful updates and returns a
\((\calc,\calw,\varepsilon)\)-multicalibrated predictor \(f_T\).

In particular, if \(n_+=n_u=n\), it suffices that
\[
n\ge
\frac{C}{\varepsilon^2}
\log\frac{4|\calc||\calw|(K_\varepsilon+1)}{\delta}.
\]
The total source-wise budgets are
\(N_+\le (K_\varepsilon+1)n_+\) and
\(N_u\le (K_\varepsilon+1)n_u\); in the equal-size case,
\[
N_+=N_u
=
\widetilde O\!\left(
\left(1+\frac{D_0}{\varepsilon^2}\right)
\frac{\log|\calc|+\log|\calw|+\log(1/\delta)}{\varepsilon^2}
\right).
\]
\end{theorem}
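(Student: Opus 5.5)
The plan is to reuse the proof of Theorem~\ref{thm:pu-wlmc-boost-formal-rad} almost verbatim. Only the uniform-convergence step changes: the Rademacher/covering argument of Proposition~\ref{prop:pu-gen} is replaced by Hoeffding's inequality plus a union bound over the finite set $\calc\times\calw$.

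First, I take the potential $\Psi(f)=\Ex[(f(X)-f^\star(X))^2]$ and the population update analysis unchanged. Suppose at round $t$ the chosen witness satisfies $\tau_tR^{\PU}_{c_t,w_t}(f_t)>\varepsilon/2$. Then, using $\lambda=\varepsilon/2$, the nonexpansiveness of $\Pi_{[0,1]}$, $|\Delta_t|\le1$, and the identity $R_{c,w}(f)=\Ex[c(X)w(f(X))(f^\star(X)-f(X))]$ together with Theorem~\ref{thm:generic-wsl-mc} and Corollary~\ref{prop:pu}, we get $\Psi(f_{t+1})\le\Psi(f_t)-\varepsilon^2/4$. Since $0\le\Psi\le D_0$ initially, at most $K_\varepsilon=\lceil4D_0/\varepsilon^2\rceil$ such updates can occur.

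Second, I define the good event
\[
\mathcal E_t=\Bigl\{\max_{c\in\calc,w\in\calw}\bigl|\widehat R^{\PU}_{c,w}(f_t;S_t)-R^{\PU}_{c,w}(f_t)\bigr|\le\varepsilon/4\Bigr\}.
\]
Condition on the past; since $S_t$ is fresh, $f_t$ is independent of the batch. For each fixed $(c,w)$, split the estimation error into the positive term, an average of i.i.d.\ variables $\pi_+c(X)w(f_t(X))\in[-\pi_+,\pi_+]$, and the unlabeled term, an average of $c(X)w(f_t(X))f_t(X)\in[-1,1]$. Require each deviation to be at most $\varepsilon/8$ with failure probability $\delta/(2|\calc||\calw|(K_\varepsilon+1))$. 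Hoeffding gives the positive part when $n_+\ge 128\pi_+^2\log(4|\calc||\calw|(K_\varepsilon+1)/\delta)/\varepsilon^2$, and similarly the unlabeled part with $\pi_+^2$ replaced by $1$. A union bound over $\calc\times\calw$ and both sources then gives $\Pr(\mathcal E_t^c)\le\delta/(K_\varepsilon+1)$, with a universal constant $C$.

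Third, I repeat the coupling argument. On $\mathcal E_t$, an empirical violation $\tau\widehat R>3\varepsilon/4$ implies a population violation $>\varepsilon/2$, so each update decreases $\Psi$. If instead the loop exits, then $\max|\widehat R|\le3\varepsilon/4$ implies $\max|R|\le\varepsilon$, which means $f_t$ is $(\calc,\calw,\varepsilon)$-multicalibrated. A union bound over the first $K_\varepsilon+1$ search calls shows that all $\mathcal E_t$ hold with probability at least $1-\delta$. On that event, more than $K_\varepsilon$ valid updates are impossible, so the algorithm halts within $K_\varepsilon+1$ calls. The budgets $N_\pm\le(K_\varepsilon+1)n_\pm$ follow immediately, and substituting $n$ gives the $\widetilde O$ expression.

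The main point needing care is the conditioning argument. Hoeffding is applied conditionally on $f_t$, which is a random function of earlier batches. Independence of the fresh batch from the past makes the conditional bound valid, and integrating out the conditioning gives the unconditional per-round bound. The union over rounds must be taken over the deterministic index $t\le K_\varepsilon$ rather than over the random stopping time; defining $\mathcal E_t$ for all $t\le K_\varepsilon$, including rounds after stopping via a dummy batch, handles this.
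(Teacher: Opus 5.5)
Your proposal is correct and follows the paper's route: the paper proves this theorem by rerunning the proof of Theorem~\ref{thm:pu-wlmc-boost-formal-rad} (the $\varepsilon^2/4$ potential decrease per valid update, the $3\varepsilon/4$ threshold with $\varepsilon/4$ slack, and a union bound over the $K_\varepsilon+1$ fresh-batch search calls), replacing only the uniform-convergence step with Hoeffding's inequality and a union bound over $\calc\times\calw$. Your explicit constant (tolerance $\varepsilon/8$ per source, giving $C=128$) and your handling of conditioning on $f_t$ and of the union bound over deterministic rounds rather than the random stopping time are details the paper leaves implicit, but they do not change the argument.
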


\subsection{Other weak-supervision models}
\label{app:other-wsl-boost-orders}
The same argument as in the PU case yields the promised
\(O(\varepsilon^{-4})\) total weak-sample order for the other weak-supervision
examples.  We record the argument as a proof sketch, since no new boosting
idea is needed beyond replacing the PU uniform-convergence step by the
corresponding one for each weak-observation model.

Suppose a weak-supervision model has source indices \(j\in J\), source laws
\(Q_j\), per-source batch sizes \(m_j\), and envelope constants \(B_j\).
Assume that, for every predictor \(f\) fixed independently of the fresh weak
data, the corrected residual process satisfies the uniform-convergence bound
\begin{align}
&\sup_{c\in\calc,\,w\in\calw}
\left|
\widehat R^{\WSL}_{c,w}(f)-R^{\WSL}_{c,w}(f)
\right|\\
&\le
C\sum_{j\in J}B_j\left(
\mathfrak R_{m_j}(\calc;Q_j)+\eta_w+
\sqrt{\frac{2\log N_\infty(\eta_w,\calw)+\log(1/\delta_0)}{m_j}}
\right).
\end{align}
This process bound implies the analogous bound for
\(\bigl|\widehat{\MC}^{\WSL}_{\calc,\calw}(f)
-\MC^{\WSL}_{\calc,\calw}(f)\bigr|\).
If \(\mathfrak R_m(\calc;Q_j)\le\sqrt{d_{\calc}/m}\) for all \(j\), then choosing
\[
m_j\ge
C\,\frac{B_j^2\bigl(d_{\calc}+2\log N_\infty(\varepsilon/8,\calw)+\log(K_\varepsilon/\delta)\bigr)}{\varepsilon^2},
\qquad
K_\varepsilon=\lceil 4/\varepsilon^2\rceil,
\]
ensures that every round has a correct weak agnostic learner with probability at least \(1-\delta\). Consequently WLMC terminates after \(O(\varepsilon^{-2})\) successful updates and uses
\[
N_j=O\!\left(
\frac{B_j^2\bigl(d_{\calc}+2\log N_\infty(\varepsilon/8,\calw)+\log(K_\varepsilon/\delta)\bigr)}{\varepsilon^4}
\right)
\]
fresh weak samples from source \(Q_j\). For UU, \(J=\{1,2\}\), \(Q_j=P_{U_j}\), and \(B_j=A_j\), where the
constants \(A_j\) are the source-wise envelopes appearing in the UU/MCD
uniform-convergence bound.  For Pconf, \(J=\{+\}\), \(Q_+=P_+\), and under the
lower-confidence condition \(r(x)\ge C_r\), one may take
\[
B_+=\pi_+(1+1/C_r).
\]

\begin{proof}[Proof sketch]
The proof has the same two steps as the PU proof.

First, consider the population version of WLMC.  By the weak-supervision
rewrite for the model under consideration, the corrected weak moment
\(R^{\WSL}_{c,w}(f)\) is equal to the clean multicalibration residual
\[
\Ex\!\left[c(X)w(f(X))(Y-f(X))\right]
=
\Ex\!\left[c(X)w(f(X))(f^\star(X)-f(X))\right].
\]
Therefore the deterministic boosting argument is unchanged.  Whenever the
population search finds a signed witness \((\tau,c,w)\) with
\[
\tau R^{\WSL}_{c,w}(f_t)\ge \varepsilon/2,
\]
the update with step size \(\lambda=\varepsilon/2\) decreases the potential
\[
\Psi(f)=\Ex\!\left[(f(X)-f^\star(X))^2\right]
\]
by at least \(\varepsilon^2/4\).  Since \(0\le f,f^\star\le 1\), we have
\(\Psi(f_0)\le 1\), and hence there can be at most
\[
K_\varepsilon=\lceil 4/\varepsilon^2\rceil
\]
successful population updates.  If no such population witness exists, then the
current predictor is \((\calc,\calw,\varepsilon)\)-multicalibrated.

Second, we pass from the population search to the empirical weak search.  At
each round WLMC uses fresh weak data, so conditional on the past, the current
predictor \(f_t\) is fixed independently of the batch used at round \(t\).
Thus we may apply the model-specific uniform-convergence bound above with
failure probability
\[
\delta_0=\delta/(K_\varepsilon+1).
\]
Choose \(\eta_w=\varepsilon/8\).  If
\(\mathfrak R_m(\calc;Q_j)\le \sqrt{d_{\calc}/m}\) for all \(j\), then choosing
\[
m_j\ge
C\,\frac{B_j^2\bigl(d_{\calc}
+2\log N_\infty(\varepsilon/8,\calw)
+\log(K_\varepsilon/\delta)\bigr)}{\varepsilon^2}
\]
ensures that, in each round,
\[
\sup_{c,w}
\left|
\widehat R^{\WSL}_{c,w}(f_t)-R^{\WSL}_{c,w}(f_t)
\right|
\le \varepsilon/4
\]
with probability at least \(1-\delta/(K_\varepsilon+1)\).  A union bound over
the first \(K_\varepsilon+1\) search calls gives this event simultaneously for
all rounds with probability at least \(1-\delta\).

On this event, the empirical threshold \(3\varepsilon/4\) has the same role as
in the PU proof.  If the empirical search returns a signed witness satisfying
\[
\tau_t\widehat R^{\WSL}_{c_t,w_t}(f_t)>3\varepsilon/4,
\]
then
\[
\tau_t R^{\WSL}_{c_t,w_t}(f_t)>\varepsilon/2,
\]
so the empirical update is a valid population update and decreases
\(\Psi\) by at least \(\varepsilon^2/4\).  Conversely, if the empirical search
stops, then
\[
\sup_{c,w}\left|R^{\WSL}_{c,w}(f_t)\right|\le \varepsilon,
\]
and the output is \((\calc,\calw,\varepsilon)\)-multicalibrated.  Since there
are at most \(K_\varepsilon=O(\varepsilon^{-2})\) successful updates, the total
number of fresh weak samples drawn from source \(Q_j\) is
\[
N_j=O\!\left(
\frac{B_j^2\bigl(d_{\calc}
+2\log N_\infty(\varepsilon/8,\calw)
+\log(K_\varepsilon/\delta)\bigr)}{\varepsilon^4}
\right).
\]
\end{proof}

Thus the same two-step proof applies to each weak setting: the population
boosting decrease is identical after the weak residual rewrite, and the only
model-specific ingredient is the corresponding uniform-convergence bound for
the corrected weak residuals.

\subsection{Corrected weak objectives for Platt and temperature scaling}
\label{app:weak-postprocessing-objectives}
Let \(g_\theta\) denote either \(g_\beta\) or \(g_{a,b}\), and write
\[
\ell_\theta^+(x):=-\log g_\theta(x),
\qquad
\ell_\theta^-(x):=-\log(1-g_\theta(x)).
\]
The fully supervised calibration objective is \(\Ex[\ell_\theta^+(X)\mathbf 1\{Y=1\}+\ell_\theta^-(X)\mathbf 1\{Y=0\}]\).  The weak baselines in Section~\ref{sec:weak-postprocessing-baselines} minimize corrected or stabilized empirical versions of this objective.  For PU, the unbiased corrected objective is
\[
\widehat L_{\PU}(\theta)=
\frac{\pi_+}{n_+}\sum_{i=1}^{n_+}\{\ell_\theta^+(X_i^+)-\ell_\theta^-(X_i^+)\}
+\frac1{n_u}\sum_{j=1}^{n_u}\ell_\theta^-(X_j^u).
\]
In the PU Platt and temperature experiments, we follow the non-negative PU stabilization of \citet{kiryo2017nnpu} and optimize
\[
\widehat L_{\mathrm{nnPU}}(\theta)=
\frac{\pi_+}{n_+}\sum_{i=1}^{n_+}\ell_\theta^+(X_i^+)
+
\max\left\{0,
\frac1{n_u}\sum_{j=1}^{n_u}\ell_\theta^-(X_j^u)
-
\frac{\pi_+}{n_+}\sum_{i=1}^{n_+}\ell_\theta^-(X_i^+)
\right\}.
\]
This prevents the empirical negative-risk component from becoming negative and mitigates finite-sample overfitting.  The stabilization is used only to fit the parametric PU scaling map; PU MC estimation and WLMC auditing use the corrected residual moments.  For UU/MCD with \(P_{U_k}=\theta_kP_++(1-\theta_k)P_-\) and \(\Delta=\theta_1-\theta_2\), the objective is obtained by substituting the log-loss vector into \(M_{\UU}^{-1}\):
\[
\widehat L_{\UU}(\theta)=
\frac1{n_1}\sum_{i=1}^{n_1}\frac{(1-\theta_2)\pi_+\ell_\theta^+(X_i^{U_1})-\theta_2\pi_-\ell_\theta^-(X_i^{U_1})}{\Delta}
+\frac1{n_2}\sum_{j=1}^{n_2}\frac{-(1-\theta_1)\pi_+\ell_\theta^+(X_j^{U_2})+\theta_1\pi_-\ell_\theta^-(X_j^{U_2})}{\Delta}.
\]
For Pconf, Bayes' rule gives the corrected positive-sample objective
\[
\widehat L_{\Pconf}(\theta)=
\frac{\pi_+}{n}\sum_{i=1}^n
\left\{\ell_\theta^+(X_i)+\frac{1-r_i}{r_i}\ell_\theta^-(X_i)\right\}.
\]
In experiments, these objectives are optimized over the one-dimensional temperature parameter or the two-dimensional Platt parameters on a held-out weak calibration split, keeping the base predictor fixed.  The correctness proof is the same WSL decontamination argument used above: fix the loss vector, rewrite its population risk through the appropriate weak-observation inverse or confidence weight, and then replace the resulting observable expectations by their empirical averages.

\section{Settings of Toy Data Experiments}\label{app:toy-data-settings}
The toy experiment in Figure~\ref{fig:toy-and-tabular-mc} uses a one-dimensional binary classification problem with a known data-generating distribution.  We draw $X\sim\mathrm{Unif}([0,1])$ and evaluate a fixed probabilistic predictor
\[
    f(x)=\operatorname{clip}(0.12+0.76x,10^{-6},1-10^{-6}).
\]
The true conditional label probability is
\[
    r(x)=\Pr(Y=1\mid X=x)=\operatorname{clip}\{f(x)+0.11\sin(2\pi x),0.02,0.98\},
\]
and labels follow $Y\mid X=x\sim\operatorname{Bernoulli}(r(x))$.  The prevalence is $\pi_+=\Ex[r(X)]$.  The evaluated finite class is formed by crossing eight equal-width subgroups $\calc=\{G_1,\ldots,G_8\}$ partitioning $\calx$ ($[0,1]$) with ten equal-width score bins $\calw=\{B_1,\ldots,B_{10}\}$ on $[0,1]$.  Thus each witness cell is
\[
    a_{g,b}(x;f):=\mathbf 1\{x\in G_g\}\mathbf 1\{f(x)\in B_b\}.
\]

For this diagnostic, the population target is the finite-class multicalibration moment
\[
    \MC_{\mathrm{toy}}(f)
    :=\max_{g\in[8],\,b\in[10]}
    \left|\Ex\left[a_{g,b}(X;f)(Y-f(X))\right]\right|
    =\max_{g\in[8],\,b\in[10]}
    \left|\Ex\left[a_{g,b}(X;f)(r(X)-f(X))\right]\right|.
\]
This is the unnormalized residual moment used in the multicalibration definition; it is not divided by the mass of the corresponding cell.  The population value is computed numerically using a grid of 400,000 points over $[0,1]$.  The plotted sample sizes are $128,256,512,1024,2048,4096,8192,16384,32768$, and $65536$.

For each sample size, independent Monte Carlo repetitions are used to construct PN, Pconf, PU, and UU estimates of the same target.  The PN estimate uses labeled marginal samples $(X_i,Y_i)$ and the empirical cell moments $n^{-1}\sum_i a_{g,b}(X_i;f)(Y_i-f(X_i))$.  The Pconf estimate uses positive samples $X_i^+\sim P(X\mid Y=1)$ with the exact confidence $r(X_i^+)$ and the corrected moments
\[
    \frac{\pi_+}{n}\sum_{i=1}^n
    a_{g,b}(X_i^+;f)\left(1-\frac{f(X_i^+)}{r(X_i^+)}\right).
\]
The PU estimate uses positive samples and marginal unlabeled samples $X_j^u\sim P_X$:
\[
    \frac{\pi_+}{n}\sum_{i=1}^n a_{g,b}(X_i^+;f)
    -
    \frac1n\sum_{j=1}^n a_{g,b}(X_j^u;f)f(X_j^u).
\]
The UU estimate uses two unlabeled mixtures $P_{U_k}=\rho_kP_++(1-\rho_k)P_-$ with $\rho_1=0.80$ and $\rho_2=0.20$, where $P_+=P(X\mid Y=1)$ and $P_-=P(X\mid Y=0)$, and applies the corresponding two-source decontamination weights to estimate the same cell moments.

For a method $q\in\{\mathrm{PN},\mathrm{Pconf},\mathrm{PU},\mathrm{UU}\}$, sample size $n$, and repetition $s$, let $\widehat\MC^{q,(s)}_n$ be the corresponding estimate.  The plotted point is the mean absolute estimation error
\[
    \frac1R\sum_{s=1}^{R}
    \left|\widehat\MC^{q,(s)}_n-\MC_{\mathrm{toy}}(f)\right|,
\]
where $R$ is the number of Monte Carlo repetitions at that sample size; in the plotted experiment, $R=10$.  The vertical error bar is one standard deviation across those absolute errors,
\[
    \left[
    \frac1R\sum_{s=1}^{R}
    \left(
    \left|\widehat\MC^{q,(s)}_n-\MC_{\mathrm{toy}}(f)\right|
    -
    \frac1R\sum_{t=1}^{R}
    \left|\widehat\MC^{q,(t)}_n-\MC_{\mathrm{toy}}(f)\right|
    \right)^2
    \right]^{1/2}.
\]
Thus the error bars show variability across Monte Carlo repetitions; they are not confidence intervals or standard errors.  The gray guide lines in the plot have slope $-1/2$ on the log--log scale and indicate the parametric finite-sample rate predicted by the concentration bounds.

\section{Dataset and Subgroup Descriptions}\label{app:dataset-subgroups}
This appendix describes the real-data evaluation setting used in Section~\ref{sec:experiments}.  We follow the finite-subgroup viewpoint common in practical multicalibration experiments: rather than searching over an abstract infinite class, we evaluate a fixed collection of interpretable groups defined by dataset metadata or tabular attributes.  The groups may overlap, so a point can belong to several evaluated subpopulations.

We first define the evaluation metrics, subgroup collections, and data splits.  Appendix~\ref{app:weak-data-preparation} then gives the authoritative split-first construction of the weak observation views used by those metrics.  Later appendices summarize the model and post-processing hyperparameters, reproducibility details, and detailed result tables.
\subsection{Calibration and multicalibration metrics}
MC is the unnormalized witness moment appearing in the theory, so small subgroup--bin cells can have small absolute moments even when their conditional calibration error is large. To compare subpopulations fairly, we also report MaxECE: it normalizes each subgroup's binned calibration error by the subgroup mass and then takes the worst subgroup. Following \citet{hansen2024multicalibration}, our numerical experiments report ECE and MaxECE in addition to MC. Let $f:\calx\to[0,1]$ be a frozen predictor, let $B_1,\ldots,B_K$ be score bins, and let $G_1,\ldots,G_m$ be the evaluated subgroup collection.  We also write $G_0=\calx$ for the whole population and
\[
    a_{g,b}(x;f):=\mathbf 1\{x\in G_g\}\mathbf 1\{f(x)\in B_b\}.
\]
The superscript $\mathrm{or}$ denotes oracle clean-label quantities; in the fully supervised PN setting, these coincide with the corresponding full-label estimates.
For a group--bin cell, define the following moment and group mass
\begin{equation}
\label{eq:app-oracle-cell-moment}
    m^{\rm or}_{g,b}(f)
    :=\Ex\!\big[a_{g,b}(X;f)(Y-f(X))\big],
    \qquad
    \mu_g:=\Pr(X\in G_g).
\end{equation}
Given a clean sample $S=\{(X_i,Y_i)\}_{i=1}^n$, the corresponding oracle empirical quantities are
\begin{equation}
\label{eq:app-oracle-cell-estimator}
    \widehat m^{\rm or}_{g,b}(f)
    :=\frac1n\sum_{i=1}^n a_{g,b}(X_i;f)(Y_i-f(X_i)),
    \qquad
    \widehat\mu^{\rm or}_{g}:=\frac1n\sum_{i=1}^n \mathbf 1\{X_i\in G_g\}.
\end{equation}
The oracle ECE, maxECE, and MC columns in the tables are computed as
\begin{align}
\label{eq:app-oracle-ece-maxece-mc}
    \widehat{\mathrm{ECE}}^{\rm or}(f)
    &:=\sum_{b=1}^K\left|\widehat m^{\rm or}_{0,b}(f)\right|,\\
    \widehat{\mathrm{maxECE}}^{\rm or}(f)
    &:=\max_{g\in[m]:\,\widehat\mu^{\rm or}_g>\mu_{\min}}
    \frac{1}{\widehat\mu^{\rm or}_g}\sum_{b=1}^K
    \left|\widehat m^{\rm or}_{g,b}(f)\right|,\\
    \widehat{\mathrm{MC}}^{\rm or}(f)
    &:=\max_{g\in[m],\,b\in[K]}
    \left|\widehat m^{\rm or}_{g,b}(f)\right|.
\end{align}
Here $\mu_{\min}$ is the minimum active-group mass used by the implementation; setting $\mu_{\min}=0$ includes all groups with positive empirical mass.  The ECE formula is the usual binned ECE because $\left|\widehat m^{\rm or}_{0,b}\right|$ equals the bin mass times the absolute difference between the average label and average prediction in bin $B_b$.  The maxECE formula applies the same binned ECE calculation conditionally within each subgroup and then takes the worst subgroup.  MC instead keeps the supremum-style unnormalized group--bin residual used by the witness definition.

\paragraph{Weak estimates of ECE, maxECE, and MC.}
Appendix~\ref{app:weak-data-preparation} specifies how the held-out validation and test splits are converted into Pconf, PU, and UU views.  Here we only define the corrected group--bin residuals that are plugged into the metrics and into the WLMC audit.  For weak-observation model $q\in\{\PU,\UU,\Pconf\}$, define $\widehat m^{q}_{g,b}(f)$ by applying the corrected witness estimator to $c(x)=\mathbf 1\{x\in G_g\}$ and $w(v)=\mathbf 1\{v\in B_b\}$.  Thus, for PU data $(S_+,S_u)$,
\begin{equation}
\label{eq:app-pu-cell-estimator}
\widehat m^{\PU}_{g,b}(f)
=\frac{\pi_+}{n_+}\sum_{i=1}^{n_+}a_{g,b}(X_i^+;f)
-\frac1{n_u}\sum_{j=1}^{n_u}a_{g,b}(X_j^u;f)f(X_j^u).
\end{equation}
For UU data with $U_k=\rho_kP_+ +(1-\rho_k)P_-$ and $\Delta=\rho_1-\rho_2\neq0$,
\begin{equation}
\label{eq:app-uu-cell-estimator}
\widehat m^{\UU}_{g,b}(f)
=\sum_{k=1}^2\frac1{n_k}\sum_{i=1}^{n_k}
\alpha_k(f(X_i^{U_k}))a_{g,b}(X_i^{U_k};f),
\end{equation}
where
\[
\alpha_1(v)=\frac{(1-\rho_2)\pi_+(1-v)+\rho_2\pi_-v}{\Delta},
\qquad
\alpha_2(v)=\frac{-(1-\rho_1)\pi_+(1-v)-\rho_1\pi_-v}{\Delta}.
\]
For Pconf data $\{(X_i,r_i)\}_{i=1}^n$ with $X_i\sim P_+$ and $r_i\approx\Pr(Y=1\mid X_i)$,
\begin{equation}
\label{eq:app-pconf-cell-estimator}
\widehat m^{\Pconf}_{g,b}(f)
=\frac{\pi_+}{n}\sum_{i=1}^n
    a_{g,b}(X_i;f)\left(1-\frac{f(X_i)}{r_i}\right),
\end{equation}
with $r_i$ replaced by $\max\{r_i,\tau\}$ if confidence clipping is enabled.

The WLMC audit uses exactly this corrected residual table: it evaluates the finite subgroup--bin family, finds the largest corrected signed residual, and applies the corresponding post-processing update.  Appendix~\ref{app:weak-data-preparation} supplements this definition by specifying which split-specific source indices, confidence values, and subgroup masks enter the Pconf, PU, and UU versions of the table.

The weak ECE, maxECE, and MC columns use the same aggregation rules as Eq.~\eqref{eq:app-oracle-ece-maxece-mc}, replacing oracle cell moments by the corrected weak moments above.  For group-wise maxECE, the denominator is the target covariate group mass computed from the full feature-only held-out evaluation split, denoted $\widehat\mu_g^{\rm eval}$ and defined in Eq.~\eqref{eq:app-eval-denominator-split-first}.  Hence, for $q\in\{\PU,\UU,\Pconf\}$,
\begin{align}
\label{eq:app-weak-ece-maxece}
    \widehat{\mathrm{ECE}}^{q}(f)
    &:=\sum_{b=1}^K\left|\widehat m^{q}_{0,b}(f)\right|,\\
    \widehat{\mathrm{maxECE}}^{q}(f)
    &:=\max_{g\in[m]:\,\widehat\mu_g^{\rm eval}>\mu_{\min}}
    \frac{1}{\widehat\mu_g^{\rm eval}}\sum_{b=1}^K
    \left|\widehat m^{q}_{g,b}(f)\right|,\\
    \widehat{\mathrm{MC}}^{q}(f)
    &:=\max_{g\in[m],\,b\in[K]}
    \left|\widehat m^{q}_{g,b}(f)\right|.
\end{align}
This section therefore fixes the metric definitions; Appendix~\ref{app:weak-data-preparation} is the source of truth for the split-first weak-data construction, the unbiasedness of $\widehat\mu_g^{\rm eval}$ in our semi-synthetic experiments, and the alternative Pconf/PU/UU denominators needed when no full target covariate evaluation pool is available.  Oracle estimates are used only for reporting and analysis, while weak estimates are used for weak validation and weak post-processing selection.

\subsection{Real-data datasets and subgroup collections}
Table~\ref{tab:realdata-benchmark-summary} summarizes the real-data benchmark setting.  For the finite subgroup family, we adopt the dataset-specific subgroup collections used by the clean-label multicalibration benchmark of \citet{hansen2024multicalibration}; the subgroup family is not tuned separately for our weak-supervision experiments.  Each loader defines a finite, interpretable collection of protected-attribute groups, identity-mention groups, facial-attribute groups, and simple intersections.  The same code-level group indicators are used for oracle metrics, weak estimates, validation-based method selection, WLMC audits, and the CDF/scatter diagnostics in Appendix~\ref{app:large-model-heatmap-supp}.  All MC evaluations use 10 score bins, so the number of evaluated group--bin cells is $10m$, where $m$ is the number of groups reported in Table~\ref{tab:realdata-benchmark-summary}.  Table~\ref{tab:realdata-benchmark-summary} gives short descriptions of the loader-defined subgroup collections; the exact code-level definitions are provided in the reproducibility code.

\begin{table*}[t]
\centering
\small
\caption{Real-data benchmark summary.}
\label{tab:realdata-benchmark-summary}
\resizebox{\textwidth}{!}{%
\begin{tabular}{llrrp{6.8cm}}
\toprule
Dataset & Prediction target & Total $n$ & Groups $m$ & Subgroup description \\
\midrule
ACSIncome~\citep{ding2021retiring} & income $>50$K & 195,665 & 10 & Groups based on demographic ACS attributes such as race, sex, age, marital status, and simple intersections. \\
CreditDefault~\citep{default_of_credit_card_clients_350} & default on credit-card debt & 30,000 & 15 & Groups based on sex, education, marital status, age buckets, and related intersections after preprocessing. \\
HMDA~\citep{cooper2023variance} & mortgage approval & 114,185 & 13 & Groups based on applicant and co-applicant ethnicity, sex, and race in the processed HMDA loan-application data. \\
MEPS~\citep{sharma2021fair} & high healthcare utilization & 11,079 & 14 & Groups based on age, race, region, poverty-related categories, and related health-survey metadata. \\
CelebA & \texttt{Blond\_Hair} & 202,599 & 18 & Binary facial-attribute groups derived from CelebA metadata, including sex, age, hair, skin, and face-related attributes. \\
CivilComments & toxicity & 447,998 & 13 & Identity-mention groups for toxic-comment prediction, including gender, LGBTQ, religion, and race-related groups and their complements. \\
\bottomrule
\end{tabular}%
}
\end{table*}

\subsection{Data splits and effective sample sizes}
\label{app:data-splits}
The test split is created first with seed $42$.  The validation split is then created from the remaining data using the run seed.  Post-processing uses a separate correction split carved out of the training partition with seed $50$.  The tabular data uses correction fraction $0.4$ within the training partition, giving roughly $0.36n$ base-training examples and $0.24n$ correction examples.  CelebA and CivilComments use correction fraction $0.2$, giving roughly $0.48n$ base-training examples and $0.12n$ correction examples.  In all cases, validation and test use roughly $0.20n$ each.  These split sizes refer to the underlying fully labeled data before labels are hidden; Appendix~\ref{app:weak-data-preparation} explains how PN, Pconf, PU, and UU views are generated separately within each split.

\begin{table*}[t]
\centering
\small
\caption{Effective split sizes for the real-data experiments.  Values are rounded because the exact counts depend on integer rounding in the split routine.}
\label{tab:realdata-split-summary}
\begin{tabular}{lrrrr}
\toprule
Dataset & base train & post-processing correction & validation & test \\
\midrule
ACSIncome & $\approx 70{,}400$ & $\approx 47{,}000$ & $\approx 39{,}100$ & $\approx 39{,}100$ \\
CreditDefault & $\approx 10{,}800$ & $\approx 7{,}200$ & $\approx 6{,}000$ & $\approx 6{,}000$ \\
HMDA & $\approx 41{,}100$ & $\approx 27{,}400$ & $\approx 22{,}800$ & $\approx 22{,}800$ \\
MEPS & $\approx 4{,}000$ & $\approx 2{,}700$ & $\approx 2{,}200$ & $\approx 2{,}200$ \\
CelebA & $\approx 97{,}200$ & $\approx 24{,}300$ & $\approx 40{,}500$ & $\approx 40{,}500$ \\
CivilComments & $\approx 215{,}000$ & $\approx 53{,}800$ & $\approx 89{,}600$ & $\approx 89{,}600$ \\
\bottomrule
\end{tabular}
\end{table*}

\paragraph{Tabular preprocessing.}
The tabular preprocessing follows the repository dataloaders. ACSIncome uses the folktables ACSIncome setup with the binary income label. CreditDefault drops the ID column, bins several continuous financial variables into terciles, and one-hot encodes categorical columns. HMDA uses the processed Texas loan-application files, maps action codes to a binary approval label, drops denial-reason and constant columns, and one-hot encodes low-cardinality categorical variables. MEPS uses the provided processed feature/label files with cleaned column names. Standardization, when used by a model, is fit only on the training split.

\subsection{Random seeds and reported variability}\label{app:seeds-variability}
The main-paper figures report averages over independent runs, and the appendix reports the corresponding variability whenever the plot or table is based on repeated runs.  The toy convergence experiments use $10$ random seeds.  The tabular real-data experiments use $10$ random seeds, varying the train/validation split and the randomized components of the base predictor when applicable.  The CelebA and CivilComments large-model experiments use $5$ random seeds.  Oracle test labels are used only for final reporting, not for model selection or weak post-processing selection.

\section{Data preparation under weak supervision}\label{app:weak-data-preparation}
This appendix is the source of truth for the experimental weak-data construction.  Appendix~\ref{app:dataset-subgroups} defines the finite subgroup--bin metrics and the corrected weak MC residuals used by WLMC; the present appendix complements that definition by specifying exactly when the data are split, how Pconf, PU, and UU views are generated inside each split, and why the maxECE denominator used in the experiments estimates the target covariate marginal.

All weak datasets in our experiments are semi-synthetic views of the same underlying labeled data.  This protocol is important for two reasons.  First, it lets us evaluate weak-observation methods against oracle clean-label metrics.  Second, it makes the evaluation population invariant across PN, Pconf, PU, and UU: only the observation model changes, not the held-out covariate population on which calibration is evaluated.

\paragraph{Order of splitting and weak-view generation.}
For each dataset and random seed, we first split the fully labeled dataset into disjoint supervised splits.  The training portion is further divided into a base-training split and a post-processing correction split; the remaining held-out data are used for validation and test evaluation.  We write these four splits as
\[
D_{\rm tr},\qquad D_{\rm cal},\qquad D_{\rm val},\qquad D_{\rm test}.
\]
The weak observations are generated \emph{after} this split.  In particular, for each
\(S\in\{D_{\rm tr},D_{\rm cal},D_{\rm val},D_{\rm test}\}\), the Pconf, PU, and UU views are constructed separately using only examples inside that split.  We never first convert the full dataset into a weak dataset and then split it.  Thus, the weak training view is generated from \(D_{\rm tr}\), the weak post-processing view from \(D_{\rm cal}\), the weak validation view from \(D_{\rm val}\), and the weak test view from \(D_{\rm test}\).

For a generic split \(S=\{(X_i,Y_i)\}_{i\in I_S}\), let
\[
I_S^+ := \{i\in I_S:Y_i=1\},\qquad
I_S^- := \{i\in I_S:Y_i=0\},\qquad
\widehat\pi_{+,S}:=|I_S^+|/|I_S|.
\]
The hidden labels in \(S\) are used by the experimenter only to simulate the weak observation source and to compute oracle reference metrics.  The weak base learner, weak validation criterion, and weak post-processing algorithm receive only the corresponding weak view described below.

\paragraph{PN.}
The PN view observes the clean labeled pairs in the corresponding split,
\[
D_S^{\mathrm{PN}}=\{(X_i,Y_i):i\in I_S\}.
\]
This is the usual fully supervised view.  In the PN setting, weak and oracle estimates coincide because the labels are directly observed.

\paragraph{Pconf.}
The Pconf view keeps only positive examples from the split and attaches a positive-confidence value
\(r_i\approx \Pr(Y=1\mid X_i)\).  Concretely,
\[
D_S^{\Pconf}=\{(X_i,r_i):i\in I_S^+\}.
\]
In the implementation, \(r_i\) is obtained from a teacher score; the default setting uses this teacher score without additional perturbation.  The class prior \(\widehat\pi_{+,S}\) is also recorded for the Pconf correction.  Thus, the Pconf learner or post-processor sees positive examples and confidence values, but it does not see negative labels in that split.

\paragraph{PU.}
The PU view uses a case-control / SCAR two-sample construction inside the split.  The positive bag is sampled from the positive pool \(I_S^+\), while the unlabeled bag is sampled from the full split \(I_S\).  With default rates \(\lambda_P=0.5\) and \(\lambda_U=1.0\), the implementation draws
\[
n_P=\operatorname{round}(\lambda_P |I_S^+|),
\qquad
n_U=\operatorname{round}(\lambda_U |I_S|)
\]
indices.  The positive bag is
\[
S_P=\{X_i^+: i=1,\ldots,n_P\},\qquad X_i^+\sim P_+,
\]
and the unlabeled bag is
\[
S_U=\{X_j^u: j=1,\ldots,n_U\},\qquad X_j^u\sim P_X.
\]
Semi-synthetically, these samples are drawn from \(I_S^+\) and \(I_S\), respectively.  By default, both draws are with replacement and independent.  Therefore the same underlying example may appear more than once, a positive example may also appear in the unlabeled bag, and the two bags should be viewed as empirical draws from two source distributions rather than as a partition of the split.  The class prior \(\widehat\pi_{+,S}\) is computed from the underlying split and used in the PU correction.

\paragraph{UU.}
The UU view constructs two unlabeled mixture sources inside the split.  In code notation,
\[
U_1=(1-\gamma_1)P_+ + \gamma_1P_-,
\qquad
U_2=\gamma_2P_+ +(1-\gamma_2)P_-.
\]
The default values are \(\gamma_1=\gamma_2=0.2\), so the two sources have target positive fractions \(0.8\) and \(0.2\), respectively.  With default source rates \(\lambda_1=\lambda_2=1.0\), the implementation draws
\[
n_1=\operatorname{round}(\lambda_1 |I_S|),
\qquad
n_2=\operatorname{round}(\lambda_2 |I_S|)
\]
examples for the two sources.  Source 1 is formed by drawing approximately
\(\operatorname{round}((1-\gamma_1)n_1)\) positive examples and the remaining examples from the negative pool; source 2 is formed by drawing approximately
\(\operatorname{round}(\gamma_2 n_2)\) positive examples and the remaining examples from the negative pool.  The draws are with replacement and then shuffled.  The two sources are independently generated from the same split, and \(\widehat\pi_{+,S}\) is recorded for the UU decontamination coefficients.

\paragraph{Training, post-processing, and evaluation usage.}
The base predictor is trained on the view of \(D_{\rm tr}\) specified by the experimental scenario: PN, Pconf, PU, or UU.  Post-hoc recalibration is fit on the corresponding view of \(D_{\rm cal}\).  Weak validation quantities are computed from the weak view of \(D_{\rm val}\), and final weak estimates are computed from the weak view of \(D_{\rm test}\).  Oracle validation or test labels are used only for oracle reporting and analysis, not for weak post-processing.

\paragraph{Evaluation denominator for maxECE.}
This paragraph expands the denominator convention used in Eq.~\eqref{eq:app-weak-ece-maxece}.  For the reported weak ECE, maxECE, and MC estimates, the signed group--bin residual moment is estimated using the regime-specific weak correction, as in Eqs.~\eqref{eq:app-pu-cell-estimator}--\eqref{eq:app-pconf-cell-estimator}.  However, the group denominator in maxECE is computed from the invariant held-out covariate pool of the same evaluation split.  For an evaluation split \(E\in\{D_{\rm val},D_{\rm test}\}\), define
\[
E^X:=\{X_i^{\rm eval}:i=1,\ldots,n_{\rm eval}\}.
\]
Because weak views are generated after the split, this covariate pool is a held-out sample from the target marginal \(P_X=\pi_+P_+ +(1-\pi_+)P_-\), containing both \(Y=1\) and \(Y=0\) examples according to the target class mixture.  Thus, for every group \(G_g\), the implementation uses
\begin{equation}
\label{eq:app-eval-denominator-split-first}
\widehat\mu_g^{\rm eval}
:=\frac{1}{n_{\rm eval}}\sum_{i=1}^{n_{\rm eval}}
\mathbf 1\{X_i^{\rm eval}\in G_g\}.
\end{equation}
This estimator uses no labels.  Under the random split model, it is an unbiased estimator of the target group mass,
\[
\mathbb E[\widehat\mu_g^{\rm eval}]=P_X(X\in G_g).
\]
Consequently, in our semi-synthetic experimental protocol, the same denominator is valid for PN, Pconf, PU, and UU.  The weak-observation correction is applied to the numerator residual moments, not to the definition of the evaluation population.  In particular, for \(q\in\{\Pconf,\PU,\UU\}\), the reported group-wise weak maxECE has the form
\[
\widehat{\mathrm{maxECE}}^q(f)
=
\max_{g:\widehat\mu_g^{\rm eval}>\mu_{\min}}
\frac{1}{\widehat\mu_g^{\rm eval}}
\sum_{b=1}^K |\widehat m^q_{g,b}(f)|,
\]
where \(\widehat m^q_{g,b}(f)\) is the corrected weak cell residual.  The denominator \(\widehat\mu_g^{\rm eval}\) itself is unbiased for \(P_X(G_g)\); the ratio defining maxECE is a plug-in finite-sample estimate of the corresponding normalized calibration quantity.

\paragraph{If no full evaluation covariate pool is available.}
The denominator choice above is specific to our controlled semi-synthetic experiments, where the full feature-only validation and test pools are retained even when labels are hidden from the weak estimator. Here we discuss the case where, in a genuinely weak-only evaluation setting, one may not have access to such an invariant held-out covariate pool.  Then the group mass \(P_X(G_g)\) must be estimated from the available weak source using the observation-model-specific marginal correction.  For PU, the unlabeled bag is already drawn from \(P_X\), so one may use
\[
\widehat\mu_g^{\PU}=\frac1{n_U}\sum_{j=1}^{n_U}\mathbf 1\{X_j^u\in G_g\}.
\]
For Pconf, if only positive-confidence data \((X_i^+,r_i)\) are available, then
\[
P_X(G_g)=\pi_+\mathbb E_{X\sim P_+}
\left[\frac{\mathbf 1\{X\in G_g\}}{r(X)}\right],
\]
so the corresponding plug-in estimator is
\[
\widehat\mu_g^{\Pconf}
=\frac{\pi_+}{n_+}\sum_{i=1}^{n_+}
\frac{\mathbf 1\{X_i^+\in G_g\}}{r_i},
\]
with the same confidence clipping convention as in the residual estimator if clipping is used.  For UU, let
\[
\widehat q_{k,g}:=\frac1{n_k}\sum_{i=1}^{n_k}\mathbf 1\{X_i^{U_k}\in G_g\},
\qquad k=1,2,
\]
and let \(\delta=1-\gamma_1-\gamma_2\neq0\).  Since
\(U_1=(1-\gamma_1)P_+ +\gamma_1P_-\) and
\(U_2=\gamma_2P_+ +(1-\gamma_2)P_-\), the target marginal group mass is estimated by
\[
\widehat\mu_g^{\UU}
=
\frac{\pi_+-\gamma_2}{\delta}\widehat q_{1,g}
+
\frac{(1-\pi_+)-\gamma_1}{\delta}\widehat q_{2,g}.
\]
These alternative denominators are not used in our reported semi-synthetic experiments because Eq.~\eqref{eq:app-eval-denominator-split-first} is available and directly estimates the target evaluation population.  They would be the appropriate denominators in deployments where the only accessible evaluation data are the Pconf or UU weak sources themselves.

\paragraph{Validation and model selection.}
We distinguish two uses of validation.  The first is base-predictor selection, such as choosing among fixed candidate predictors or selecting a neural model by validation performance.  In the PN setting, the validation view is the clean labeled split, and the validation score is ordinary accuracy:
\[
\widehat{\mathrm{Acc}}_{\rm val}(f)
=\frac{1}{|I_{\rm val}|}\sum_{i\in I_{\rm val}}
\mathbf 1\{\mathbf 1[f(X_i)\ge 1/2]=Y_i\}.
\]
PN base candidates are selected by maximizing this quantity.  This is the only case in which clean validation labels are used to select a base predictor.

For weakly trained base predictors, the clean labels in \(D_{\rm val}\) are hidden from the selection rule.  We instead construct the same weak view of \(D_{\rm val}\) as in the corresponding training regime and select the candidate with the smallest weak validation loss.  Let \(s_f(x)\) denote the logit of candidate \(f\), and define
\(\ell_+(s)=\log(1+e^{-s})\) and \(\ell_-(s)=\log(1+e^s)\).  For Pconf, using validation positive-confidence examples \((X_i^+,r_i)\), the criterion is
\[
\widehat L_{\rm val}^{\Pconf}(f)
=\frac{\widehat\pi_{+,{\rm val}}}{n_+}
\sum_{i=1}^{n_+}
\left\{
\ell_+(s_f(X_i^+))+
\frac{1-r_i}{r_i}\ell_-(s_f(X_i^+))
\right\},
\]
with the same clipping convention for \(r_i\) as in the Pconf estimator.  For PU, using the validation positive bag \(S_P^{\rm val}\) and unlabeled bag \(S_U^{\rm val}\), the tabular validation criterion is the corrected PU validation risk
\[
\widehat L_{\rm val}^{\PU}(f)
=\frac{\widehat\pi_{+,{\rm val}}}{n_P}
\sum_{i=1}^{n_P}\ell_+(s_f(X_i^+))
+
\frac{1}{n_U}\sum_{j=1}^{n_U}\ell_-(s_f(X_j^u))
-\frac{\widehat\pi_{+,{\rm val}}}{n_P}
\sum_{i=1}^{n_P}\ell_-(s_f(X_i^+)).
\]
Thus the positive bag contributes the positive-risk term and the positive correction term, while the unlabeled bag estimates the marginal negative-loss term.

For UU, using the two validation mixture sources \(U_1^{\rm val}\) and \(U_2^{\rm val}\), we use the contamination-matrix inverse.  Let \(\widehat{\mathbb E}_{1,{\rm val}}\) and \(\widehat{\mathbb E}_{2,{\rm val}}\) denote empirical averages over these two sources and set \(\delta=1-\gamma_1-\gamma_2\).  The corrected validation loss is
\[
\begin{aligned}
\widehat L_{\rm val}^{\UU}(f)
={}&\widehat\pi_{+,{\rm val}}
\left\{
\frac{1-\gamma_2}{\delta}\widehat{\mathbb E}_{1,{\rm val}}[\ell_+(s_f(X))]
-\frac{\gamma_1}{\delta}\widehat{\mathbb E}_{2,{\rm val}}[\ell_+(s_f(X))]
\right\} \\
&+(1-\widehat\pi_{+,{\rm val}})
\left\{
-\frac{\gamma_2}{\delta}\widehat{\mathbb E}_{1,{\rm val}}[\ell_-(s_f(X))]
+\frac{1-\gamma_1}{\delta}\widehat{\mathbb E}_{2,{\rm val}}[\ell_-(s_f(X))]
\right\}.
\end{aligned}
\]
For the non-negative UU variant, the same two-source decomposition is used with the non-negative correction applied to the partial risks, matching the training objective.  The hidden labels in \(D_{\rm val}\) are used only by the experimenter to generate these weak validation views and to compute oracle reference diagnostics; they are not used by the weak selection rule.

The second use of validation is post-processing selection after the base predictor has been frozen.  All post-processing candidates are fit on \(D_{\rm cal}\), not on validation or test.  They are then selected on \(D_{\rm val}\) by the supervision-appropriate calibration criterion for the reported table or figure.  Clean-label PN selection computes this criterion from the PN validation labels, while weak selection computes the corresponding weak estimate from \(D_{\rm val}^{\Pconf}\), \(D_{\rm val}^{\PU}\), or \(D_{\rm val}^{\UU}\).  For MC-focused post-processing comparisons, for example, weak post-processing methods are selected by the weak validation estimate of MC violation, whereas PN/oracle diagnostics use the clean-label validation analogue.  Accuracy is not used to select post-processing methods; it is reported on the held-out test split as auxiliary predictive-performance information.  Test labels are not used until the final evaluation.

\paragraph{Use of nnPU and ordinary PU objectives.}
For reference, the non-negative PU principle of \citet{kiryo2017nnpu} clips the empirical negative-risk component when that stabilized objective is used.  Let \(s_\theta(x)\in\mathbb R\) denote the model logit.  With logistic losses \(\ell_+(s)=\log(1+e^{-s})\) and \(\ell_-(s)=\log(1+e^{s})\), the empirical PU decomposition is
\begin{align}
\label{eq:app-nnpu-components}
\widehat R_+^{\PU}(\theta)
&:=\pi_+\widehat\Ex_{P_+}[\ell_+(s_\theta(X))],\\
\widehat R_-^{\PU}(\theta)
&:=\widehat\Ex_{P_X}[\ell_-(s_\theta(X))]
  -\pi_+\widehat\Ex_{P_+}[\ell_-(s_\theta(X))].
\end{align}
The nnPU training objective clips the empirical negative-risk component,
\begin{equation}
\label{eq:app-nnpu-objective}
\widehat R_{\rm nnPU}(\theta)
=\widehat R_+^{\PU}(\theta)+\max\{0,\widehat R_-^{\PU}(\theta)\}.
\end{equation}
In our experiments, this clipped nnPU objective is used only for PU base learning in CelebA and CivilComments, and for PU Platt and temperature post-processing in those large-model experiments.  It is not used for Weak MC/WLMC, for any tabular base learner, for tabular weak post-processing, or for evaluation.  These parts instead use the ordinary PU identities in the proposed WSL framework: tabular base learning applies the WSL invariant-estimator rewrite to the corresponding supervised loss or fitting criterion, tabular weak Platt and temperature scaling use the ordinary PU version of the same WSL objective, and Weak MC/WLMC and evaluation use the proposed WSL estimators for calibration and ECE/MC quantities.

\section{Real-data model and post-processing hyperparameters}\label{app:calib-hparams}
This section records the base predictors, fixed hyperparameters, and post-processing hyperparameters used in the real-data experiments.  The model families and tabular hyperparameter settings follow the clean-label multicalibration benchmark of \citet{hansen2024multicalibration}.  Our weak-supervision experiments keep the same base architecture within each dataset and replace only the supervision source and objective as described in Appendix~\ref{app:weak-data-preparation}. PN base models use oracle validation accuracy for base-model selection.  Weakly trained base models use the weak validation objectives described in Appendix~\ref{app:weak-data-preparation}; no clean validation labels are used to select weak candidates.  All reported post-processing methods freeze the selected base predictor and are fit only on the correction split.

\subsection{Tabular base predictors}\label{app:tabular-model-details}
The tabular data uses the same finite feature representations and subgroup metadata as in Appendix~\ref{app:dataset-subgroups}.  The reported tabular experiments use six base-predictor families: decision trees, random forests, logistic regression, support vector machines, naive Bayes, and MLP.  We follow the same benchmark candidate grids used in the clean-label benchmark and use the selected settings for correction fraction $0.4$; the weak experiments do not run a new hyperparameter search.  For tabular weak baselines, the base predictor is obtained by minimizing the regime-specific empirical objective obtained from the WSL invariant-estimator rewrite of the corresponding supervised loss or fitting criterion.

For decision trees, the selected model controls tree depth and the minimum number of samples required to split an internal node.  For random forests, we use 100-tree ensembles with the selected tree-depth and split-size controls for each dataset.  The random-forest predictor is also used as the Pconf teacher in the tabular construction.

For logistic regression, we use the benchmark-selected amount of regularization.  For support vector machines, we use the benchmark-selected margin-regularization setting with standardized features.  Naive Bayes is treated as a hyperparameter-free baseline.  Here MLP denotes a fully connected feed-forward binary classifier with ReLU hidden layers and two output logits, trained with cross-entropy and the Adam optimizer on standardized features.  At correction fraction $0.4$, the MLP configurations are as follows: ACSIncome uses architecture $10$--$128$--BN--$256$--BN--$128$--$2$, batch size $64$, learning rate $10^{-3}$, no weight decay, and $50$ epochs; CreditDefault uses $118$--$128$--$256$--$128$--$2$, batch size $128$, learning rate $10^{-2}$, no weight decay, and $5$ epochs; HMDA uses $89$--$100$--$2$, batch size $128$, learning rate $10^{-3}$, weight decay $10^{-5}$, and $30$ epochs; and MEPS uses $139$--$100$--$2$, batch size $16$, learning rate $10^{-4}$, weight decay $10^{-5}$, and $50$ epochs.  Here BN denotes batch normalization.

\subsection{Image and text base predictors}\label{app:large-model-details}
The large-model experiments follow the vision/language setup of the reference benchmark: transformer backbones are fine-tuned from pretrained weights, whereas the ResNet image companion model is trained from scratch as a task-specific classifier.  For each named backbone, the preprocessing or tokenizer, backbone size, classifier head, and training grid are fixed across seeds and weak-observation models, so differences across reported result rows are attributable to the supervision and post-processing data rather than to architecture changes.

The large-model base predictors are as follows.  CelebA-ViT predicts the binary facial attribute \textbf{\texttt{Blond\_Hair}} using a pretrained ViT backbone with a binary classification head; the ViT preprocessing is fixed across PN, PU, and UU rows, and PU-trained bases use nnPU.  CelebA-ImageResNet predicts the same \textbf{\texttt{Blond\_Hair}} target with a ResNet-50 binary classifier trained as the appendix companion image model.  CivilComments-BERT predicts toxicity with a pretrained BERT-based encoder and a binary classification head, using the same tokenizer and classification head across PN and weak-supervision regimes.

\paragraph{CelebA-ViT.}
The CelebA task is binary classification of the facial attribute \textbf{\texttt{Blond\_Hair}}.  The ViT setting uses a pretrained ViT backbone with a binary classification head.  Images are resized and normalized according to the ViT preprocessing pipeline, and the model is fine-tuned for 10 epochs with batch size 64, Adam, learning rate \(10^{-4}\), and weight decay \(10^{-2}\).  The same ViT architecture and image preprocessing are used for PN, PU, and UU rows; weakly trained rows differ only in the weak training objective and observed data view.

\paragraph{CelebA-ImageResNet.}
The ImageResNet companion experiment uses a ResNet-50 binary classifier for the same \textbf{\texttt{Blond\_Hair}} target.  The model is trained for 50 epochs with batch size 64, SGD, momentum 0.9, and learning rate \(10^{-3}\).  Table~\ref{tab:celeba-resnet-main} reports this ResNet experiment in the same format as Tables~\ref{tab:celeba-vit-main} and~\ref{tab:civil-bert-main}.

\paragraph{CivilComments-BERT.}
The CivilComments task is a binary text classification problem: given the text of an online comment, the model predicts whether the comment is toxic or
non-toxic.  Following the CivilComments setup used in \citet{hansen2024multicalibration}, the identity annotations are not prediction targets; they are used to define evaluation groups for group-wise calibration and multicalibration metrics. Text is tokenized with the corresponding
pretrained tokenizer, truncated or padded to maximum length 300, and fine-tuned for 10 epochs with batch size 16, Adam, learning rate \(10^{-5}\), and weight decay \(10^{-2}\).  The same tokenizer, maximum length, and classification head are reused across PN and weak-supervision regimes.

\subsection{Post-processing hyperparameters and WLMC implementation}\label{app:postproc-hparams}
The post-processing methods use a frozen base predictor.  The non-WLMC post-processing optimizers are as follows, while Table~\ref{tab:wlmc-hparams} gives the WLMC audit/update parameters.

For temperature scaling, the fitted map is a single scalar temperature applied to the logit score.  For tabular rows, the objective is clean PN log-loss for PN rows and the corrected weak log-loss for Pconf, PU, and UU rows; in particular, tabular PU temperature scaling uses the corrected PU objective without the non-negative clipping in Eq.~\eqref{eq:app-nnpu-objective}.  We optimize this scalar with L-BFGS-B; tabular weak-temperature rows use at most 200 iterations, and the large-model exported rows use the same fixed post-processing routine used for the corresponding reported result tables.

For Platt scaling, the fitted map is the affine logit transformation $a\,\mathrm{logit}(f(x))+b$.  For tabular rows, the objective is clean PN log-loss for PN rows and the corrected weak log-loss for Pconf, PU, and UU rows; in particular, tabular PU Platt scaling uses the corrected PU objective without the non-negative clipping in Eq.~\eqref{eq:app-nnpu-objective}.  We optimize the two affine parameters with L-BFGS-B and use a maximum of 250 iterations in the tabular exported grid.

\paragraph{Finite-basis WLMC.}
All real-data WLMC runs use the same finite group--bin witness family as the reported MC metric.  The witness family is the Cartesian product of the adopted dataset subgroups and uniform score-bin indicators.  Because both $\mathcal C$ and $\mathcal W$ are finite in all main-text experiments, the audit maximizes over $\mathcal C\times\mathcal W$ exactly.  Consequently, the audit step is implemented by exhaustive enumeration over all subgroup--score-bin cells rather than by an external learning oracle.  In each boosting round, WLMC audits all cells under the requested supervision regime (PN, PU, UU, or Pconf), selects the cell with the largest absolute signed violation, and applies an additive correction to the scores in that cell.  If the selected cell is $A_t$ and the signed violation is $v_t$, the update is
\[
f_{t+1}(x)
=
\mathrm{clip}_{[0,1]}\left(
    f_t(x)+\eta\,\mathrm{sign}(v_t)\mathbf 1\{x\in A_t\}
\right).
\]
The procedure stops after $T$ rounds or when the largest absolute violation is below the threshold $\alpha$.  Cells with empirical active mass below the minimum active mass are ignored.  For Pconf audits, confidences are lower-clipped at $r_{\min}$ when clipping is enabled.

\begin{table}[t]
\centering
\small
\caption{WLMC hyperparameters used in the real-data experiments.  All settings use exhaustive auditing over the finite subgroup--score-bin witness family.}
\label{tab:wlmc-hparams}
\begin{tabular}{llll}
\toprule
Parameter & Tabular & CelebA-ViT & CivilComments-BERT \\
\midrule
Number of bins & $10$ & $10$ & $10$ \\
Step size $\eta$ & $0.05$ & $0.04$ & $0.05$ \\
Number of rounds $T$ & $50$ & $50$ & $50$ \\
Threshold $\alpha$ & $0.005$ & $0.005$ & $0.005$ \\
Minimum active mass & $0.01$ & $0.01$ & $0.01$ \\
Positive-confidence $r_{\min}$ & $10^{-3}$ & -- & -- \\
\bottomrule
\end{tabular}
\end{table}

CelebA-ImageResNet uses the same finite CelebA subgroup family and 10-bin witness construction.  Table~\ref{tab:celeba-resnet-main} is a companion to the CelebA-ViT results in Table~\ref{tab:celeba-vit-main} and now reports test accuracy as well as calibration metrics.

\section{Reproducibility capsule}\label{app:reproducibility-capsule}
All experiments were run on a workstation with two NVIDIA A6000 Ada GPUs and an Intel(R) Xeon(R) w5-3435X CPU running at 3.10 GHz.  The released reproduction code records the exact Python environment, package versions, command-line arguments, random seeds, and data-processing scripts used to generate the figures and tables.  In this appendix, we therefore specify the experimental protocol, model families, and hardware, while deferring exact package-version strings to the reproducibility package.

The tabular experiments run within approximately 5 hours on CPU/GPU, while the CelebA and CivilComments large-model experiments require approximately 72 hours on GPU. Peak GPU memory usage is approximately 40 GB for both CelebA-ViT and for CivilComments-BERT.

The experiment design follows the clean-label multicalibration benchmark of \citet{hansen2024multicalibration} for dataset preprocessing, finite subgroup construction, tabular base-model families, and large-model training grids; our changes are the weak-observation views, corrected weak objectives, and weak-label post-processing/evaluation rules described above.  The tabular experiments use six base-model families, including MLP.  All reported post-processing methods are fit on the correction split, selected on the validation split using the supervision-appropriate validation criterion, and evaluated once on the held-out test split.

\section{Results on Tabular Datasets}\label{app:tabular-fixed}
The main text reports diagnostic plots.  Here we show the fixed-hyperparameter table format used for detailed inspection.  We keep only the representative WLMC implementation setting with step size $0.05$ and update cap $50$ so that each supervision family appears once, rather than repeating many nearly identical rows for different WLMC budgets.

\paragraph{Coverage of tabular base predictors.}
Tables~\ref{tab:tabular-acsincome-decisiontree-main}--\ref{tab:tabular-meps-mlp-main} report the tabular test results for the four datasets used in the main text---ACSIncome, CreditDefault, HMDA, and MEPS---and the six base-predictor families: DecisionTree, LogisticRegression, NaiveBayes, RandomForest, SVM, and MLP. Rows are grouped by $(\mathrm{base},\mathrm{post})$; the oracle columns use clean test labels for reporting only, the weak columns use the corresponding corrected weak estimates, accuracy is reported against clean test labels, and boldface marks the single best mean within each displayed block.

The tabular scatter diagnostics use the same ten seeds, four datasets, and six base-predictor families.  Therefore the best-of-three post-processing scatter has $10\times4\times6=240$ points in the $(\mathrm{PN},\mathrm{PN})$ panel and $10\times4\times6\times3=720$ points in each of the $(\mathrm{PN},\mathrm{weak})$ and $(\mathrm{weak},\mathrm{weak})$ panels, where the additional factor of $3$ is the number of weak-observation types \(\{\PU,\Pconf,\UU\}\).  The oracle-versus-weak scatter that displays all three post-processing families has $10\times4\times6\times3\times3=2160$ points per metric after including the additional factor of $3$ for WLMC, temperature scaling, and Platt scaling.

\paragraph{Tabular result summary.}
The tabular results show that the proposed WLMC is particularly stable for the MC-oriented quantities: it often reduces subgroup-wise multicalibration error and maxECE without relying on clean labels at post-processing time, and the oracle-versus-weak scatter plots indicate that the weak estimates track the corresponding oracle quantities well enough to guide model selection.  Platt scaling is frequently competitive for ECE because it can adjust both slope and intercept of the score distribution, but its effect on the worst subgroup-bin violations is less uniform.  Temperature scaling is the most fragile of the three post-processing families: a single global scale can improve average calibration in some easy cases, but it often fails to correct subgroup- or bin-specific errors and can even worsen ECE or MC when the base scores are already misaligned in a non-uniform way. WLMC directly targets the WSL-estimated multicalibration violations, and therefore often continues to improve MC and maxECE even when the one-dimensional score transformations are insufficient.

\begin{table*}[t]
\centering
\scriptsize
\setlength{\tabcolsep}{3.2pt}
\caption{Tabular test results on ACSIncome with DecisionTree as the base predictor. Calibration metrics are reported as $10^2\!\times$ mean $\pm$ standard deviation over ten seeds; accuracy is reported in percent. WLMC uses step size $0.05$ and at most $50$ updates. Bold marks the single best mean within each $(\mathrm{base},\mathrm{post})$ block; variance is not used for bolding.}
\label{tab:tabular-acsincome-decisiontree-main}
\resizebox{\textwidth}{!}{%
%
}
\end{table*}

\begin{table*}[t]
\centering
\scriptsize
\setlength{\tabcolsep}{3.2pt}
\caption{Tabular test results on ACSIncome with LogisticRegression as the base predictor. Calibration metrics are reported as $10^2\!\times$ mean $\pm$ standard deviation over ten seeds; accuracy is reported in percent. WLMC uses step size $0.05$ and at most $50$ updates. Bold marks the single best mean within each $(\mathrm{base},\mathrm{post})$ block; variance is not used for bolding.}
\label{tab:tabular-acsincome-logisticregression-main}
\resizebox{\textwidth}{!}{%
%
%
}
\end{table*}

\begin{table*}[t]
\centering
\scriptsize
\setlength{\tabcolsep}{3.2pt}
\caption{Tabular test results on ACSIncome with NaiveBayes as the base predictor. Calibration metrics are reported as $10^2\!\times$ mean $\pm$ standard deviation over ten seeds; accuracy is reported in percent. WLMC uses step size $0.05$ and at most $50$ updates. Bold marks the single best mean within each $(\mathrm{base},\mathrm{post})$ block; variance is not used for bolding.}
\label{tab:tabular-acsincome-naivebayes-main}
\resizebox{\textwidth}{!}{%
%
%
}
\end{table*}

\begin{table*}[t]
\centering
\scriptsize
\setlength{\tabcolsep}{3.2pt}
\caption{Tabular test results on ACSIncome with RandomForest as the base predictor. Calibration metrics are reported as $10^2\!\times$ mean $\pm$ standard deviation over ten seeds; accuracy is reported in percent. WLMC uses step size $0.05$ and at most $50$ updates. Bold marks the single best mean within each $(\mathrm{base},\mathrm{post})$ block; variance is not used for bolding.}
\label{tab:tabular-acsincome-randomforest-main}
\resizebox{\textwidth}{!}{%
%
%
}
\end{table*}

\begin{table*}[t]
\centering
\scriptsize
\setlength{\tabcolsep}{3.2pt}
\caption{Tabular test results on ACSIncome with SVM as the base predictor. Calibration metrics are reported as $10^2\!\times$ mean $\pm$ standard deviation over ten seeds; accuracy is reported in percent. WLMC uses step size $0.05$ and at most $50$ updates. Bold marks the single best mean within each $(\mathrm{base},\mathrm{post})$ block; variance is not used for bolding.}
\label{tab:tabular-acsincome-svm-main}
\resizebox{\textwidth}{!}{%
%
%
}
\end{table*}

\begin{table*}[t]
\centering
\scriptsize
\setlength{\tabcolsep}{3.2pt}
\caption{Tabular test results on ACSIncome with MLP as the base predictor. Calibration metrics are reported as $10^2\!\times$ mean $\pm$ standard deviation over ten seeds; accuracy is reported in percent. WLMC uses step size $0.05$ and at most $50$ updates. Bold marks the single best mean within each $(\mathrm{base},\mathrm{post})$ block; variance is not used for bolding.}
\label{tab:tabular-acsincome-mlp-main}
\resizebox{\textwidth}{!}{%
%
%
}
\end{table*}

\begin{table*}[t]
\centering
\scriptsize
\setlength{\tabcolsep}{3.2pt}
\caption{Tabular test results on CreditDefault with DecisionTree as the base predictor. Calibration metrics are reported as $10^2\!\times$ mean $\pm$ standard deviation over ten seeds; accuracy is reported in percent. WLMC uses step size $0.05$ and at most $50$ updates. Bold marks the single best mean within each $(\mathrm{base},\mathrm{post})$ block; variance is not used for bolding.}
\label{tab:tabular-creditdefault-decisiontree-main}
\resizebox{\textwidth}{!}{%
%
%
}
\end{table*}

\begin{table*}[t]
\centering
\scriptsize
\setlength{\tabcolsep}{3.2pt}
\caption{Tabular test results on CreditDefault with LogisticRegression as the base predictor. Calibration metrics are reported as $10^2\!\times$ mean $\pm$ standard deviation over ten seeds; accuracy is reported in percent. WLMC uses step size $0.05$ and at most $50$ updates. Bold marks the single best mean within each $(\mathrm{base},\mathrm{post})$ block; variance is not used for bolding.}
\label{tab:tabular-creditdefault-logisticregression-main}
\resizebox{\textwidth}{!}{%
%
}
\end{table*}

\begin{table*}[t]
\centering
\scriptsize
\setlength{\tabcolsep}{3.2pt}
\caption{Tabular test results on CreditDefault with NaiveBayes as the base predictor. Calibration metrics are reported as $10^2\!\times$ mean $\pm$ standard deviation over ten seeds; accuracy is reported in percent. WLMC uses step size $0.05$ and at most $50$ updates. Bold marks the single best mean within each $(\mathrm{base},\mathrm{post})$ block; variance is not used for bolding.}
\label{tab:tabular-creditdefault-naivebayes-main}
\resizebox{\textwidth}{!}{%
%
%
}
\end{table*}

\begin{table*}[t]
\centering
\scriptsize
\setlength{\tabcolsep}{3.2pt}
\caption{Tabular test results on CreditDefault with RandomForest as the base predictor. Calibration metrics are reported as $10^2\!\times$ mean $\pm$ standard deviation over ten seeds; accuracy is reported in percent. WLMC uses step size $0.05$ and at most $50$ updates. Bold marks the single best mean within each $(\mathrm{base},\mathrm{post})$ block; variance is not used for bolding.}
\label{tab:tabular-creditdefault-randomforest-main}
\resizebox{\textwidth}{!}{%
%
%
}
\end{table*}

\begin{table*}[t]
\centering
\scriptsize
\setlength{\tabcolsep}{3.2pt}
\caption{Tabular test results on CreditDefault with SVM as the base predictor. Calibration metrics are reported as $10^2\!\times$ mean $\pm$ standard deviation over ten seeds; accuracy is reported in percent. WLMC uses step size $0.05$ and at most $50$ updates. Bold marks the single best mean within each $(\mathrm{base},\mathrm{post})$ block; variance is not used for bolding.}
\label{tab:tabular-creditdefault-svm-main}
\resizebox{\textwidth}{!}{%
%
%
}
\end{table*}

\begin{table*}[t]
\centering
\scriptsize
\setlength{\tabcolsep}{3.2pt}
\caption{Tabular test results on CreditDefault with MLP as the base predictor. Calibration metrics are reported as $10^2\!\times$ mean $\pm$ standard deviation over ten seeds; accuracy is reported in percent. WLMC uses step size $0.05$ and at most $50$ updates. Bold marks the single best mean within each $(\mathrm{base},\mathrm{post})$ block; variance is not used for bolding.}
\label{tab:tabular-creditdefault-mlp-main}
\resizebox{\textwidth}{!}{%
%
%
}
\end{table*}

\begin{table*}[t]
\centering
\scriptsize
\setlength{\tabcolsep}{3.2pt}
\caption{Tabular test results on HMDA with DecisionTree as the base predictor. Calibration metrics are reported as $10^2\!\times$ mean $\pm$ standard deviation over ten seeds; accuracy is reported in percent. WLMC uses step size $0.05$ and at most $50$ updates. Bold marks the single best mean within each $(\mathrm{base},\mathrm{post})$ block; variance is not used for bolding.}
\label{tab:tabular-hmda-decisiontree-main}
\resizebox{\textwidth}{!}{%
%
%
}
\end{table*}

\begin{table*}[t]
\centering
\scriptsize
\setlength{\tabcolsep}{3.2pt}
\caption{Tabular test results on HMDA with LogisticRegression as the base predictor. Calibration metrics are reported as $10^2\!\times$ mean $\pm$ standard deviation over ten seeds; accuracy is reported in percent. WLMC uses step size $0.05$ and at most $50$ updates. Bold marks the single best mean within each $(\mathrm{base},\mathrm{post})$ block; variance is not used for bolding.}
\label{tab:tabular-hmda-logisticregression-main}
\resizebox{\textwidth}{!}{%
%
%
}
\end{table*}

\begin{table*}[t]
\centering
\scriptsize
\setlength{\tabcolsep}{3.2pt}
\caption{Tabular test results on HMDA with NaiveBayes as the base predictor. Calibration metrics are reported as $10^2\!\times$ mean $\pm$ standard deviation over ten seeds; accuracy is reported in percent. WLMC uses step size $0.05$ and at most $50$ updates. Bold marks the single best mean within each $(\mathrm{base},\mathrm{post})$ block; variance is not used for bolding.}
\label{tab:tabular-hmda-naivebayes-main}
\resizebox{\textwidth}{!}{%
%
%
}
\end{table*}

\begin{table*}[t]
\centering
\scriptsize
\setlength{\tabcolsep}{3.2pt}
\caption{Tabular test results on HMDA with RandomForest as the base predictor. Calibration metrics are reported as $10^2\!\times$ mean $\pm$ standard deviation over ten seeds; accuracy is reported in percent. WLMC uses step size $0.05$ and at most $50$ updates. Bold marks the single best mean within each $(\mathrm{base},\mathrm{post})$ block; variance is not used for bolding.}
\label{tab:tabular-hmda-randomforest-main}
\resizebox{\textwidth}{!}{%
%
%
}
\end{table*}

\begin{table*}[t]
\centering
\scriptsize
\setlength{\tabcolsep}{3.2pt}
\caption{Tabular test results on HMDA with SVM as the base predictor. Calibration metrics are reported as $10^2\!\times$ mean $\pm$ standard deviation over ten seeds; accuracy is reported in percent. WLMC uses step size $0.05$ and at most $50$ updates. Bold marks the single best mean within each $(\mathrm{base},\mathrm{post})$ block; variance is not used for bolding.}
\label{tab:tabular-hmda-svm-main}
\resizebox{\textwidth}{!}{%
%
%
}
\end{table*}

\begin{table*}[t]
\centering
\scriptsize
\setlength{\tabcolsep}{3.2pt}
\caption{Tabular test results on HMDA with MLP as the base predictor. Calibration metrics are reported as $10^2\!\times$ mean $\pm$ standard deviation over ten seeds; accuracy is reported in percent. WLMC uses step size $0.05$ and at most $50$ updates. Bold marks the single best mean within each $(\mathrm{base},\mathrm{post})$ block; variance is not used for bolding.}
\label{tab:tabular-hmda-mlp-main}
\resizebox{\textwidth}{!}{%
%
%
}
\end{table*}

\begin{table*}[t]
\centering
\scriptsize
\setlength{\tabcolsep}{3.2pt}
\caption{Tabular test results on MEPS with DecisionTree as the base predictor. Calibration metrics are reported as $10^2\!\times$ mean $\pm$ standard deviation over ten seeds; accuracy is reported in percent. WLMC uses step size $0.05$ and at most $50$ updates. Bold marks the single best mean within each $(\mathrm{base},\mathrm{post})$ block; variance is not used for bolding.}
\label{tab:tabular-meps-decisiontree-main}
\resizebox{\textwidth}{!}{%
%
%
}
\end{table*}

\begin{table*}[t]
\centering
\scriptsize
\setlength{\tabcolsep}{3.2pt}
\caption{Tabular test results on MEPS with LogisticRegression as the base predictor. Calibration metrics are reported as $10^2\!\times$ mean $\pm$ standard deviation over ten seeds; accuracy is reported in percent. WLMC uses step size $0.05$ and at most $50$ updates. Bold marks the single best mean within each $(\mathrm{base},\mathrm{post})$ block; variance is not used for bolding.}
\label{tab:tabular-meps-logisticregression-main}
\resizebox{\textwidth}{!}{%
%
%
}
\end{table*}

\begin{table*}[t]
\centering
\scriptsize
\setlength{\tabcolsep}{3.2pt}
\caption{Tabular test results on MEPS with NaiveBayes as the base predictor. Calibration metrics are reported as $10^2\!\times$ mean $\pm$ standard deviation over ten seeds; accuracy is reported in percent. WLMC uses step size $0.05$ and at most $50$ updates. Bold marks the single best mean within each $(\mathrm{base},\mathrm{post})$ block; variance is not used for bolding.}
\label{tab:tabular-meps-naivebayes-main}
\resizebox{\textwidth}{!}{%
%
%
}
\end{table*}

\begin{table*}[t]
\centering
\scriptsize
\setlength{\tabcolsep}{3.2pt}
\caption{Tabular test results on MEPS with RandomForest as the base predictor. Calibration metrics are reported as $10^2\!\times$ mean $\pm$ standard deviation over ten seeds; accuracy is reported in percent. WLMC uses step size $0.05$ and at most $50$ updates. Bold marks the single best mean within each $(\mathrm{base},\mathrm{post})$ block; variance is not used for bolding.}
\label{tab:tabular-meps-randomforest-main}
\resizebox{\textwidth}{!}{%
%
%
}
\end{table*}

\begin{table*}[t]
\centering
\scriptsize
\setlength{\tabcolsep}{3.2pt}
\caption{Tabular test results on MEPS with SVM as the base predictor. Calibration metrics are reported as $10^2\!\times$ mean $\pm$ standard deviation over ten seeds; accuracy is reported in percent. WLMC uses step size $0.05$ and at most $50$ updates. Bold marks the single best mean within each $(\mathrm{base},\mathrm{post})$ block; variance is not used for bolding.}
\label{tab:tabular-meps-svm-main}
\resizebox{\textwidth}{!}{%
%
%
}
\end{table*}

\begin{table*}[t]
\centering
\scriptsize
\setlength{\tabcolsep}{3.2pt}
\caption{Tabular test results on MEPS with MLP as the base predictor. Calibration metrics are reported as $10^2\!\times$ mean $\pm$ standard deviation over ten seeds; accuracy is reported in percent. WLMC uses step size $0.05$ and at most $50$ updates. Bold marks the single best mean within each $(\mathrm{base},\mathrm{post})$ block; variance is not used for bolding.}
\label{tab:tabular-meps-mlp-main}
\resizebox{\textwidth}{!}{%
%
%
}
\end{table*}

\section{Large-model experimental supplements}
\label{app:large-model-supp}

This section provides compact test summaries for the CelebA-ViT image model, the CelebA-ImageResNet companion image model, and the CivilComments-BERT text model.  The protocol, metric, weak-data construction, and hyperparameter definitions are given in Appendices~\ref{app:dataset-subgroups}--\ref{app:calib-hparams}; Tables~\ref{tab:celeba-vit-main}, \ref{tab:celeba-resnet-main}, and~\ref{tab:civil-bert-main} focus on test-split results.

\paragraph{Large-model result summary.}
The large-model results show that weak post-processing remains effective for both image and text classifiers, but the relative ranking of the post-processing methods is more mixed than in the tabular experiments.  For CelebA-ViT and CelebA-ImageResNet, WLMC often gives strong improvements on subgroup-sensitive calibration metrics such as MC and maxECE, while Platt scaling and temperature scaling are also competitive in several settings.  For CivilComments-BERT, the global post-processing baselines are particularly strong: Platt scaling and temperature scaling frequently achieve large gains and can match or outperform WLMC on some reported metrics.  Thus, these experiments show that the proposed WSL framework supports effective weak post-processing across modern image and text models, rather than as showing uniform dominance of WLMC over simpler calibration methods.  Overall, WLMC is most useful for subgroup-sensitive correction, while Platt and temperature scaling remain strong baselines, especially for the CivilComments-BERT task.

\subsection{CelebA-ViT}
\label{app:large-model-compact-tables}
Table~\ref{tab:celeba-vit-main} reports the test-split values underlying the CelebA-ViT entries in Figure~\ref{fig:large-model-heatmap}.

\begin{table*}[t]
\centering
\scriptsize
\setlength{\tabcolsep}{3.2pt}
\caption{CelebA-ViT test results. Calibration metrics are reported as $10^2\!\times$ mean $\pm$ standard deviation; accuracy is reported in percent. Bold marks the single best mean within each $(\mathrm{base},\mathrm{post})$ block; variance is not used for bolding.}
\label{tab:celeba-vit-main}
\resizebox{\textwidth}{!}{%
\begin{tabular}{llccccccc}
\toprule
\multirow{2}{*}{(Base,Post)} & \multirow{2}{*}{Post-processing} & \multicolumn{3}{c}{oracle estimate ($\times 10^{-2}$)} & \multicolumn{3}{c}{weak estimate ($\times 10^{-2}$)} & \multirow{2}{*}{Acc. (\%)} \\
\cmidrule(lr){3-5}\cmidrule(lr){6-8}
& & ECE & maxECE & MC & ECE & maxECE & MC & \\
\midrule
\multirow{4}{*}{(PN,PN)} & ERM & 1.68$\pm$0.60 & 6.05$\pm$0.40 & 0.60$\pm$0.14 & 1.68$\pm$0.60 & 6.05$\pm$0.40 & 0.60$\pm$0.14 & 92.11$\pm$0.27 \\
 & WLMC & \textbf{1.62$\pm$0.68} & \textbf{5.81$\pm$0.48} & \textbf{0.45$\pm$0.11} & \textbf{1.62$\pm$0.68} & \textbf{5.81$\pm$0.48} & \textbf{0.45$\pm$0.11} & 92.11$\pm$0.27 \\
 & Temp & 2.72$\pm$0.23 & 6.75$\pm$0.68 & 0.90$\pm$0.04 & 2.72$\pm$0.23 & 6.75$\pm$0.68 & 0.90$\pm$0.04 & 92.11$\pm$0.27 \\
 & Platt & 2.00$\pm$0.22 & 5.81$\pm$0.39 & 1.00$\pm$0.13 & 2.00$\pm$0.22 & 5.81$\pm$0.39 & 1.00$\pm$0.13 & \textbf{92.17$\pm$0.21} \\
\midrule
\multirow{4}{*}{(PN,PU)} & ERM & 1.68$\pm$0.60 & 6.05$\pm$0.40 & 0.60$\pm$0.14 & 1.86$\pm$0.42 & 7.46$\pm$0.89 & 0.66$\pm$0.18 & 92.11$\pm$0.27 \\
 & WLMC & 1.65$\pm$0.64 & \textbf{5.69$\pm$0.50} & 0.51$\pm$0.07 & 1.83$\pm$0.46 & \textbf{7.09$\pm$0.88} & 0.56$\pm$0.06 & 92.11$\pm$0.27 \\
 & Temp & 1.53$\pm$0.51 & 5.94$\pm$0.53 & 0.54$\pm$0.04 & 1.66$\pm$0.40 & 7.45$\pm$1.02 & 0.56$\pm$0.07 & 92.11$\pm$0.27 \\
 & Platt & \textbf{0.61$\pm$0.19} & 5.81$\pm$0.45 & \textbf{0.49$\pm$0.05} & \textbf{0.80$\pm$0.33} & 7.46$\pm$0.73 & \textbf{0.49$\pm$0.05} & \textbf{92.18$\pm$0.24} \\
\midrule
\multirow{4}{*}{(PN,UU)} & ERM & 1.68$\pm$0.60 & 6.05$\pm$0.40 & 0.60$\pm$0.14 & 1.75$\pm$0.53 & 6.53$\pm$0.98 & 0.62$\pm$0.14 & 92.11$\pm$0.27 \\
 & WLMC & 1.62$\pm$0.68 & \textbf{5.81$\pm$0.48} & \textbf{0.45$\pm$0.11} & 1.67$\pm$0.59 & 6.29$\pm$0.77 & \textbf{0.47$\pm$0.10} & 92.11$\pm$0.27 \\
 & Temp & 1.49$\pm$0.46 & 5.90$\pm$0.45 & 0.52$\pm$0.06 & 1.52$\pm$0.41 & 6.06$\pm$1.32 & 0.54$\pm$0.06 & 92.11$\pm$0.27 \\
 & Platt & \textbf{0.66$\pm$0.20} & 5.90$\pm$0.49 & 0.47$\pm$0.02 & \textbf{0.78$\pm$0.15} & \textbf{5.95$\pm$0.11} & 0.49$\pm$0.02 & \textbf{92.17$\pm$0.24} \\
\midrule
\multirow{4}{*}{(PU,PU)} & ERM & 1.37$\pm$0.49 & 7.37$\pm$1.13 & 0.66$\pm$0.13 & 1.54$\pm$0.70 & 8.05$\pm$0.90 & 0.71$\pm$0.11 & 90.97$\pm$0.79 \\
 & WLMC & 1.23$\pm$0.11 & 7.14$\pm$1.03 & \textbf{0.53$\pm$0.04} & 1.39$\pm$0.28 & \textbf{7.94$\pm$1.01} & \textbf{0.55$\pm$0.06} & \textbf{91.04$\pm$0.65} \\
 & Temp & 1.29$\pm$0.36 & \textbf{7.12$\pm$1.03} & 0.65$\pm$0.14 & 1.42$\pm$0.47 & 8.41$\pm$1.15 & 0.68$\pm$0.14 & 90.97$\pm$0.79 \\
 & Platt & \textbf{1.05$\pm$0.28} & 7.27$\pm$1.05 & 0.62$\pm$0.12 & \textbf{1.20$\pm$0.34} & 7.98$\pm$1.06 & 0.65$\pm$0.11 & 90.95$\pm$0.75 \\
\midrule
\multirow{4}{*}{(UU,UU)} & ERM & 1.39$\pm$0.41 & 7.14$\pm$0.72 & 0.52$\pm$0.07 & 1.41$\pm$0.38 & 7.77$\pm$0.88 & 0.52$\pm$0.08 & 91.49$\pm$0.28 \\
 & WLMC & 1.38$\pm$0.43 & 6.91$\pm$1.10 & \textbf{0.47$\pm$0.11} & 1.41$\pm$0.37 & 7.81$\pm$0.73 & \textbf{0.47$\pm$0.12} & 91.49$\pm$0.28 \\
 & Temp & 1.32$\pm$0.32 & 6.97$\pm$0.95 & 0.49$\pm$0.07 & 1.37$\pm$0.33 & 7.68$\pm$0.90 & 0.51$\pm$0.06 & 91.49$\pm$0.28 \\
 & Platt & \textbf{0.65$\pm$0.17} & \textbf{5.99$\pm$0.45} & 0.53$\pm$0.09 & \textbf{0.78$\pm$0.24} & \textbf{7.41$\pm$1.18} & 0.53$\pm$0.06 & \textbf{91.65$\pm$0.34} \\
\bottomrule
\end{tabular}%
}
\end{table*}

\subsection{CelebA-ImageResNet}
Table~\ref{tab:celeba-resnet-main} reports the compact test-split results for the companion ResNet image model.

\subsection{CivilComments-BERT}
Table~\ref{tab:civil-bert-main} reports the test-split values underlying the CivilComments-BERT entries in Figure~\ref{fig:large-model-heatmap}.

\begin{table*}[t]
\centering
\scriptsize
\setlength{\tabcolsep}{3.2pt}
\renewcommand{\arraystretch}{0.93}
\caption{CelebA-ImageResNet test results. Calibration metrics are reported as $10^2\!\times$ mean $\pm$ standard deviation over five seeds; accuracy is reported in percent. Bold marks the best mean within each $(\mathrm{base},\mathrm{post})$ block; ties are all bolded.}
\label{tab:celeba-resnet-main}
\resizebox{\textwidth}{!}{%
\begin{tabular}{llccccccc}
\toprule
\multirow{2}{*}{(Base,Post)} & \multirow{2}{*}{Post-processing} & \multicolumn{3}{c}{oracle estimate ($\times 10^{-2}$)} & \multicolumn{3}{c}{weak estimate ($\times 10^{-2}$)} & \multirow{2}{*}{Acc. (\%)} \\
\cmidrule(lr){3-5}\cmidrule(lr){6-8}
& & ECE & maxECE & MC & ECE & maxECE & MC & \\
\midrule
\multirow{4}{*}{$\left(\mathrm{PN},\mathrm{PN}\right)$} & ERM & \textbf{1.01$\pm$0.49} & \textbf{3.40$\pm$1.00} & \textbf{0.33$\pm$0.08} & \textbf{1.01$\pm$0.49} & \textbf{3.40$\pm$1.00} & \textbf{0.33$\pm$0.08} & 94.38$\pm$0.15 \\
 & WLMC & \textbf{1.01$\pm$0.49} & \textbf{3.40$\pm$1.00} & \textbf{0.33$\pm$0.08} & \textbf{1.01$\pm$0.49} & \textbf{3.40$\pm$1.00} & \textbf{0.33$\pm$0.08} & 94.38$\pm$0.15 \\
 & Temp & 1.54$\pm$0.69 & 4.80$\pm$1.02 & 0.43$\pm$0.07 & 1.54$\pm$0.69 & 4.80$\pm$1.02 & 0.43$\pm$0.07 & 94.38$\pm$0.15 \\
 & Platt & 1.48$\pm$0.16 & 3.46$\pm$0.49 & 0.60$\pm$0.09 & 1.48$\pm$0.16 & 3.46$\pm$0.49 & 0.60$\pm$0.09 & \textbf{94.42$\pm$0.05} \\
\midrule
\multirow{4}{*}{$\left(\mathrm{PN},\mathrm{PU}\right)$} & ERM & 1.01$\pm$0.49 & \textbf{3.40$\pm$1.00} & 0.33$\pm$0.08 & 1.16$\pm$0.37 & 4.94$\pm$1.32 & 0.40$\pm$0.09 & 94.38$\pm$0.15 \\
 & WLMC & 1.01$\pm$0.49 & \textbf{3.40$\pm$1.00} & 0.33$\pm$0.08 & 1.16$\pm$0.37 & 4.94$\pm$1.32 & 0.40$\pm$0.09 & 94.38$\pm$0.15 \\
 & Temp & 0.90$\pm$0.74 & 3.75$\pm$1.11 & 0.28$\pm$0.06 & 1.10$\pm$0.55 & \textbf{4.90$\pm$1.20} & 0.32$\pm$0.07 & 94.38$\pm$0.15 \\
 & Platt & \textbf{0.43$\pm$0.09} & 3.44$\pm$0.29 & \textbf{0.25$\pm$0.03} & \textbf{0.80$\pm$0.07} & 4.96$\pm$1.07 & \textbf{0.29$\pm$0.07} & \textbf{94.42$\pm$0.07} \\
\midrule
\multirow{4}{*}{$\left(\mathrm{PN},\mathrm{UU}\right)$} & ERM & 1.01$\pm$0.49 & \textbf{3.40$\pm$1.00} & 0.33$\pm$0.08 & 1.12$\pm$0.47 & \textbf{3.74$\pm$0.83} & 0.36$\pm$0.09 & 94.38$\pm$0.15 \\
 & WLMC & 1.01$\pm$0.49 & \textbf{3.40$\pm$1.00} & 0.33$\pm$0.08 & 1.12$\pm$0.47 & \textbf{3.74$\pm$0.83} & 0.36$\pm$0.09 & 94.38$\pm$0.15 \\
 & Temp & 0.87$\pm$0.77 & 3.87$\pm$1.08 & 0.26$\pm$0.08 & 1.07$\pm$0.67 & 4.29$\pm$1.15 & 0.27$\pm$0.08 & 94.38$\pm$0.15 \\
 & Platt & \textbf{0.39$\pm$0.17} & 3.52$\pm$0.43 & \textbf{0.24$\pm$0.06} & \textbf{0.69$\pm$0.06} & 3.78$\pm$0.66 & \textbf{0.25$\pm$0.07} & \textbf{94.41$\pm$0.06} \\
\midrule
\multirow{4}{*}{$\left(\mathrm{PU},\mathrm{PU}\right)$} & ERM & 1.33$\pm$0.51 & \textbf{5.21$\pm$0.31} & 0.47$\pm$0.18 & 1.47$\pm$0.52 & 6.38$\pm$0.91 & 0.49$\pm$0.14 & \textbf{93.23$\pm$0.16} \\
 & WLMC & 1.22$\pm$0.43 & 5.23$\pm$0.35 & 0.40$\pm$0.10 & 1.35$\pm$0.45 & 6.30$\pm$0.71 & 0.42$\pm$0.11 & \textbf{93.23$\pm$0.16} \\
 & Temp & 1.05$\pm$0.43 & 5.48$\pm$0.41 & 0.37$\pm$0.06 & 1.14$\pm$0.49 & \textbf{5.84$\pm$0.75} & 0.39$\pm$0.15 & \textbf{93.23$\pm$0.16} \\
 & Platt & \textbf{0.81$\pm$0.18} & 6.15$\pm$1.00 & \textbf{0.31$\pm$0.05} & \textbf{0.93$\pm$0.22} & 6.28$\pm$0.92 & \textbf{0.31$\pm$0.07} & 93.22$\pm$0.18 \\
\midrule
\multirow{4}{*}{$\left(\mathrm{UU},\mathrm{UU}\right)$} & ERM & 1.46$\pm$0.68 & 4.97$\pm$1.04 & 0.56$\pm$0.22 & 1.51$\pm$0.56 & \textbf{5.29$\pm$1.01} & 0.50$\pm$0.17 & \textbf{93.50$\pm$0.24} \\
 & WLMC & 1.29$\pm$0.48 & \textbf{4.82$\pm$0.89} & 0.39$\pm$0.08 & 1.35$\pm$0.39 & 5.53$\pm$1.23 & 0.38$\pm$0.08 & \textbf{93.50$\pm$0.24} \\
 & Temp & 1.11$\pm$0.81 & 5.83$\pm$1.36 & \textbf{0.35$\pm$0.07} & 1.21$\pm$0.67 & 6.33$\pm$1.55 & \textbf{0.36$\pm$0.07} & \textbf{93.50$\pm$0.24} \\
 & Platt & \textbf{0.82$\pm$0.34} & 5.33$\pm$0.66 & \textbf{0.35$\pm$0.06} & \textbf{1.06$\pm$0.38} & 5.89$\pm$0.85 & \textbf{0.36$\pm$0.07} & \textbf{93.50$\pm$0.23} \\
\bottomrule
\end{tabular}%
}
\vspace{0.45em}
\caption{CivilComments-BERT test results. Calibration metrics are reported as $10^2\!\times$ mean $\pm$ standard deviation; accuracy is reported in percent. Bold marks the single best reported mean within each $(\mathrm{base},\mathrm{post})$ block.}
\label{tab:civil-bert-main}
\resizebox{\textwidth}{!}{%
\begin{tabular}{llccccccc}
\toprule
\multirow{2}{*}{(Base,Post)} & \multirow{2}{*}{Post-processing} & \multicolumn{3}{c}{oracle estimate ($\times 10^{-2}$)} & \multicolumn{3}{c}{weak estimate ($\times 10^{-2}$)} & \multirow{2}{*}{Acc. (\%)} \\
\cmidrule(lr){3-5}\cmidrule(lr){6-8}
& & ECE & maxECE & MC & ECE & maxECE & MC & \\
\midrule
\multirow{4}{*}{(PN,PN)} & ERM & 1.64$\pm$0.92 & 9.80$\pm$9.82 & 0.87$\pm$0.92 & 1.64$\pm$0.92 & 9.79$\pm$9.82 & 0.87$\pm$0.92 & \textbf{90.96$\pm$1.30} \\
 & WLMC & 1.35$\pm$0.56 & 9.16$\pm$8.94 & \textbf{0.61$\pm$0.43} & 1.35$\pm$0.56 & 9.16$\pm$8.94 & \textbf{0.61$\pm$0.43} & 90.96$\pm$1.30 \\
 & Temp & 2.48$\pm$0.77 & \textbf{8.64$\pm$7.63} & 1.52$\pm$1.24 & 2.48$\pm$0.77 & \textbf{8.64$\pm$7.62} & 1.52$\pm$1.24 & 90.96$\pm$1.30 \\
 & Platt & \textbf{1.31$\pm$0.88} & 10.83$\pm$7.74 & 0.65$\pm$0.35 & \textbf{1.31$\pm$0.88} & 10.83$\pm$7.74 & 0.65$\pm$0.35 & 90.94$\pm$1.29 \\
\midrule
\multirow{4}{*}{(PN,PU)} & ERM & 1.64$\pm$0.92 & 9.80$\pm$9.82 & 0.87$\pm$0.92 & 1.64$\pm$0.92 & \textbf{9.79$\pm$9.82} & 0.87$\pm$0.92 & \textbf{90.96$\pm$1.30} \\
 & WLMC & 1.27$\pm$0.59 & 9.26$\pm$8.93 & 0.58$\pm$0.45 & 1.35$\pm$0.59 & 11.72$\pm$7.91 & 0.61$\pm$0.39 & 90.96$\pm$1.30 \\
 & Temp & 0.72$\pm$0.57 & 8.76$\pm$8.90 & 0.40$\pm$0.46 & 0.79$\pm$0.57 & 11.42$\pm$7.89 & 0.44$\pm$0.46 & 90.96$\pm$1.30 \\
 & Platt & \textbf{0.59$\pm$0.48} & \textbf{8.10$\pm$9.34} & \textbf{0.37$\pm$0.47} & \textbf{0.75$\pm$0.47} & 10.74$\pm$8.20 & \textbf{0.43$\pm$0.47} & 90.89$\pm$1.26 \\
\midrule
\multirow{4}{*}{(PN,UU)} & ERM & 1.64$\pm$0.92 & 9.80$\pm$9.82 & 0.87$\pm$0.92 & 1.64$\pm$0.92 & 9.79$\pm$9.82 & 0.87$\pm$0.92 & \textbf{90.96$\pm$1.30} \\
 & WLMC & 1.35$\pm$0.55 & 9.19$\pm$8.93 & 0.61$\pm$0.43 & 1.38$\pm$0.57 & 10.04$\pm$9.06 & 0.60$\pm$0.42 & 90.96$\pm$1.30 \\
 & Temp & 0.71$\pm$0.57 & 8.83$\pm$8.87 & 0.41$\pm$0.46 & 0.74$\pm$0.60 & 10.22$\pm$8.80 & 0.43$\pm$0.47 & 90.96$\pm$1.30 \\
 & Platt & \textbf{0.57$\pm$0.47} & \textbf{8.02$\pm$9.44} & \textbf{0.37$\pm$0.47} & \textbf{0.61$\pm$0.48} & \textbf{9.17$\pm$9.35} & \textbf{0.39$\pm$0.49} & 90.88$\pm$1.26 \\
\midrule
\multirow{4}{*}{(PU,PU)} & ERM & 2.79$\pm$1.78 & 10.78$\pm$5.42 & 1.24$\pm$1.05 & 2.79$\pm$1.81 & 11.56$\pm$4.00 & 1.21$\pm$1.06 & 90.81$\pm$0.09 \\
 & WLMC & 2.27$\pm$1.29 & 9.26$\pm$3.69 & 0.70$\pm$0.48 & 2.24$\pm$1.34 & 10.01$\pm$2.01 & 0.65$\pm$0.46 & 90.81$\pm$0.09 \\
 & Temp & 1.62$\pm$0.80 & 8.78$\pm$3.93 & 0.47$\pm$0.26 & 1.61$\pm$0.86 & 10.01$\pm$2.60 & 0.47$\pm$0.26 & 90.81$\pm$0.09 \\
 & Platt & \textbf{1.06$\pm$0.16} & \textbf{5.88$\pm$1.66} & \textbf{0.30$\pm$0.10} & \textbf{1.00$\pm$0.24} & \textbf{8.31$\pm$1.63} & \textbf{0.32$\pm$0.10} & \textbf{90.90$\pm$0.11} \\
\midrule
\multirow{4}{*}{(UU,UU)} & ERM & 1.56$\pm$0.53 & 6.21$\pm$1.12 & 0.54$\pm$0.21 & 1.57$\pm$0.48 & 7.14$\pm$0.80 & 0.55$\pm$0.21 & \textbf{91.22$\pm$0.16} \\
 & WLMC & 1.46$\pm$0.49 & 6.00$\pm$1.22 & 0.47$\pm$0.12 & 1.47$\pm$0.44 & 6.94$\pm$0.91 & 0.47$\pm$0.12 & 91.22$\pm$0.16 \\
 & Temp & 1.22$\pm$0.66 & 5.69$\pm$1.16 & 0.27$\pm$0.14 & 1.22$\pm$0.62 & \textbf{6.68$\pm$0.94} & 0.28$\pm$0.15 & 91.22$\pm$0.16 \\
 & Platt & \textbf{1.08$\pm$0.55} & \textbf{5.52$\pm$0.82} & \textbf{0.25$\pm$0.11} & \textbf{1.09$\pm$0.55} & 6.70$\pm$0.76 & \textbf{0.25$\pm$0.11} & 91.20$\pm$0.13 \\
\bottomrule
\end{tabular}%
}
\renewcommand{\arraystretch}{1.0}
\end{table*}

\subsection{Supplementary notes on the large-model improvement heatmap}
\label{app:large-model-heatmap-supp}
Figure~\ref{fig:app-large-model-heatmap} reproduces the two-panel improvement heatmap used in the main-text large-model comparison.  The panels use the same settings as the CelebA-ViT and CivilComments-BERT compact result tables, Tables~\ref{tab:celeba-vit-main} and~\ref{tab:civil-bert-main}.  For each dataset, setting, and metric $m\in\{\mathrm{ECE},\mathrm{MC}\}$, each cell in Figure~\ref{fig:app-large-model-heatmap} selects the best post-processing method among WLMC, temperature scaling, and Platt scaling for that metric.  The first line in each cell is the signed improvement
\[
\Delta m = m_{\mathrm{ERM}} - m_{\mathrm{post}},
\]
so positive values correspond to a reduction in calibration error.  The remaining lines in each cell of Figure~\ref{fig:app-large-model-heatmap} list the selected method, the metric transition $m_{\mathrm{ERM}}\to m_{\mathrm{post}}$, and the corresponding accuracy transition $\mathrm{Acc}_{\mathrm{ERM}}\to\mathrm{Acc}_{\mathrm{post}}$ for the same selected method.  Accuracy is therefore reported as contextual information for the calibration-improving method rather than as a separate heatmap row.

The colors summarize signed improvement after row-wise normalization over both panels: the ECE row is normalized by the largest absolute ECE improvement across CelebA and CivilComments, and the MC row is normalized analogously.  This makes the colors comparable between CelebA and CivilComments within the same metric row, while the exact numerical entries should be used when comparing ECE against MC.  A near-white cell means little change relative to the largest change in that row, green indicates improvement, and red indicates degradation.

\begin{figure*}[t]
    \centering
    \includegraphics[width=0.98\textwidth]{Figs/celeba_left_civil_right_single_colorbar_black_text.png}
    \caption{Supplementary view of the large-model improvement heatmap.  For each ECE or MC cell, the selected method is the best among WLMC, temperature scaling, and Platt scaling for that metric.  The cell reports the signed improvement, the selected method, the calibration-metric before/after values, and the accuracy before/after values.  Colors are row-wise normalized over both panels.}
    \label{fig:app-large-model-heatmap}
\end{figure*}

The distributional plots in Figures~\ref{fig:celeba-cdf}--\ref{fig:celeba-mc-scatter} further explain the CelebA-ViT compact results in Table~\ref{tab:celeba-vit-main}.  Per-group ECE CDFs show how post-processing shifts the group-wise ECE distribution, while the MC before/after scatter tracks changes in the supremum-style group--bin violation.

\paragraph{Subgroup-family diagnostics.}
These plots also partially address the sensitivity of the conclusions to the evaluated subgroup family.  They are not a full sweep over alternative families: all panels use the fixed finite CelebA subgroup collection described in Appendix~\ref{app:dataset-subgroups} and the same ten prediction bins as the main experiments.  However, the CDF view is useful because it shows whether post-processing improves calibration broadly across the evaluated subgroups or only changes a small number of extreme cases.  A leftward shift of the per-group ECE CDF indicates improvement throughout the subgroup family, while a change only in the upper tail would suggest that the method is mainly affecting the worst groups.  The MC scatter complements this distributional view by tracking the supremum-type group--bin violation.  Thus, Figures~\ref{fig:celeba-cdf}--\ref{fig:celeba-mc-scatter} should be read as a subgroup-family diagnostic for the current interpretable group collection.  A stronger robustness check would additionally sweep the number of prediction bins or add and remove subgroup intersections; the current CDF/scatter analysis provides evidence over the reported family but does not replace that alternative-family sensitivity study.

\begin{figure*}[t]
    \centering
    \includegraphics[width=0.98\textwidth]{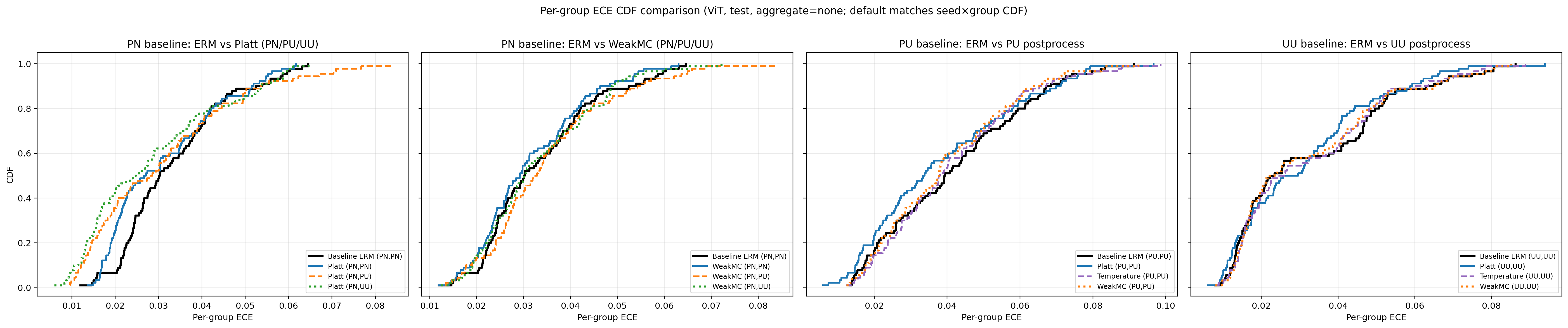}
    \caption{Per-group ECE CDFs on CelebA-ViT (test split). A leftward shift indicates improvement of the per-group ECE distribution. Platt scaling is especially effective for ECE-like metrics, whereas WLMC is more conservative in ECE terms.}
    \label{fig:celeba-cdf}
\end{figure*}

\begin{figure*}[t]
    \centering
    \includegraphics[width=0.98\textwidth]{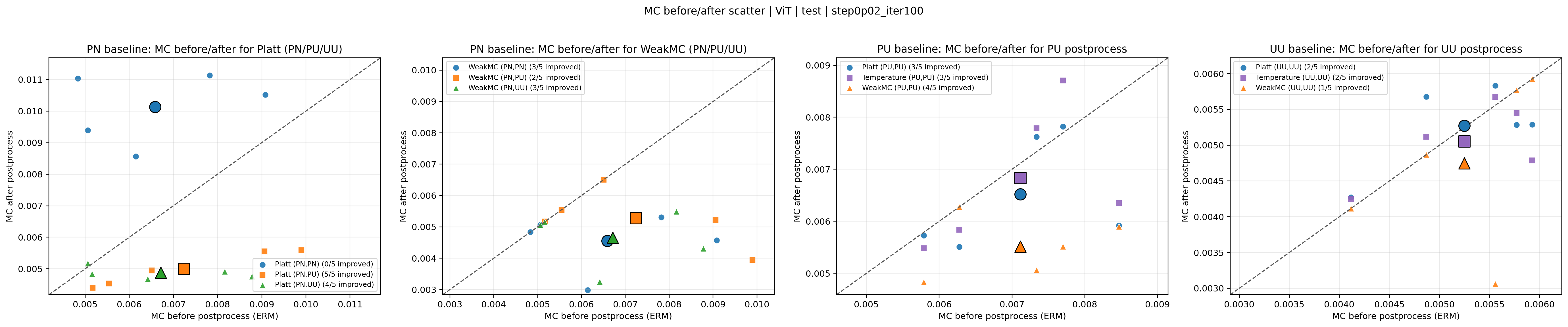}
    \caption{MC before/after scatter plots on CelebA-ViT (test split). Points below the diagonal correspond to reduced MC after post-processing correction. WLMC is more directly aligned with MC reduction, while Platt and temperature can show mixed seed-level behavior depending on the regime.}
    \label{fig:celeba-mc-scatter}
\end{figure*}

\end{document}

%% file: preamble.tex
\usepackage{times}

\usepackage{nicefrac}       
\usepackage{microtype}      

\usepackage[utf8]{inputenc}
\usepackage[T1]{fontenc}
\usepackage{microtype}
\usepackage{graphicx}
\usepackage{subfigure}
\usepackage{booktabs}

\usepackage[colorlinks=true,citecolor=blue,linkcolor=blue]{hyperref}

\usepackage{url}
\usepackage{nicefrac}
\usepackage{listings}
\usepackage{mathtools}
\usepackage{amsfonts}
\usepackage{amssymb}
\usepackage{amsmath}
\usepackage{amsthm}
\usepackage{docmute}
\usepackage[acronym, nomain]{glossaries}
\usepackage{indentfirst}
\usepackage{bm}
\usepackage{here}
\usepackage{booktabs}       
\usepackage{cancel}
\usepackage{multirow}
\usepackage{wrapfig}
\usepackage{lipsum}
\usepackage{bbm}
\usepackage{thm-restate}

\usepackage{algorithm}
\usepackage{algpseudocode}
\usepackage{enumitem}

\newtheorem{theorem}{Theorem}
\newtheorem{proposition}{Proposition}
\newtheorem{corollary}{Corollary}

\newtheorem{definition}{Definition}

\newtheorem{remark}{Remark}

\DeclarePairedDelimiterX{\KL}[2]{\mathrm{KL}[}{]}{#1\;\delimsize\|\;#2}

\DeclarePairedDelimiterX\braket[2]{\langle}{\rangle}{#1 \delimsize\vert #2}

\newcommand{\calx}{\mathcal{X}}
\newcommand{\calg}{\mathcal{G}}

\newcommand{\Ex}{\mathbb{E}}
\newcommand{\calo}{\mathcal{O}}

\newcommand{\calw}{\mathcal{W}}

\newcommand{\calc}{\mathcal{C}}

\newcommand{\caly}{\mathcal{Y}}

\newcommand{\R}{\mathbb{R}}

\newcommand{\MC}{\operatorname{MC}}
\newcommand{\Lip}{\operatorname{Lip}}

\newcommand{\UU}{\mathrm{UU}}
\newcommand{\PU}{\mathrm{PU}}
\newcommand{\DU}{\mathrm{DU}}
\newcommand{\Pconf}{\mathrm{Pconf}}
\newcommand{\SU}{\mathrm{SU}}
\newcommand{\SD}{\mathrm{SD}}
\newcommand{\Pcomp}{\mathrm{Pcomp}}

\newcommand{\Sconf}{\mathrm{Sconf}}
\newcommand{\WSL}{\mathrm{WSL}}

\usepackage{multirow}

%% file: main.bib
@article{
ChiangSugiyama2025,
title={Unified Risk Analysis for Weakly Supervised Learning},
author={Chao-Kai Chiang and Masashi Sugiyama},
journal={Transactions on Machine Learning Research},
issn={2835-8856},
year={2025},
url={https://openreview.net/forum?id=RGsdAwWuu6},
note={Survey Certification}
}

@inproceedings{
lu2018on,
title={On the Minimal Supervision for Training Any Binary Classifier from Only Unlabeled Data},
author={Nan Lu and Gang Niu and Aditya K. Menon and Masashi Sugiyama},
booktitle={International Conference on Learning Representations},
year={2019},
url={https://openreview.net/forum?id=B1xWcj0qYm},
}

@InProceedings{lu21c,
  title = 	 {Binary Classification from Multiple Unlabeled Datasets via Surrogate Set Classification},
  author =       {Lu, Nan and Lei, Shida and Niu, Gang and Sato, Issei and Sugiyama, Masashi},
  booktitle = 	 {Proceedings of the 38th International Conference on Machine Learning},
  pages = 	 {7134--7144},
  year = 	 {2021},
  editor = 	 {Meila, Marina and Zhang, Tong},
  volume = 	 {139},
  series = 	 {Proceedings of Machine Learning Research},
  month = 	 {18--24 Jul},
  publisher =    {PMLR},
  url = 	 {https://proceedings.mlr.press/v139/lu21c.html},
}

@article{
kiryo2026estimating,
title={Estimating Expected Calibration Error for Positive-Unlabeled Learning},
author={Ryuichi Kiryo and Futoshi Futami and Masashi Sugiyama},
journal={Transactions on Machine Learning Research},
issn={2835-8856},
year={2026},
url={https://openreview.net/forum?id=SvoBtLIrPZ},
note={}
}

@inproceedings{duplessis2014analysis,
 author = {du Plessis, Marthinus C. and Niu, Gang and Sugiyama, Masashi},
 booktitle = {Advances in Neural Information Processing Systems},
 editor = {Z. Ghahramani and M. Welling and C. Cortes and N. Lawrence and K.Q. Weinberger},
 pages = {},
 publisher = {Curran Associates, Inc.},
 title = {Analysis of Learning from Positive and Unlabeled Data},
 url = {https://proceedings.neurips.cc/paper_files/paper/2014/file/f032bc3f1eb547f716df87edb523b8f0-Paper.pdf},
 volume = {27},
 year = {2014}
}

@inproceedings{kiryo2017nnpu,
 author = {Kiryo, Ryuichi and Niu, Gang and du Plessis, Marthinus C and Sugiyama, Masashi},
 booktitle = {Advances in Neural Information Processing Systems},
 editor = {I. Guyon and U. Von Luxburg and S. Bengio and H. Wallach and R. Fergus and S. Vishwanathan and R. Garnett},
 pages = {},
 publisher = {Curran Associates, Inc.},
 title = {Positive-Unlabeled Learning with Non-Negative Risk Estimator},
 url = {https://proceedings.neurips.cc/paper_files/paper/2017/file/7cce53cf90577442771720a370c3c723-Paper.pdf},
 volume = {30},
 year = {2017}
}

@InProceedings{plessis2015convex,
  title = 	 {Convex Formulation for Learning from Positive and Unlabeled Data},
  author = 	 {Plessis, Marthinus Du and Niu, Gang and Sugiyama, Masashi},
  booktitle = 	 {Proceedings of the 32nd International Conference on Machine Learning},
  pages = 	 {1386--1394},
  year = 	 {2015},
  editor = 	 {Bach, Francis and Blei, David},
  volume = 	 {37},
  series = 	 {Proceedings of Machine Learning Research},
  address = 	 {Lille, France},
  month = 	 {07--09 Jul},
  publisher =    {PMLR},
  url = 	 {https://proceedings.mlr.press/v37/plessis15.html},
}

@inproceedings{ishida2018,
 author = {Ishida, Takashi and Niu, Gang and Sugiyama, Masashi},
 booktitle = {Advances in Neural Information Processing Systems},
 editor = {S. Bengio and H. Wallach and H. Larochelle and K. Grauman and N. Cesa-Bianchi and R. Garnett},
 pages = {},
 publisher = {Curran Associates, Inc.},
 title = {Binary Classification from Positive-Confidence Data},
 url = {https://proceedings.neurips.cc/paper_files/paper/2018/file/bd1354624fbae3b2149878941c60df99-Paper.pdf},
 volume = {31},
 year = {2018}
}

@inproceedings{dwork2021outcome,
  title={Outcome indistinguishability},
  author={Dwork, Cynthia and Kim, Michael P and Reingold, Omer and Rothblum, Guy N and Yona, Gal},
  booktitle={Proceedings of the 53rd Annual ACM SIGACT Symposium on Theory of Computing},
  pages={1095--1108},
  year={2021}
}

@InProceedings{johnson18a,
  title = 	 {Multicalibration: Calibration for the ({C}omputationally-Identifiable) Masses},
  author =       {Hebert-Johnson, Ursula and Kim, Michael and Reingold, Omer and Rothblum, Guy},
  booktitle = 	 {Proceedings of the 35th International Conference on Machine Learning},
  pages = 	 {1939--1948},
  year = 	 {2018},
  editor = 	 {Dy, Jennifer and Krause, Andreas},
  volume = 	 {80},
  series = 	 {Proceedings of Machine Learning Research},
  month = 	 {10--15 Jul},
  publisher =    {PMLR},
}

@inproceedings{
hu2024testing,
title={Testing Calibration in Nearly-Linear Time},
author={Lunjia Hu and Arun Jambulapati and Kevin Tian and Chutong Yang},
booktitle={The Thirty-eighth Annual Conference on Neural Information Processing Systems},
year={2024}
}

@inproceedings{gopalan2024computationally,
  title={On computationally efficient multi-class calibration},
  author={Gopalan, Parikshit and Hu, Lunjia and Rothblum, Guy N},
  booktitle={The Thirty Seventh Annual Conference on Learning Theory},
  pages={1983--2026},
  year={2024},
  organization={PMLR}
}

@article{platt1999probabilistic,
  title={Probabilistic outputs for support vector machines and comparisons to regularized likelihood methods},
  author={Platt, John and others},
  journal={Advances in large margin classifiers},
  volume={10},
  number={3},
  pages={61--74},
  year={1999},
  publisher={Cambridge, MA}
}

@inproceedings{NEURIPS2024_futami,
 author = {Futami, Futoshi and Fujisawa, Masahiro},
 booktitle = {Advances in Neural Information Processing Systems},
 editor = {A. Globerson and L. Mackey and D. Belgrave and A. Fan and U. Paquet and J. Tomczak and C. Zhang},
 pages = {84246--84297},
 publisher = {Curran Associates, Inc.},
 title = {Information-theoretic Generalization Analysis for Expected Calibration Error},
 volume = {37},
 year = {2024}
}

@InProceedings{pmlr-v202-globus-harris23a,
  title = 	 {Multicalibration as Boosting for Regression},
  author =       {Globus-Harris, Ira and Harrison, Declan and Kearns, Michael and Roth, Aaron and Sorrell, Jessica},
  booktitle = 	 {Proceedings of the 40th International Conference on Machine Learning},
  pages = 	 {11459--11492},
  year = 	 {2023},
  editor = 	 {Krause, Andreas and Brunskill, Emma and Cho, Kyunghyun and Engelhardt, Barbara and Sabato, Sivan and Scarlett, Jonathan},
  volume = 	 {202},
  series = 	 {Proceedings of Machine Learning Research},
  month = 	 {23--29 Jul},
  publisher =    {PMLR},
}

@inproceedings{he2016deep,
  title={Deep residual learning for image recognition},
  author={He, Kaiming and Zhang, Xiangyu and Ren, Shaoqing and Sun, Jian},
  booktitle={Proceedings of the IEEE conference on computer vision and pattern recognition},
  pages={770--778},
  year={2016}
}

@article{sanh2019distilbert,
  title={DistilBERT, a distilled version of BERT: smaller, faster, cheaper and lighter},
  author={Sanh, Victor and Debut, Lysandre and Chaumond, Julien and Wolf, Thomas},
  journal={arXiv preprint arXiv:1910.01108},
  year={2019}
}

@article{dosovitskiy2020image,
  title={An image is worth 16x16 words: Transformers for image recognition at scale},
  author={Dosovitskiy, Alexey and Beyer, Lucas and Kolesnikov, Alexander and Weissenborn, Dirk and Zhai, Xiaohua and Unterthiner, Thomas and Dehghani, Mostafa and Minderer, Matthias and Heigold, Georg and Gelly, Sylvain and others},
  journal={arXiv preprint arXiv:2010.11929},
  year={2020}
}

@inproceedings{borkan2019nuanced,
  title={Nuanced metrics for measuring unintended bias with real data for text classification},
  author={Borkan, Daniel and Dixon, Lucas and Sorensen, Jeffrey and Thain, Nithum and Vasserman, Lucy},
  booktitle={Companion proceedings of the 2019 world wide web conference},
  pages={491--500},
  year={2019}
}

@inproceedings{liu2015deep,
  title={Deep learning face attributes in the wild},
  author={Liu, Ziwei and Luo, Ping and Wang, Xiaogang and Tang, Xiaoou},
  booktitle={Proceedings of the IEEE international conference on computer vision},
  pages={3730--3738},
  year={2015}
}

@inproceedings{sharma2021fair,
  title={Fair-n: Fair and robust neural networks for structured data},
  author={Sharma, Shubham and Gee, Alan H and Paydarfar, David and Ghosh, Joydeep},
  booktitle={Proceedings of the 2021 AAAI/ACM Conference on AI, Ethics, and Society},
  pages={946--955},
  year={2021}
}

@article{cooper2023variance,
  title={Variance, self-consistency, and arbitrariness in fair classification},
  author={Cooper, A Feder and Barocas, Solon and De Sa, Christopher and Sen, Siddhartha},
  journal={arXiv preprint arXiv:2301.11562},
  pages={1--84},
  year={2023}
}

@misc{default_of_credit_card_clients_350,
  author       = {Yeh, I-Cheng},
  title        = {{Default of Credit Card Clients}},
  year         = {2009},
  howpublished = {UCI Machine Learning Repository},
  note         = {{DOI}: https://doi.org/10.24432/C55S3H}
}

@article{ding2021retiring,
  title={Retiring adult: New datasets for fair machine learning},
  author={Ding, Frances and Hardt, Moritz and Miller, John and Schmidt, Ludwig},
  journal={Advances in neural information processing systems},
  volume={34},
  pages={6478--6490},
  year={2021}
}

@inproceedings{guo2017calibration,
  title={On calibration of modern neural networks},
  author={Guo, C. and Pleiss, G. and Sun, Y. and Weinberger, K. Q},
  booktitle={International conference on machine learning},
  pages={1321--1330},
  year={2017}
}

@article{marx2023calibration,
  title={Calibration by distribution matching: Trainable kernel calibration metrics},
  author={Marx, Charlie and Zalouk, Sofian and Ermon, Stefano},
  journal={Advances in Neural Information Processing Systems},
  volume={36},
  pages={25910--25928},
  year={2023}
}

@inproceedings{
widmann2021calibration,
title={Calibration tests beyond classification},
author={D. Widmann and F. Lindsten and D. Zachariah},
booktitle={International Conference on Learning Representations},
year={2021},
}

@article{Dawid1982,
author = {A. P. Dawid},
title = {The Well-Calibrated {B}ayesian},
journal = {Journal of the American Statistical Association},
volume = {77},
number = {379},
pages = {605--610},
year = {1982},
publisher = {Taylor & Francis},
doi = {10.1080/01621459.1982.10477856},
}

@InProceedings{kearns18a,
  title = 	 {Preventing Fairness Gerrymandering: Auditing and Learning for Subgroup Fairness},
  author =       {Kearns, Michael and Neel, Seth and Roth, Aaron and Wu, Zhiwei Steven},
  booktitle = 	 {Proceedings of the 35th International Conference on Machine Learning},
  pages = 	 {2564--2572},
  year = 	 {2018},
  editor = 	 {Dy, Jennifer and Krause, Andreas},
  volume = 	 {80},
  series = 	 {Proceedings of Machine Learning Research},
  month = 	 {10--15 Jul},
  publisher =    {PMLR},
  url = 	 {https://proceedings.mlr.press/v80/kearns18a.html},
}

@inproceedings{kim2019multiaccuracy,
  title={Multiaccuracy: Black-box post-processing for fairness in classification},
  author={Kim, Michael P and Ghorbani, Amirata and Zou, James},
  booktitle={Proceedings of the 2019 AAAI/ACM Conference on AI, Ethics, and Society},
  pages={247--254},
  year={2019}
}

@article{hansen2024multicalibration,
  title={When is multicalibration post-processing necessary?},
  author={Hansen, Dutch and Devic, Siddartha and Nakkiran, Preetum and Sharan, Vatsal},
  journal={Advances in Neural Information Processing Systems},
  volume={37},
  pages={38383--38455},
  year={2024}
}

@inproceedings{
futami2026smooth,
title={Smooth Calibration Error: Uniform Convergence and Functional Gradient Analysis},
author={Futoshi Futami and Atsushi Nitanda},
booktitle={The Fourteenth International Conference on Learning Representations},
year={2026},
url={https://openreview.net/forum?id=qXVmmj8J0T}
}

@article{Bat2024,
  author  = {Bat-Sheva Einbinder and Shai Feldman and Stephen Bates and Anastasios N. Angelopoulos and Asaf Gendler and Yaniv Romano},
  title   = {Label Noise Robustness of Conformal Prediction},
  journal = {Journal of Machine Learning Research},
  year    = {2024},
  volume  = {25},
  number  = {328},
  pages   = {1--66},
  url     = {http://jmlr.org/papers/v25/23-1549.html}
}

@article{maxime2024,
  author  = {Maxime Cauchois and Suyash Gupta and Alnur Ali and John C. Duchi},
  title   = {Predictive Inference with Weak Supervision},
  journal = {Journal of Machine Learning Research},
  year    = {2024},
  volume  = {25},
  number  = {118},
  pages   = {1--45},
  url     = {http://jmlr.org/papers/v25/23-0253.html}
}

@inproceedings{gopalan2022low,
  title={Low-degree multicalibration},
  author={Gopalan, Parikshit and Kim, Michael P and Singhal, Mihir A and Zhao, Shengjia},
  booktitle={Conference on Learning Theory},
  pages={3193--3234},
  year={2022},
  organization={PMLR}
}

@article{gupta2020,
  title={Distribution-free binary classification: prediction sets, confidence intervals and calibration},
  author={Gupta, C. and Podkopaev, A. and Ramdas, A.},
  journal={Advances in Neural Information Processing Systems},
  volume={33},
  pages={3711--3723},
  year={2020}
}

@inproceedings{blasiok2023unify,
author = {B\l{}asiok, J. and Gopalan, P. and Hu, L. and Nakkiran, P.},
title = {A Unifying Theory of Distance from Calibration},
year = {2023},
booktitle = {Proceedings of the 55th Annual ACM Symposium on Theory of Computing},
pages = {1727–1740},
numpages = {14}
}

@article{foster1998asymptotic,
  title={Asymptotic calibration},
  author={Foster, Dean P and Vohra, Rakesh V},
  journal={Biometrika},
  volume={85},
  number={2},
  pages={379--390},
  year={1998},
  publisher={Oxford University Press}
}
